\documentclass[11pt]{article}
% \PassOptionsToPackage{numbers, compress}{natbib}
% \usepackage{neurips_2020}
\usepackage{fullpage}
\usepackage[numbers]{natbib}
\usepackage{algorithm}
\usepackage[noend]{algorithmic}
\usepackage{amsmath,amsthm,amsfonts,amssymb}
\usepackage{amsmath}
\usepackage{hyperref}
\usepackage{color}
\usepackage{xcolor}
\usepackage{mathrsfs}
\usepackage{enumitem}
\usepackage{bm}
\usepackage{multirow}
\usepackage{booktabs}
\usepackage{makecell}
\usepackage{graphicx}
\usepackage{subfigure}
\usepackage{caption}
\usepackage{thmtools}
\usepackage{thm-restate}
\usepackage{setspace}
\usepackage{makecell}
\definecolor{light-gray}{gray}{0.85}
\usepackage{colortbl}
\usepackage{hhline}

\newcommand{\paren}[1]{{\left( #1 \right)}}
\newcommand{\brac}[1]{{\left[ #1 \right]}}
\newcommand{\set}[1]{{\left\{ #1 \right\}}}

% Operator
\newcommand{\defeq}{\mathrel{\mathop:}=}

\newcommand{\argmin}{\mathop{\rm argmin}}

\newcommand{\trans}{^{\top}}
\newcommand{\poly}{\mathrm{poly}}

\newcommand{\E}{\mathbb{E}}
\newcommand{\D}{\mathbb{D}}
\renewcommand{\P}{\mathbb{P}}

\newcommand{\la}{\langle}
\newcommand{\ra}{\rangle}

% Big O
\newcommand{\cO}{\mathcal{O}}
\newcommand{\tlO}{\mathcal{\tilde{O}}}

% Set

\newcommand{\R}{\mathbb{R}}

% Matrix and Vector

% \newcommand{\D}{\mat{D}}

% Special Character

\newcommand{\cF}{\mathcal{F}}

\newcommand{\cD}{\mathcal{D}}

\newcommand{\cS}{\mathcal{S}}
\newcommand{\cA}{\mathcal{A}}
\newcommand{\cB}{\mathcal{B}}

% Proof environments
\newenvironment{proof-sketch}{\noindent{\bf Proof Sketch}
  \hspace*{1em}}{\qed\bigskip\\}
\newenvironment{proof-idea}{\noindent{\bf Proof Idea}
  \hspace*{1em}}{\qed\bigskip\\}
\newenvironment{proof-of-lemma}[1][{}]{\noindent{\bf Proof of Lemma {#1}}
  \hspace*{1em}}{\qed\bigskip\\}
\newenvironment{proof-of-proposition}[1][{}]{\noindent{\bf
    Proof of Proposition {#1}}
  \hspace*{1em}}{\qed\bigskip\\}
\newenvironment{proof-of-theorem}[1][{}]{\noindent{\bf Proof of Theorem {#1}}
  \hspace*{1em}}{\qed\bigskip\\}
\newenvironment{inner-proof}{\noindent{\bf Proof}\hspace{1em}}{
  $\bigtriangledown$\medskip\\}
\newenvironment{proof-attempt}{\noindent{\bf Proof Attempt}
  \hspace*{1em}}{\qed\bigskip\\}

\usepackage{times}

\newtheorem{theorem}{Theorem}
\newtheorem{lemma}[theorem]{Lemma}
\newtheorem{corollary}[theorem]{Corollary}

\newtheorem{proposition}[theorem]{Proposition}
\newtheorem{conjecture}[theorem]{Conjecture}
\theoremstyle{definition}
\newtheorem{definition}[theorem]{Definition}

\newcommand{\setto}{\leftarrow}
\newcommand{\up}[1]{\overline{#1}}
\newcommand{\low}[1]{\underline{#1}}
\newcommand{\Reg}{{\rm Regret}}
\newcommand{\MG}{{\rm MG}}

\newcommand{\nash}{{\star}}

% \hypersetup{
%     colorlinks,
%     linkcolor={blue!50!black},
%     citecolor={blue!50!black},
% }
% \colorlet{linkequation}{blue}

\title{\bf Near-Optimal Reinforcement Learning with Self-Play}

\author{%
  Yu Bai \\
  Salesforce Research\\
  \texttt{yu.bai@salesforce.com} \\
  % examples of more authors
   \and
   Chi Jin \\
   Princeton University \\
   \texttt{chij@princeton.edu} \\
   \and
   Tiancheng Yu \\
  MIT \\
   \texttt{yutc@mit.edu} \\
  % \And
  % Coauthor \\
  % Affiliation \\
  % Address \\
  % \texttt{email} \\
  % \And
  % Coauthor \\
  % Affiliation \\
  % Address \\
  % \texttt{email} \\
}

\begin{document}
\maketitle

%!TEX root = main.tex

\begin{abstract}
This paper considers the problem of designing optimal algorithms for reinforcement learning in two-player zero-sum games. We focus on self-play algorithms which learn the optimal policy by playing against itself without any direct supervision.
In a tabular episodic Markov game with $S$ states, $A$ max-player actions and $B$ min-player actions, the best existing algorithm for finding an approximate Nash equilibrium requires $\tlO(S^2AB)$ steps of game playing, when only highlighting the dependency on $(S,A,B)$. In contrast, the best existing lower bound scales as $\Omega(S(A+B))$ and has a significant gap from the upper bound. This paper closes this gap for the first time: we propose an optimistic variant of the \emph{Nash Q-learning} algorithm with sample complexity $\tlO(SAB)$, and a new \emph{Nash V-learning} algorithm with sample complexity $\tlO(S(A+B))$. The latter result matches the information-theoretic lower bound in all problem-dependent parameters except for a polynomial factor of the length of each episode. In addition, we present a computational hardness result for learning the best responses against a fixed opponent in Markov games---a learning objective different from finding the Nash equilibrium.

% Towards understanding learning objectives in Markov games other than finding the Nash equilibrium, we present a computational hardness result for learning the best responses against a fixed opponent in Markov games.
% This also implies the computational hardness for achieving sublinear regret when playing against adversarial opponents.

% Finally, for alternative learning objectives in Markov games,
% , which rules out the possibility of solving Markov games efficiently through running no-regret algorithms for each player separately.
\end{abstract}
%!TEX root = main.tex

\section{Introduction}
\label{sec:introduction}
% This paper studies \emph{competitive reinforcement learning}
% (competitive RL), that is, reinforcement learning with two or more
% agents taking actions simultaneously, but each maximizing
% their own reward.
% Competitive RL is a major branch of the more general
% setting of multi-agent reinforcement learning (MARL), with the
% specification that the agents have conflicting rewards (so that they
% essentially compete with each other) yet can be trained in a
% centralized fashion (i.e. each agent has access to the other agents'
% policies)~ ~\cite{crandall2005learning}.

% There are substantial recent progresses in competitive RL, in
% particular in solving hard multi-player games such as
% GO~\citep{silver2017mastering},
% Starcraft~\citep{vinyals2019grandmaster}, and Dota
% 2~\citep{openaidota}.  A key highlight in their approaches is the
% successful use of \emph{self-play} for achieving super-human
% performance in absence of human knowledge or expert opponents. These
% self-play algorithms are able to learn a good policy for all players
% from scratch through repeatedly playing the current policies against
% each other and performing policy updates using these self-played game
% trajectories. The empirical success of self-play has challenged the
% conventional wisdom that expert opponents are necessary for achieving
% good performance, and calls for a better theoretical understanding.

% In practice, many applications involve multi-agent RL.

A wide range of modern artificial intelligence challenges can be cast as a multi-agent reinforcement learning (multi-agent RL) problem, in which more than one agent performs sequential decision making in an interactive environment. Multi-agent RL has achieved significant recent success on traditionally challenging tasks, for example in the game of GO \citep{silver2016mastering,silver2017mastering}, Poker \citep{brown2019superhuman}, real-time strategy games \citep{vinyals2019grandmaster, openaidota}, decentralized controls or multiagent robotics systems \citep{brambilla2013swarm}, autonomous driving \citep{shalev2016safe}, as well as complex social scenarios such as hide-and-seek~\citep{baker2020emergent}.
% in strategic game playing such as GO~\citep{silver2016mastering,silver2017mastering}, Starcraft~ and Dota 2~\citep{openaidota}, or in complex social scenarios such as hide-and-seek~\citep{baker2020emergent}. 
% Empirical evidence demonstrates that the learned agents have made substantial progress on these tasks and can learn policies that can outperform expert human players.
In many scenarios, the learning agents even outperform the best human experts .
% In theory, the goal in these problems is to find \emph{Nash equilibrium} policies---a set of policies for all agents 
% where each agent already plays her own best given the policies of the other agents. 
% A lot of these progresses are achieved by using \emph{self-play} algorithms, i.e. algorithms that train agents via playing against themselves without expert knowledge or direct supervision.

Despite the great empirical success, a major bottleneck for many existing RL algorithms is that they require a tremendous number of samples. For example, the biggest AlphaGo Zero model is trained on tens of millions of games and took more than a month to train~\citep{silver2017mastering}. While requiring such amount of samples may be acceptable in simulatable environments such as GO, it is not so in other sample-expensive real world settings such as robotics and autonomous driving. It is thus important for us to understand the \emph{sample complexity} in RL---how can we design algorithms that find a near optimal policy with a small number of samples, and what is the fundamental limit, i.e. the minimum number of samples required for any algorithm to find a good policy.
 % as few samples as possible, and what are the fundamental limits (i.e. minimum number of samples required) for any algorithm to find a good policy.

% Despite multi-agent RL has achieved great empirical success, the fundamental question of its \emph{optimal sample complexity}---how many episodes are required for finding an approximate Nash equilibrium---remains open. 

Theoretical understandings on the sample complexity for multi-agent RL are rather limited, especially when compared with single-agent settings. The standard model for a single-agent setting is an episodic Markov Decision Process (MDP) with $S$ states, and $A$ actions, and $H$ steps per episode. The best known algorithm can find an $\epsilon$ near-optimal policy in $\tilde{\Theta}({\rm poly}(H)SA/\epsilon^2)$ episodes, which matches the lower bound up to a single $H$ factor~\citep{azar2017minimax,dann2019policy}. In contrast, in multi-agent settings, the optimal sample complexity remains open even in the basic setting of two-player tabular Markov games \cite{shapley1953stochastic}, where the agents are required to find the solutions of the games---the Nash equilibria.
The best known algorithm, {\tt VI-ULCB}, finds an $\epsilon$-approximate Nash equilibrium in $\tlO({\rm poly}(H)S^2AB/\epsilon^2)$ episodes~\citep{bai2020provable}, where $B$ is the number of actions for the other player. The information theoretical lower bound is $\Omega({\rm poly}(H)S(A+B)/\epsilon^2)$. Specifically, the number of episodes required for the algorithm scales quadratically in both $S$ and $(A,B)$, and exhibits a gap from the linear dependency in the lower bound. 
This motivates the following question:
% Such a gap motivates us to ask the following question:
\begin{center}
  {\bf Can we design algorithms with near-optimal sample complexity for learning Markov games?}
\end{center}
In this paper, we present the first line of near-optimal algorithms for two-player Markov games that match the aforementioned lower bound up to a $\poly(H)$ factor. This closes the open problem for achieving the optimal sample complexity in all $(S,A,B)$ dependency. Our algorithm learns by playing against itself without requiring any direct supervision, and is thus a \emph{self-play} algorithm.

% In this paper, we present the first line of self-play algorithms for two-player Markov games that achieve near-optimal sample complexity (up to a ${\rm poly}(H)$ factor), and close the gap between the upper and lower bound in $(S,A,B)$ dependency.

% closes this gap in the $(S,A,B)$ dependency, and answer the above question up to a ${\rm poly}(H)$ factor.

% \tiancheng{should we use "episodes" to be consistent with the main result?} 

% {\it What is the optimal sample complexity (episodes of game playing) for finding a near Nash equilibrium policy in Markov games?}
% \begin{center}

% \end{center}

% We present two algorithms that improve over the best known upper bound: an optimistic variant of \emph{Nash $Q$-learning}~\citep{hu2003nash} which achieves $\tlO({\rm poly}(H)SAB/\epsilon^2)$, and a new \emph{Nash $V$-learning} algorithm that achieves $\tlO({\rm poly}(H)S(A+B)/\epsilon^2)$. Both are \emph{self-play} algorithms which work by playing against itself in the game and do not require external supervision.
% ~\yubai{emphasize self-play elsewhere too?}

% A direct illustration of our results as well as prior work can be found in Table~\ref{table:rate}.

\subsection{Our contributions}
% The contributions of this paper are summarized as follows.
\begin{itemize}[leftmargin=*]
\item We propose an optimistic variant of \emph{Nash Q-learning} \cite{hu2003nash}, and prove that it achieves sample complexity $\tlO(H^5SAB/\epsilon^2)$ for finding an $\epsilon$-approximate Nash equilibrium in two-player Markov games (Section~\ref{sec:exp-SAB}). 
Our algorithm builds optimistic upper and lower estimates of $Q$-values, and computes the \emph{Coarse Correlated Equilibrium} (CCE) over this pair of $Q$ estimates as its execution policies for both players.

\item We design a new algorithm---\emph{Nash V-learning}---for finding approximate Nash equilibria, and show that it achieves sample complexity $\tlO(H^6S(A+B)/\epsilon^2)$ (Section~\ref{sec:exp-SA+B}). This improves upon Nash Q-learning in case $\min\set{A,B}>H$. It is also the first result that matches the minimax lower bound up to only a $\poly(H)$ factor. This algorithm builds optimistic upper and lower estimates of $V$-values, and features a novel combination of Follow-the-Regularized-Leader (FTRL) and standard Q-learning algorithm to determine its execution policies.

\item Apart from finding Nash equilibria, we prove that learning the best responses of fixed opponents in Markov games is as hard as learning parity with noise---a notoriously difficult problem that is believed to be computationally hard (Section~\ref{sec:lower}). As a corollary, this hardness result directly implies that achieving sublinear regret against \emph{adversarial opponents} in Markov games is also computationally hard, a result that first appeared in~\citep{radanovic2019learning}. This in turn rules out the possibility of designing efficient algorithms for finding Nash equilibria by running no-regret algorithms for each player separately.

% We present a computational lower bound showing that achieving sublinear regret against \emph{adversarial opponents} is at least as hard as an agnostic parity learning problem, which is believed to be computationally hard (Section~\ref{sec:lower}). This result rules out the possibility of efficiently solving Markov games through playing no-regret algorithms against each other.
  % As it is computationally challenging to achieve such no-regret without additional knowledge about the opponent.
\end{itemize}
In addition to above contributions, this paper also features a novel approach of extracting \emph{certified policies}---from the estimates produced by reinforcement learning algorithms such as Nash Q-learning and Nash V-learning---that are \emph{certified} to have similar performance as Nash equilibrium policies, even when facing against their best response (see Section \ref{sec:exp-SAB} for more details). We believe this technique 
% of certifying policies against value functions 
could be of broader interest to the community.

\subsection{Related Work}

\paragraph{Markov games}
Markov games (or stochastic games) are proposed in the early 1950s~\cite{shapley1953stochastic}. They are widely used to model multi-agent RL. Learning the Nash equilibria of Markov games has been studied in classical work~\citep{littman1994markov, littman2001friend, hu2003nash,hansen2013strategy}, where the transition matrix and reward are assumed to be known, or in the asymptotic setting where the number of data goes to infinity. These results do not directly apply to the non-asymptotic setting where the transition and reward are unknown and only a limited amount of data are available for estimating them.

A recent line of work tackles self-play algorithms for Markov games in the non-asymptotic setting with strong reachability assumptions.
% \citep{wei2017online,jia2019feature, sidford2019solving,bai2020provable,xie2020learning}. 
Specifically, \citet{wei2017online} assumes no matter what strategy one agent sticks to, the other agent can always reach all states by playing a certain policy, and \citet{jia2019feature, sidford2019solving} assume access to simulators (or generative models) that enable the agent to directly sample transition and reward information for any state-action pair. These settings ensure that all states can be reached directly, so no sophisticated exploration is not required.
% the entire model can be estimated regardless of the policy executed, 

Very recently, \cite{bai2020provable,xie2020learning} study learning Markov games without these reachability assumptions, where exploration becomes essential. However, both results suffer from highly suboptimal sample complexity. We compare them with our results in Table~\ref{table:rate}. The results of \cite{xie2020learning} also applies to the linear function approximation setting.
% and in Table~\ref{table:rate} we translate their result to tabular setting in the worst case. 
We remark that the R-max algorithm~\citep{brafman2002r} does provide provable guarantees for learning Markov game, even in the setting of playing against the adversarial opponent, but using a definition of regret that is weaker than the standard regret. Their result does not imply any sample complexity result for finding Nash equilibrium policies.

\begin{table*}[!t]
    \renewcommand{\arraystretch}{1.6} 
    \centering
    \caption{\label{table:rate} Sample complexity (the required number of episodes) for algorithms to find $\epsilon$-approximate Nash equlibrium policies in zero-sum Markov games. }
    % The runtime of VI-ULCB is PPAD-complete while the others are polynomial-time.}
    \begin{tabular}{|c|c|c|}
      \hline
      \textbf{Algorithm} & \textbf{Sample Complexity} & \textbf{Runtime} \\ \hline
      VI-ULCB \cite{bai2020provable} & $\tlO(H^4 S^2 AB/\epsilon^2)$  & PPAD-complete\\\hline
      VI-explore \cite{bai2020provable}  & $\tlO(H^5S^2AB/\epsilon^2)$ &  \multirow{4}{*}{Polynomial} \\ \cline{1-2}
      OMVI-SM  \cite{xie2020learning}  & $\tlO(H^4S^3A^3B^3/\epsilon^2)$ & \\ \hhline{|--~|}
      \cellcolor{light-gray} Optimistic Nash Q-learning   & $\tlO(H^5SAB/\epsilon^2)$ &\\ \hhline{|--~|}
      \cellcolor{light-gray} Optimistic Nash V-learning  & $\quad\tlO(H^6S(A+B)/\epsilon^2)\quad$& \\ \hline
      Lower Bound \cite{jin2018q,bai2020provable}  & $\Omega(H^3S(A+B)/\epsilon^2)$ & - \\ \hline
    \end{tabular}
\end{table*}

    \paragraph{Adversarial MDP}
    Another line of related work focuses on provably efficient algorithms for \emph{adversarial MDPs}. Most work in this line considers the setting with adversarial rewards \citep{zimin2013online, rosenberg2019online, jin2019learning}, because adversarial MDP with changing dynamics is computationally hard even under full-information feedback~\cite{yadkori2013online}. These results do not directly imply provable self-play algorithms in our setting, because the opponent in Markov games can affect both the reward and the transition.

    \paragraph{Single-agent RL}
There is a rich literature on reinforcement learning in MDPs \citep[see e.g.][]{jaksch2010near, osband2014generalization, azar2017minimax, dann2017unifying, strehl2006pac, jin2018q}. MDP is a special case of Markov games, where % there is 
only a single agent interacts with a stochastic environment. For the tabular episodic setting with nonstationary dynamics and no simulators, the best sample complexity achieved by existing model-based and model-free algorithms are $\tilde{\mathcal{O}}(H^3SA/\epsilon^2)$ \cite{azar2017minimax} and $\tilde{\mathcal{O}}(H^4SA/\epsilon^2)$ \cite{jin2018q}, respectively, where $S$ is the number of states, $A$ is the number of actions, $H$ is the length of each episode. Both of them (nearly) match the lower bound $\Omega(H^3SA/\epsilon^2)$~\cite{jaksch2010near, osband2016lower, jin2018q}.
%!TEX root = main.tex
\section{Preliminaries} \label{sec:prelim}

We consider zero-sum Markov Games (MG)~\citep{shapley1953stochastic, littman1994markov}, which are also known as stochastic games in the literature. Zero-sum Markov games are generalization of standard Markov Decision Processes (MDP) into the two-player setting, in which the \emph{max-player} seeks to maximize the total return and the \emph{min-player} seeks to minimize the total return. 

Formally, we denote a tabular episodic Markov game as $\MG(H, \cS, \cA, \cB, \P, r)$, where $H$ is the number of steps in each episode, $\cS$ is the set of states with $|\cS| \le S$, $(\cA, \cB)$ are the sets of actions of the max-player and the min-player respectively, $\P = \{\P_h\}_{h\in[H]}$ is a collection of transition matrices, so that $\P_h ( \cdot | s, a, b) $ gives the distribution over states if action pair $(a, b)$ is taken for state $s$ at step $h$, and $r = \{r_h\}_{h\in[H]}$ is a collection of reward functions, and $r_h \colon \cS \times \cA \times \cB \to [0,1]$ is the deterministic reward function at step $h$.  \footnote{We assume the rewards in $[0,1]$ for normalization. Our results directly generalize to randomized reward functions, since learning the transition is more difficult than learning the reward.}  

In each episode of this MG, we start with a \emph{fixed initial state} $s_1$. Then, at each step $h \in [H]$, both
players observe state $s_h \in \cS$, and the max-player picks action
$a_h \in \cA$ while the min-player picks action $b_h \in \cB$ simultaneously. Both players observe the actions of the opponents, receive reward
$r_h(s_h, a_h, b_h)$, and then the environment transitions to the next state
$s_{h+1}\sim\P_h(\cdot | s_h, a_h, b_h)$. The episode ends when
$s_{H+1}$ is reached.

% \paragraph{Policy and value function}
% A \emph{policy} $\mu$ of the max-player is a set of $H$ maps $\mu \defeq \big\{ \mu_h: \R \times (\cS \times \cA \times \cB \times \R)^{h-1}\times \cS \rightarrow \Delta_{\cA} \big\}_{h\in [H]}$, from a random number $z$ and a history of length $h$---say $(s_1, a_1, b_1, r_1, \cdots, s_h)$, to a distribution over actions in $\cA$. Here $\Delta_{\cA}$ is the probability simplex over action set $\cA$. Often times, it is sufficient to consider Markov policies, which only depend on the last observed state instead of the entire history and any shared randomness. In symbol, a \emph{Markov policy} $\mu$ of the max-player is a set of $H$ maps $\mu \defeq \big\{ \mu_h: \cS \rightarrow \Delta_{\cA} \big\}_{h\in [H]}$.  Similarly, we can also define the policy and the Markov policy $\nu$ of the min-player,  by replacing the action set $\cA$ in the definitions for the max-player with action set $\cB$.

\paragraph{Markov policy, value function}
A \emph{Markov} policy $\mu$ of the max-player is a collection of $H$ functions
$\{ \mu_h: \cS \rightarrow \Delta_{\cA} \}_{h\in [H]}$, which maps from a state to a distribution of actions. Here
$\Delta_{\cA}$ is the probability simplex over action set
$\cA$. Similarly, a policy $\nu$ of the min-player is a collection of
$H$ functions
$\{ \nu_h: \cS \rightarrow \Delta_{\cB} \}_{h\in [H]}$. We use
the notation $\mu_h(a|s)$ and $\nu_h(b|s)$ to present the probability
of taking action $a$ or $b$ for state $s$ at step $h$ under Markov policy $\mu$ or $\nu$
respectively.  

We use $V^{\mu, \nu}_{h} \colon \cS \to \mathbb{R}$ to denote the value
function at step $h$ under policy $\mu$ and $\nu$, so that
$V^{\mu, \nu}_{h}(s)$ gives the expected cumulative rewards
received under policy $\mu$ and $\nu$, starting from $s$ at step $h$:
\begin{equation} \label{eq:V_value}
\textstyle V^{\mu, \nu}_{h}(s) \defeq \E_{\mu, \nu}\left[\left.\sum_{h' =
        h}^H r_{h'}(s_{h'}, a_{h'}, b_{h'}) \right| s_h = s\right].
\end{equation}
We also define $Q^{\mu, \nu}_h:\cS \times \cA \times \cB \to \mathbb{R}$ to denote $Q$-value function at step $h$ so that
$Q^{\mu, \nu}_{h}(s, a, b)$ gives the cumulative rewards received under policy $\mu$ and $\nu$, starting from
$(s, a, b)$ at step $h$:
\begin{equation} \label{eq:Q_value}
\textstyle Q^{\mu, \nu}_{h}(s, a, b) \defeq  \E_{\mu,
    \nu}\left[\left.\sum_{h' = h}^H r_{h'}(s_{h'},  a_{h'}, b_{h'})
    \right| s_h = s, a_h = a, b_h = b\right].
\end{equation}
For simplicity, we use notation of operator $\P_h$ so that
$[\P_h V](s, a, b) \defeq \E_{s' \sim \P_h(\cdot|s, a,
  b)}V(s')$ for any value function $V$. We also use notation $[\D_\pi Q](s) \defeq \E_{(a, b) \sim \pi(\cdot, \cdot|s)} Q(s, a, b)$ for any action-value function $Q$. By definition of value functions, we have the Bellman
equation
\begin{align*}
  Q^{\mu, \nu}_{h}(s, a, b) =
  (r_h + \P_h V^{\mu, \nu}_{h+1})(s, a, b), \qquad   V^{\mu, \nu}_{h}(s)
  =  (\D_{\mu_h\times\nu_h} Q^{\mu, \nu}_h)(s) 
\end{align*}
for all $(s, a, b, h) \in \cS \times \cA \times \cB \times [H]$. We define $V^{\mu, \nu}_{H+1}(s) = 0$ for all $s \in \cS_{H+1}$.

% The value for a pair of policy $(\mu, \nu)$ is defined as the expected cumulative rewards starting from the fixed initial state $s_1$. That is,
% $V_1^{\mu, \nu}(s_1) = \E_{\mu, \nu} []$

% \chijin{This is the definition of Markov policy, define general policy (can be function of all history and random coin here) here.}
% A policy of the max-player is defined as $\mu \defeq \big\{ \mu_h: \cS \rightarrow \Delta_{\cA} \big\}_{h\in [H]}$, and a policy  of the min-player is defined as $\nu \defeq \big\{ \nu_h: \cS \rightarrow \Delta_{\cB} \big\}_{h\in [H]}$. Value and action-value functions of policy $\mu, \nu$ are defined as:
% \begin{align*}
%    \\
% . 
% \end{align*}

\paragraph{Best response and Nash equilibrium}
For any Markov policy of the max-player $\mu$, there exists a \emph{best response} of the min-player, which is a Markov policy
$\nu^\dagger(\mu)$ satisfying $V_h^{\mu, \nu^\dagger(\mu)}(s) = \inf_{\nu} V_h^{\mu, \nu}(s)$ for
any $(s, h) \in \cS \times [H]$. 
Here the infimum is taken over all possible policies which are not necessarily Markovian (we will define later in this section). 
We define $V_h^{\mu, \dagger} \defeq V_h^{\mu, \nu^\dagger(\mu)}$. By symmetry, we
can also define $\mu^\dagger(\nu)$ and $V_h^{\dagger, \nu}$.  
It is further known (cf.~\citep{filar2012competitive}) that there exist Markov policies $\mu^\star$, $\nu^\star$
that are optimal against the best responses of the opponents, in the sense that
\begin{equation*}
 \textstyle V^{\mu^\star, \dagger}_h(s) = \sup_{\mu}
      V^{\mu, \dagger}_h(s), 
      \qquad  V^{\dagger, \nu^\star}_h(s) = \inf_{\nu}
      V^{\dagger, \nu}_h(s),
  \qquad \textrm{for all}~(s, h).
\end{equation*}
We call these optimal strategies $(\mu^\star,\nu^\star)$ the Nash equilibrium of the Markov game, which satisfies 
the following minimax equation: 
\footnote{The minimax theorem here is different from the one for matrix games, i.e. $\max_\phi\min_\psi \phi\trans A\psi = \min_\psi\max_\phi \phi\trans A\psi$ for any matrix $A$, since here $V^{\mu, \nu}_h(s)$ is in general not bilinear in $\mu, \nu$.}
\begin{equation*}
\textstyle \sup_{\mu} \inf_{\nu} V^{\mu, \nu}_h(s) = V^{\mu^\star, \nu^\star}_h(s) = \inf_{\nu} \sup_{\mu} V^{\mu, \nu}_h(s).
\end{equation*}
Intuitively, a Nash equilibrium gives a solution in which no player has anything to gain by changing only her own policy. 
% It is also known that, for any $(s, h)$, the minimax
% theorem holds: 
% \begin{equation*}
% \sup_{\mu} \inf_{\nu} V^{\mu, \nu}_h(s) = V^{\mu^\star, \nu^\star}_h(s) = \inf_{\nu} \sup_{\mu} V^{\mu, \nu}_h(s).
% \end{equation*}
% Therefore, the optimal strategies $(\mu^\star,\nu^\star)$ are also
% the Nash Equilibrium for the Markov game. 
We further abbreviate the values of Nash equilibrium $V_h^{\mu^\star, \nu^\star}$ and $Q_h^{\mu^\star, \nu^\star}$ as $V_h^{\nash}$ and $Q_h^{\nash}$.
We refer readers to Appendix \ref{app:bellman} for Bellman optimality equations for values of best responses or Nash equilibria.

\paragraph{General (non-Markovian) policy}
In certain situations, it is beneficial to consider general, history-dependent policies that are not necessarily Markovian. A \emph{(general) policy} $\mu$ of the max-player is a set of $H$ maps $\mu \defeq \big\{ \mu_h: \R \times (\cS \times \cA \times \cB \times \R)^{h-1}\times \cS \rightarrow \Delta_{\cA} \big\}_{h\in [H]}$, from a random number $z\in\R$ and a history of length $h$---say $(s_1, a_1, b_1, r_1, \cdots, s_h)$, to a distribution over actions in $\cA$. By symmetry, we can also define the (general) policy $\nu$ of the min-player, by replacing the action set $\cA$ in the definition by set $\cB$. The random number $z$ is sampled from some underlying distribution $\mathcal{D}$, but may be shared among all steps $h \in [H]$. 
% This often appears in a mixture of Markov policies $\{\mu_k\}_{k\in[K]}$, where a random number $k \in [K]$ is sampled in the beginning, and then the Markov policy $\mu_k$ is executed. The mixture policy is no longer Markovian.

For a pair of general policy $(\mu, \nu)$, we can still use the same definitions \eqref{eq:V_value} to define their value $V_1^{\mu, \nu}(s_1)$  at step $1$. We can also define the best response $\nu^\dagger(\mu)$ of a general policy $\mu$ as the minimizing policy so that $V_1^{\mu, \dagger}(s_1) \equiv V_1^{\mu, \nu^\dagger(\mu)}(s_1) = \inf_{\nu} V_h^{\mu, \nu}(s_1)$ at step 1. We remark that the best response of a general policy is not necessarily Markovian.

\paragraph{Learning Objective}

There are two possible learning objectives in the setting of Markov games. The first one is to find the best response for a fixed opponent. Without loss of generality, we consider the case where the learning agent is the max-player, and the min-player is the opponent.
\begin{definition} [$\epsilon$-approximate best response] \label{def:epsilon_best_response} For an opponent with an fixed unknown general policy $\nu$, a general policy $\hat{\mu}$ is the \textbf{$\epsilon$-approximate best response} if $V^{\dagger, \nu}_1(s_1) - V^{\hat{\mu}, \nu}_1(s_1) \le \epsilon$.
\end{definition}

The second goal is to find a Nash equilibrium of the Markov games. We measure the suboptimality of any pair of general policies $(\hat{\mu}, \hat{\nu})$ using the gap between their performance and the performance of the optimal strategy (i.e. Nash equilibrium) when playing against the best responses respectively:
\begin{equation*}
 \textstyle    \textstyle V^{\dagger, \hat{\nu}}_1(s_1) - V^{\hat{\mu}, \dagger}_1(s_1)  = \brac{V^{\dagger, \hat{\nu}}_1(s_1) - V^{\nash}_1(s_1)} +  \brac{V^{\nash}_1(s_1) -   V^{\hat{\mu},\dagger}_1(s_1)}
\end{equation*}% This motivates the following definition.

\begin{definition}[$\epsilon$-approximate Nash equilibrium] \label{def:epsilon_Nash} A pair of general policies $(\hat{\mu},\hat{\nu})$ is an \textbf{$\epsilon$-approximate Nash equilibrium}, if $V^{\dagger, \hat{\nu}}_1(s_1) - V^{\hat{\mu}, \dagger}_1(s_1) \le \epsilon$.
\end{definition}

Loosely speaking, Nash equilibria can be viewed as ``the best responses to the best responses''. In most applications, they are the ultimate solutions to the games. In Section \ref{sec:exp-SAB} and \ref{sec:exp-SA+B}, we present sharp guarantees for learning an approximate Nash equilibrium with near-optimal sample complexity. However, rather surprisingly, learning a best response in the worst case is more challenging than learning the Nash equilibrium. In Section \ref{sec:lower}, we present a computational hardness result for learning an approximate best response.

% Our goal is to design algorithms for Markov games that can find an $\epsilon$-approximate Nash equilibrium using a number of episodes that is small in its dependency on $S,A,B,H$ as well as $1/\epsilon$.

% The objective of reinforcement learning in a Markov game is to find an $\epsilon$-approximate Nash equilibrium using a number of , which is small in its dependency on $S, A, B, H$ and $1/\epsilon$.

% Therefore, we say a pair of general policy $(\hat{\mu},\hat{\nu})$ to be \textbf{$\epsilon$-approximate Nash equilibrium}, if $V^{\dagger, \hat{\nu}}_1(s_1) - V^{\hat{\mu}, \dagger}_1(s_1) \le \epsilon$.

% We make this formal in the following definition of $\epsilon$-Nash and the regret.
% \begin{definition}[$\epsilon$-Nash]
% A pair of policy $(\hat{\mu},\hat{\nu})$ is \textbf{$\epsilon$-Nash equilibrium} if $V^{\dagger, \hat{\nu}}_1(s_1) - V^{\hat{\mu}, \dagger}_1(s_1) \le \epsilon$.
% \end{definition}
% \begin{definition}[Regret]
%   After $K$ episodes, the regret is defined as
%   \begin{equation*}
%     \Reg(K) = \sup_\mu \sum_{k=1}^K V^{\mu, \nu^k}_{1}
%       (s_1) - \inf_\nu \sum_{k=1}^K V^{\mu^k, \nu}_{1} (s_1).
%     % \enspace
%   \end{equation*}
%   where $(\mu^k$, $\nu^k)$ denote the policies deployed by the algorithm
%   in the $k$-th episode.
% \end{definition}

% We note this definition of regret is the same as the \textbf{weak regret} in \cite{bai2020provable}.

% We note that as a unique feature of self-play algorithms, the learner is playing against herself, and thus chooses strategies for both max-player and min-player at each episode.

%!TEX root = main.tex

\section{Optimistic Nash Q-learning}
\label{sec:exp-SAB}

In this section, we present our first algorithm \emph{Optimistic Nash Q-learning} and its corresponding theoretical guarantees.

\begin{algorithm}[t]
% \setstretch{1.2}
   \caption{Optimistic Nash Q-learning}
   \label{algorithm:Nash_Q}
   \begin{algorithmic}[1]
      \STATE {\bfseries Initialize:} for any $(s, a, b, h)$,
      $\up{Q}_{h}(s,a, b)\setto H$, $\low{Q}_{h}(s,a, b)\setto 0$,
      $N_{h}(s,a, b)\setto 0$,\\\qquad\qquad~~$\pi_h(a, b|s) \leftarrow 1/(AB)$.
      % \STATE \qquad\qquad\qquad\qquad\qquad\qquad 
      \FOR{episode $k=1,\dots,K$}
      \STATE receive $s_1$.
      \FOR{step $h=1,\dots, H$}
      \STATE take action $(a_h, b_h) \sim  \pi_h(\cdot, \cdot| s_h)$.
      \STATE observe reward $r_h(s_h, a_h, b_h)$ and next state
      $s_{h+1}$.
      \STATE $t=N_{h}(s_h, a_h, b_h)\setto N_{h}(s_h, a_h, b_h) + 1$.
      \STATE $\up{Q}_h(s_h, a_h, b_h) \setto (1-\alpha_t)\up{Q}_h(s_h, a_h, b_h)+ \alpha_t(r_h(s_h, a_h, b_h)+\up{V}_{h+1}(s_{h+1})+\beta_t)$ \label{line:Q_up_update}
      \STATE $\low{Q}_h(s_h, a_h, b_h) \setto (1-\alpha_t)\low{Q}_h(s_h, a_h, b_h)+ \alpha_t(r_h(s_h, a_h, b_h)+\low{V}_{h+1}(s_{h+1})-\beta_t)$
      \label{line:Q_low_update}
      \STATE $\pi_h(\cdot, \cdot|s_h) \setto \textsc{CCE}(\up{Q}_h(s_h, \cdot, \cdot), \low{Q}_h(s_h, \cdot, \cdot))$
      \STATE $\up{V}_h(s_h) \leftarrow (\D_{\pi_h}\up{Q}_h)(s_h); \quad \low{V}_h(s_h) \leftarrow (\D_{\pi_h} \low{Q}_h)(s_h)$.
      \ENDFOR
      \ENDFOR
   \end{algorithmic}
\end{algorithm}

\paragraph{Algorithm part I: learning values}
Our algorithm \emph{Optimistic Nash Q-learning} (Algorithm \ref{algorithm:Nash_Q}) is an optimistic variant of Nash Q-learning \cite{hu2003nash}. For each step in each episode, it (a) takes actions according to the previously computed policy $\pi_h$, and observes the reward and next state, (b) performs incremental updates on Q-values, and (c) computes new greedy policies and updates $V$-values.
% the algorithm first acts according to the policy $\pi_h$ previously computed. 
% We note that $\pi_h$ is a possibly correlated joint distribution over both players actions. The execution of $\pi_h$ may require a cooperation of both players in the training phase, which can be achieved in a self-play algorithm. 
Part (a) is straightforward; we now focus on explaining part (b) and part (c).

In part (b), the incremental updates on Q-values (Line \ref{line:Q_up_update}, \ref{line:Q_low_update}) are almost the same as standard Q-learning \cite{watkins1989learning}, except here we maintain two separate Q-values---$\up{Q}_h$ and $\low{Q}_h$, as upper and lower confidence versions respectively. We add and subtract a bonus term $\beta_t$ in the corresponding updates, which depends on $t = N_{h}(s_h, a_h, b_h)$---the number of times $(s_h, a_h, b_h)$ has been visited at step $h$. We pick parameter $\alpha_t$ and $\beta_t$ as follows for 
some large constant $c$ , and log factors $\iota$:
\begin{equation}\label{eq:hyper_Nash_Q}
\textstyle \alpha_t= (H+1)/(H+t), \qquad \beta_t=c\sqrt{H^3\iota/t}
\end{equation}
% \begin{equation}\label{eq:hyper_Nash_Q}
% \alpha_t=\frac{H+1}{H+t}, \qquad \beta_t=c\sqrt{\frac{H^3\iota}{t}}
% \end{equation}
In part (c), our greedy policies are computed using
% Finally, for the step to compute greedy policies, we can no longer compute the minimax policies for the current $Q$-value, since we now have both upper confidence $\up{Q}_h$ and lower confidence $\low{Q}_h$. It turns out we can replace this step by a procedure of finding 
a \emph{Coarse Correlated Equilibrium} (CCE) subroutine, which is first introduced by \cite{xie2020learning} to solve Markov games using value iteration algorithms. For any pair of matrices $\up{Q}, \low{Q} \in [0, H]^{A\times B}$, $\textsc{CCE}(\up{Q}, \low{Q})$ returns a distribution $\pi \in \Delta_{\cA \times \cB}$ such that
% \begin{equation*}
% \E_{(a, b) \sim \pi} \up{Q}(s, a, b) \ge \max_{a^\star} \E_{(a, b) \sim \pi} \up{Q}(s, a^\star, b), \qquad
% \E_{(a, b) \sim \pi} \low{Q}(s, a, b) \le \min_{b^\star} \E_{(a, b) \sim \pi} \low{Q}(s, a, b^\star)
% \end{equation*}
\begin{align}
\E_{(a, b) \sim \pi} \up{Q}(a, b) \ge& \max_{a^\star} \E_{(a, b) \sim \pi} \up{Q}(a^\star, b) \label{eq:CCE}\\
\E_{(a, b) \sim \pi} \low{Q}(a, b) \le& \min_{b^\star} \E_{(a, b) \sim \pi} \low{Q}(a, b^\star) \nonumber
\end{align}
It can be shown that a CCE always exists, and it can be computed by linear programming in polynomial time (see Appendix \ref{app:CCE} for more details).
% We compute CCE since it always exists, and can be computed in polynomial time (see Appendix \ref{app:prop} for more details). In contrast, finding a Nash equilibrium of a general-sum game is PPAD-complete \cite{daskalakis2013complexity}.
% \chijin{Come back to this later}

Now we are ready to state an intermediate guarantee for optimistic Nash Q-learning. We assume the algorithm has played the game for $K$ episodes, and  we use $V^k, Q^k, N^k, \pi^k$ to denote values, visitation counts, and policies \emph{at the beginning} of the $k$-th episode in Algorithm \ref{algorithm:Nash_Q}.
% There are several unspecified parameters in Algorithm~\ref{algorithm:Nash_Q}. The step size is defined by $$. 

\begin{lemma}  \label{lem:regret_Nash_Q}
  For any $p\in (0, 1]$, choose hyperparameters $\alpha_t, \beta_t$ as in \eqref{eq:hyper_Nash_Q} for a large absolute constant $c$ and $\iota = \log (SABT/p)$. Then, with probability at least $1-p$, Algorithm~\ref{algorithm:Nash_Q} has following guarantees
  \begin{itemize}
  \item $\up{V}_{h}^{k}(s) \ge V^{\star}_{h}(s) \ge \low{V}_{h}^{k} (s)$ for all $(s, h, k) \in \cS \times [H]\times [K] $.
  \item $(1/K)\cdot\sum_{k=1}^K( \up{V}_{1}^{k}-\low{V}_{1}^{k} ) ( s_1 )  \le \cO\paren{\sqrt{H^5SAB\iota/K}}$.
  \end{itemize}
\end{lemma}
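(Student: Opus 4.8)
The plan is to follow the model-free $Q$-learning analysis framework, adapting it to the two-player minimax setting through the CCE subroutine. Throughout I would rely on the standard algebraic facts about the weights $\alpha_t^i \defeq \alpha_i \prod_{j=i+1}^t (1-\alpha_j)$ (and $\alpha_t^0 \defeq \prod_{j=1}^t(1-\alpha_j)$) induced by $\alpha_t=(H+1)/(H+t)$: namely $\sum_{i=1}^t\alpha_t^i=1$ and $\alpha_t^0=0$ for $t\ge 1$, $\sum_{i=1}^t \alpha_t^i/\sqrt{i}\le 2/\sqrt{t}$, $\sum_{i=1}^t(\alpha_t^i)^2\le 2H/t$, and $\sum_{t=i}^{\infty}\alpha_t^i=1+1/H$. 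The starting point is the closed-form unrolling of Line~\ref{line:Q_up_update}: if $(s,a,b)$ has been visited at step $h$ exactly $t$ times before episode $k$, at episodes $k_1<\cdots<k_t$, then $\up{Q}^k_h(s,a,b)=\alpha_t^0 H+\sum_{i=1}^t\alpha_t^i\,(r_h(s,a,b)+\up{V}^{k_i}_{h+1}(s^{k_i}_{h+1})+\beta_i)$, and analogously for $\low{Q}^k_h$ with $-\beta_i$ and initial value $0$.

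For the first bullet (optimism), I would prove $\up{Q}^k_h\ge Q^\star_h\ge \low{Q}^k_h$ pointwise, and hence the $V$-sandwich, by backward induction on $h$. Subtracting the Bellman equation $Q^\star_h=r_h+\P_h V^\star_{h+1}$ and using $\sum_i\alpha_t^i=1$ gives
\[
\up{Q}^k_h-Q^\star_h=\alpha_t^0(H-Q^\star_h)+\sum_{i=1}^t\alpha_t^i(\up{V}^{k_i}_{h+1}-V^\star_{h+1})(s^{k_i}_{h+1})+\sum_{i=1}^t\alpha_t^i\big(V^\star_{h+1}(s^{k_i}_{h+1})-[\P_h V^\star_{h+1}](s,a,b)\big)+\sum_{i=1}^t\alpha_t^i\beta_i.
\]
The first term is $\ge 0$; the second is $\ge 0$ by induction once I pass from the step-$(h{+}1)$ $Q$-sandwich to the $V$-sandwich; the third is a weighted martingale-difference sum of order $\sqrt{H^3\iota/t}$ by a Hoeffding/Azuma bound (using $\sum_i(\alpha_t^i)^2\le 2H/t$) after a union bound over $(s,a,b,h)$ and over the visit count $t\le K$; and the last term is $\ge c'\sqrt{H^3\iota/t}$, which for large $c$ dominates the deviation and forces the difference $\ge 0$. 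The game-specific ingredient is the $Q$-to-$V$ passage: given $\up{Q}^{k_i}_{h+1}\ge Q^\star_{h+1}$, the first CCE inequality \eqref{eq:CCE} yields $\up{V}_{h+1}(s)=\E_\pi\up{Q}_{h+1}(s,\cdot,\cdot)\ge\max_{a^\star}\E_{b\sim\pi_\cB}\up{Q}_{h+1}(s,a^\star,b)\ge\max_{a^\star}\E_{b\sim\pi_\cB}Q^\star_{h+1}(s,a^\star,b)\ge\min_\nu\max_\mu\E_{\mu,\nu}Q^\star_{h+1}=V^\star_{h+1}(s)$, where $\pi_\cB$ is the $\cB$-marginal of $\pi$ and the last equality is the minimax theorem for the matrix game $Q^\star_{h+1}(s,\cdot,\cdot)$; the second CCE inequality symmetrically gives $\low{V}_{h+1}\le V^\star_{h+1}$.

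For the second bullet (cumulative bound), write $\delta^k_h\defeq(\up{V}^k_h-\low{V}^k_h)(s^k_h)$. Differencing the two unrolled updates at the realized triple $(s^k_h,a^k_h,b^k_h)$, noting $(\up{V}^{k_i}_{h+1}-\low{V}^{k_i}_{h+1})(s^{k_i}_{h+1})=\delta^{k_i}_{h+1}$, and replacing $\E_{\pi^k_h}(\up{Q}-\low{Q})$ by its realized value plus a bounded martingale difference $\xi^k_h$, I get $\delta^k_h\le \xi^k_h+\alpha_n^0 H+\sum_{i=1}^n\alpha_n^i\delta^{k_i}_{h+1}+2\sum_{i=1}^n\alpha_n^i\beta_i$ with $n=N^k_h(s^k_h,a^k_h,b^k_h)$. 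Summing over $k$ and using the key identity $\sum_k\sum_i\alpha_n^i\delta^{k_i}_{h+1}\le(1+1/H)\sum_k\delta^k_{h+1}$ (each $\delta^{k'}_{h+1}$ is charged by $\sum_{t\ge n'}\alpha_t^{n'}\le 1+1/H$) yields the recursion $\sum_k\delta^k_h\le(1+1/H)\sum_k\delta^k_{h+1}+SAB\cdot H+2\sum_k\sum_i\alpha_n^i\beta_i+\sum_k\xi^k_h$. The first-visit term is $\le SAB\cdot H$ (at most one first visit per triple); the bonus term is bounded via $\sum_i\alpha_n^i\beta_i\le\cO(\sqrt{H^3\iota/n})$ and Cauchy--Schwarz $\sum_k\sqrt{1/N^k_h}\le\sqrt{SAB\cdot K}$ by $\cO(\sqrt{H^3 SAB\, K\iota})$; the martingale term is $\cO(\sqrt{H^2 K\iota})$ by Azuma. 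Unrolling from $h=1$ to $H$ with $(1+1/H)^H\le e$ gives $\sum_k\delta^k_1\le\cO(H^2 SAB)+\cO(\sqrt{H^5 SAB\,K\iota})$; dividing by $K$ and checking that the burn-in term is dominated in every regime (trivially when $K$ is small, since $\delta^k_1\le H$) produces the claimed $\cO(\sqrt{H^5 SAB\iota/K})$.

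I expect the main obstacle to be the optimism step, specifically the combination of the two CCE inequalities with the matrix-game minimax theorem to sandwich $V^\star$ between $\up{V}$ and $\low{V}$; this is the genuinely new, Markov-game-specific ingredient that replaces the simple $\max_a$ of single-agent Q-learning, and it is what justifies using a correlated (rather than product) execution policy. A secondary technical hurdle is making the weighted-martingale concentration uniform over $(s,a,b,h)$ \emph{and} over the random visitation times $k_1,\dots,k_t$, which is what forces the union-bound factor $\iota=\log(SABT/p)$; the learning-rate identities are precisely what make both the deviation term and the accumulated bonus scale like $\sqrt{H^3\iota/t}$, so that a single large constant $c$ simultaneously certifies optimism and delivers the regret rate.
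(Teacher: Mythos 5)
Your proposal is correct and follows essentially the same route as the paper's proof: the same closed-form unrolling of the update, optimism by backward induction combining Azuma--Hoeffding with bonus domination and the CCE-plus-minimax sandwich for the $Q$-to-$V$ passage, and the same $\delta^k_h$ recursion with the $(1+1/H)$ reweighting via $\sum_{t\ge i}\alpha^i_t = 1+1/H$, pigeonhole for the accumulated bonus, and Azuma for the martingale term. The only differences are cosmetic: you route the value sandwich through $\min_\nu \max_\mu$ with an explicit appeal to the matrix-game minimax theorem where the paper directly compares $\max_{\mu}(\D_{\mu\times\nu^k_h}\up{Q}^k_h)(s)$ with $\max_{\mu}(\D_{\mu\times\nu^k_h}Q^\star_h)(s)$, and you handle the small-$K$ burn-in regime more explicitly than the paper does.
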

% \vspace{-5ex}
Lemma \ref{lem:regret_Nash_Q} makes two statements. First, it claims that the $\up{V}_{h}^{k}(s)$ and $\low{V}_{h}^{k} (s)$ computed in Algorithm~\ref{algorithm:Nash_Q} are indeed upper and lower bounds of the value of the Nash equilibrium. Second, Lemma \ref{lem:regret_Nash_Q} claims that the averages of the upper bounds and the lower bounds are also very close to the value of Nash equilibrium $V_1^\star(s_1)$, where the gap decrease as $1/\sqrt{K}$. This implies that in order to learn the value $V_1^\star(s_1)$ up to $\epsilon$-accuracy, we only need $\cO(H^5 SAB\iota/\epsilon^2)$ episodes.

However, Lemma \ref{lem:regret_Nash_Q} has a significant drawback: it only guarantees the learning of the \emph{value} of Nash equilibrium. It does not imply % any guarantees on the 
that the policies $(\mu^k, \nu^k)$ used in Algorithm \ref{algorithm:Nash_Q} are close to the Nash equilibrium, 
% as in Definition \ref{def:epsilon_Nash}, 
which requires the policies to have a near-optimal performance even against their best responses. This is a major difference between Markov games and standard MDPs, and is the reason why standard techniques from the MDP literature does not apply here. To resolve this problem, we propose a novel way to extract a certified policy from the optimistic Nash Q-learning algorithm.

\begin{algorithm}[t]
   \caption{Certified Policy $\hat{\mu}$ of Nash Q-learning}
   \label{algorithm:Q-policy}
   \begin{algorithmic}[1]
   \STATE sample $k \leftarrow \text{Uniform}([K])$.
      \FOR{step $h=1, \dots,H$}
      \STATE observe $s_{h}$, and take action $a_{h} \sim \mu^{k}_h(\cdot|s_h)$. 
      \STATE observe $b_{h}$, and set $t \setto N^{k}_h(s_{h},a_{h},b_{h})$.
      \STATE sample $m \in [t]$ with $\P(m=i)=\alpha^i_t$.
      \STATE $k \setto k^m_{h}(s_{h},a_{h},b_{h})$
      \ENDFOR
   \end{algorithmic}
\end{algorithm}

\paragraph{Algorithm part II: certified policies}
We describe our procedure of executing the certified policy $\hat{\mu}$ of the max-player is described in Algorithm \ref{algorithm:Q-policy}. Above, $\mu^k_h, \nu^k_h$ denote the marginal distributions of $\pi^k_h$ produced in Algorithm \ref{algorithm:Nash_Q} over action set $\cA, \cB$ respectively. We also introduce the following quantities that directly induced by $\alpha_t$:
\begin{equation} \label{eq:alpha_parameter}
\textstyle \alpha_{t}^{0}:=\prod_{j=1}^t{\left( 1-\alpha _j \right)}, \,\, \alpha _{t}^{i}:=\alpha _i\prod_{j=i+1}^t{\left( 1-\alpha _j \right)}
\end{equation}
whose properties are listed in the following Lemma~\ref{lem:step_size}. Especially, $\sum_{i=1}^t \alpha_t^i = 1$, so $\{\alpha_t^i\}_{i=1}^t$ defines a distribution over $[t]$.  We use $k^m_{h}(s,a,b)$ to denote the index of the episode where $(s,a,b)$ is observed in step $h$ for the $m$-th time. The certified policy $\hat{\nu}$ of the min-player is easily defined by symmetry. We note that $\hat{\mu}, \hat{\nu}$ are clearly general policies, but they are no longer Markov policies.

The intuitive reason why such policy $\hat{\mu}$ defined in Algorithm \ref{algorithm:Q-policy} is certified by Nash Q-learning algorithm, is because the update equation in line \ref{line:Q_up_update} of Algorithm \ref{algorithm:Nash_Q} and equation \eqref{eq:alpha_parameter} gives relation:
\begin{equation*}
   \textstyle \up{Q}_{h}^{k}(s,a,b)=\alpha _{t}^{0}H+\sum_{i=1}^t{\alpha _{t}^{i}\left[ r_h(s,a,b)+\up{V}_{h+1}^{k_h^i(s,a,b)}(s_{h+1}^{k_h^i(s,a,b)}) +\beta_i \right]}
\end{equation*}
This certifies the good performance against the best responses if the max-player plays a mixture of policies $\{\mu_{h+1}^{k_h^i(s,a,b)}\}_{i=1}^t$ at step $h+1$ 
% (since $\up{V}_{h+1}^{k_i}(s_{h+1}^{k_i})$ is computed by playing $\mu_{h+1}^{k_i}$ at step $h+1$) 
with mixing weights $\{\alpha_t^i\}_{i=1}^t$ (see Appendix \ref{app:Q_policy} for more details). 
A recursion of this argument leads to the certified policy $\hat{\mu}$---a nested mixture of policies.

We now present our main result for Nash Q-learning, using the certified policies $(\hat{\mu}, \hat{\nu})$.
% We can prove following sample complexity result for the certified policies $(\hat{\mu}, \hat{\nu})$. 
% which gives our main result in this section.
% We denote $V^k, Q^k, N^k, \pi^k$ for values and policies \emph{at the beginning} of the $k$-th episode.
\begin{theorem}[Sample Complexity of Nash Q-learning]
  \label{thm:sample_Nash_Q}
  For any $p\in (0,1]$, choose hyperparameters $\alpha_t, \beta_t$ as in \eqref{eq:hyper_Nash_Q} for large absolute constant $c$ and $\iota = \log (SABT/p)$.  Then, with probability at least $1-p$, if we run Nash Q-learning (Algorithm~\ref{algorithm:Nash_Q}) for $K$ episodes where
  \begin{equation*}
  \textstyle K \ge \Omega\left(H^5 SAB\iota/\epsilon^2\right),
  \end{equation*}
  the certified policies $(\hat{\mu}, \hat{\nu})$ (Algorithm \ref{algorithm:Q-policy}) will be $\epsilon$-approximate Nash, i.e. $V^{\dagger, \hat{\nu}}_1(s_1) - V^{\hat{\mu}, \dagger}_1(s_1) \le \epsilon$.
\end{theorem}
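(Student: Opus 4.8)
The plan is to deduce the Nash-gap bound from the two guarantees of Lemma~\ref{lem:regret_Nash_Q} by showing that the certified policies $(\hat\mu,\hat\nu)$, when facing their best responses, perform no worse on average than the per-episode value estimates. Precisely, I would first prove the two ``certification'' inequalities
\begin{equation*}
V^{\dagger,\hat\nu}_1(s_1)\le \frac1K\sum_{k=1}^K\up{V}^k_1(s_1),\qquad V^{\hat\mu,\dagger}_1(s_1)\ge \frac1K\sum_{k=1}^K\low{V}^k_1(s_1).
\end{equation*}
Given these, subtracting and invoking the second bullet of Lemma~\ref{lem:regret_Nash_Q} yields
\begin{equation*}
V^{\dagger,\hat\nu}_1(s_1)-V^{\hat\mu,\dagger}_1(s_1)\le \frac1K\sum_{k=1}^K(\up{V}^k_1-\low{V}^k_1)(s_1)\le \cO\paren{\sqrt{H^5SAB\iota/K}},
\end{equation*}
which is at most $\epsilon$ once $K\ge\Omega(H^5SAB\iota/\epsilon^2)$. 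All the content is therefore in the two certification inequalities, and by the symmetry of the construction it suffices to establish the first one (for $\hat\nu$ and the upper estimates); the second follows by the mirror argument using the second line of \eqref{eq:CCE}.

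For the first inequality I would introduce, for each index $k$ and step $h$, the sub-policy $\hat\nu^k_h$ that the min-player runs in Algorithm~\ref{algorithm:Q-policy} started at step $h$ with internal index $k$, so that $\hat\nu$ is the uniform mixture over $k\in[K]$ of the $\hat\nu^k_1$. Since the opponent need not observe the internal index, I would first pass to the value $\tilde V_h(s,k)$ of a best response that \emph{is} allowed to observe it; giving the maximizing player more information only raises its value, so $V^{\dagger,\hat\nu^k_h}_h(s)\le \tilde V_h(s,k)$, and in the augmented state $(s,k)$ the best response obeys a genuine Bellman recursion. I would then prove by backward induction on $h$ that $\tilde V_h(s,k)\le \up{V}^k_h(s)$ for every $(s,k)$. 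The step combines three ingredients: the CCE inequality \eqref{eq:CCE}, giving $\up{V}^k_h(s)=\E_{(a,b)\sim\pi^k_h}\up{Q}^k_h(s,a,b)\ge \max_{a^\star}\E_{b\sim\nu^k_h}\up{Q}^k_h(s,a^\star,b)$; the displayed recursion for $\up{Q}^k_h$ in terms of the weights $\{\alpha^i_t\}$ of \eqref{eq:alpha_parameter}, the bonuses $\beta_i$, and the recorded successor estimates $\up{V}^{k^i_h(s,a^\star,b)}_{h+1}$; and the induction hypothesis $\up{V}^{k^i}_{h+1}\ge \tilde V_{h+1}(\cdot,k^i)$, which lets me replace the recorded upper estimates by the best-response values they dominate.

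The remaining gap is between the empirical sum $\sum_i\alpha^i_t\,\tilde V_{h+1}(s^{k^i}_{h+1},k^i)$ appearing in the $\up{Q}$ recursion and the population quantity $\sum_i\alpha^i_t\,(\P_h\tilde V_{h+1}(\cdot,k^i))(s,a^\star,b)$ needed to match the Bellman recursion. The key observation is that each summand is a martingale difference: the sub-policy $\hat\nu^{k^i}_{h+1}$, hence the function $\tilde V_{h+1}(\cdot,k^i)$, is built entirely from episodes preceding $k^i$ and is therefore measurable before the successor $s^{k^i}_{h+1}$ is drawn, so its conditional mean is exactly $(\P_h\tilde V_{h+1}(\cdot,k^i))(s,a^\star,b)$. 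Azuma--Hoeffding with the weights $\alpha^i_t$, together with the step-size estimates of Lemma~\ref{lem:step_size} (in particular $\sum_i(\alpha^i_t)^2=\cO(H/t)$, and $\sum_i\alpha^i_t\beta_i$ being of the same order $\sqrt{H^3\iota/t}$ as $\beta_t$), shows this deviation is dominated by the accumulated bonus $\sum_i\alpha^i_t\beta_i+\alpha^0_tH$ once the constant $c$ is large enough, closing the induction. Finally, to convert the per-index bounds into the stated mixture bound I would use convexity: against the mixture $\hat\nu$ any fixed max-policy earns $\frac1K\sum_k V^{\mu,\hat\nu^k_1}_1(s_1)$, and $\sup_\mu$ of an average is at most the average of the suprema, so $V^{\dagger,\hat\nu}_1(s_1)\le \frac1K\sum_k V^{\dagger,\hat\nu^k_1}_1(s_1)\le \frac1K\sum_k\up{V}^k_1(s_1)$.

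I expect the main obstacle to be the inductive certification step rather than the final arithmetic: one must set up the index-augmented best-response recursion so that the CCE inequality and the $\up{Q}$ recursion align, and then verify that the \emph{varying} best-response value functions $\tilde V_{h+1}(\cdot,k^i)$ nonetheless form a martingale-difference sequence with increments bounded by $H$, so that a single Azuma bound---union-bounded only over the finitely many tuples $(s,a^\star,b,h,t)$ and absorbed into $\iota$---suffices and the bonus provably dominates the noise.
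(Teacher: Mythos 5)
Your proposal is correct and follows essentially the same route as the paper: your two certification inequalities are exactly the content of Lemma~\ref{lem:Nash_q_ULCB_policy}, proved there by the same backward induction combining the CCE property, the $\up{Q}$-recursion \eqref{equ:upper-Q-decompose}, and a weighted Azuma--Hoeffding bound absorbed into the bonuses $\beta_i$, after which the second bullet of Lemma~\ref{lem:regret_Nash_Q} and the uniform mixture $\hat{\mu}=\frac{1}{K}\sum_{k=1}^{K}\hat{\mu}_1^k$ yield the stated sample complexity. Your index-augmented best-response value $\tilde V_h(s,k)$ is merely a different bookkeeping device for the paper's intermediate policies $\hat{\mu}_{h+1}^{k}[s,a,b]$ (Algorithm~\ref{algorithm:Q-sampling-sab}), serving the identical purpose of making the best-response Bellman recursion well defined for the non-Markovian certified policies.
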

Theorem \ref{thm:sample_Nash_Q} asserts that if we run the optimistic Nash Q-learning algorithm for more than $\cO(H^5 SAB\iota/\epsilon^2)$ episodes, the certified policies $(\hat{\mu}, \hat{\nu})$ extracted using Algorithm \ref{algorithm:Q-policy} will be $\epsilon$-approximate Nash equilibrium (Definition \ref{def:epsilon_Nash}).

We make two remarks. First, the executions of the certified policies $\hat{\mu}, \hat{\nu}$ require the storage of $\{\mu^k_h\}$ and $\{\nu^k_h\}$ for all $k, h \in [H] \times [K]$. This makes the space complexity of our algorithm scales up linearly in the total number of episodes $K$. Second, Q-learning style algorithms (especially online updates) are crucial in our analysis for achieving sample complexity linear in $S$. They enjoy the property that every sample is only been used once, on the value function that is independent of this sample. 
% In line \ref{line:Q_up_update} of Algorithm \ref{algorithm:Nash_Q}, the new sample $s_{h+1}$ is independent of randomness in the upper bound $\up{V}_{h+1}$. 
% This is key in obtaining sample complexity linear in $S$. 
In contrast, value iteration type algorithms do not enjoy such an independence property, which is why the best existing sample complexity scales as $S^2$ \citep{bai2020provable}. \footnote{Despite \cite{azar2017minimax} provides techniques to improve the sample complexity from $S^2$ to $S$ for value iteration in MDP, the same techniques can not be applied to Markov games due to the unique challenge that, in Markov games, we aim at finding policies that are good against their best responses.}

\section{Optimistic Nash V-learning}
\label{sec:exp-SA+B}

In this section, we present our new algorithm \emph{Optimistic Nash V-learning} and its corresponding theoretical guarantees. This algorithm improves over Nash Q-learning in sample complexity from $\tlO(SAB)$ to $\tlO(S(A+B))$, when only highlighting the dependency on $S, A, B$. 

\begin{algorithm}[t]
% \setstretch{1.2}
   \caption{Optimistic Nash V-learning (the max-player version)}
   \label{algorithm:Nash_V}
   \begin{algorithmic}[1]
      \STATE {\bfseries Initialize:} for any $(s, a, b, h)$,
      $\up{V}_{h}(s)\setto H$, $\up{L}_h(s,a) \setto 0$, $N_{h}(s)\setto 0$, $\mu_h(a|s) \setto 1/A$.
      \FOR{episode $k=1,\dots,K$}
      \STATE receive $s_1$.
      \FOR{step $h=1,\dots, H$}
      \STATE take action $a_h \sim \mu_h(\cdot |s_h)$, observe the action $b_h$ from opponent.
      \STATE observe reward $r_h(s_h, a_h, b_h)$ and next state
      $s_{h+1}$.
      \STATE $t=N_{h}(s_h)\setto N_{h}(s_h) + 1$.
      \STATE $\up{V}_h(s_h) \setto \min\{H,(1-\alpha_t)\up{V}_h(s_h)+ \alpha_t(r_h(s_h, a_h, b_h)+\up{V}_{h+1}(s_{h+1})+\up{\beta}_t)\}$.
      \FOR{all $a \in \cA$}
      \STATE $\up{\ell}_h(s_h, a) \setto [H-r_h(s_h, a_h, b_h)-\up{V}_{h+1}(s_{h+1})] \mathbb{I}\{a_h=a\}/[\mu_h(a_h|s_{h}) +\up{\eta}_t]$.
      \STATE $\up{L}_h( s_h, a)  \setto ( 1-\alpha _t )\up{L}_h( s_h,a ) +\alpha_t \up{\ell}_h(s_h, a)$.
      \ENDFOR
      \STATE set $\mu_h(\cdot |s_h) \propto \exp[-(\up{\eta}_t/\alpha_t) \up{L}_h( s_h, \cdot)]$. \label{line:MWupdate}
      \ENDFOR
      \ENDFOR
   \end{algorithmic}
\end{algorithm}

\paragraph{Algorithm description}
Nash V-learning combines the idea of Follow-The-Regularized-Leader (FTRL) in the bandit literature with the Q-learning algorithm in reinforcement learning. This algorithm does not require extra information exchange between players other than standard game playing, thus can be ran separately by the two players. We describe the max-player version in Algorithm~\ref{algorithm:Nash_V}. See Algorithm \ref{algorithm:Nash_V_min} in Appendix \ref{sec:pf_V} for the min-player version, where $\low{V}_h$, $\low{L}_h$, $\nu_h$, $\low{\eta}_t$ and $\low{\beta}_t$ are defined symmetrically.

% the communication between both players, other than observing the actions played by the opponents \tiancheng{I find this sentense a little bit confusing. Does it mean we need to observe the actions?}. Algorithm \ref{algorithm:Nash_V} gives the version of the algorithm for the max-player. The min-player runs a symmetric version of the algorithm with corresponding $\low{V}_h$, $\low{L}_h$, $\nu_h$, $\low{\eta}_t$ and $\low{\beta}_t$. For completeness, we provide the min-player version of the algorithm in Algorithm \ref{algorithm:Nash_V_min} in Appendix \ref{sec:pf_V}.

% For each step in each episode of Algorithm \ref{algorithm:Nash_V}, the max-player first take actions according to $\mu_h$ computed in the algorithm. Then the max-player observe the tuple $(s_h, a_h, b_h, r_h, s_{h+1})$\tiancheng{I guess we don't need $b_h$ actually?}, and performs an incremental update on the upper confidence version of $V$-value, instead of $Q$-value as in Nash Q-learning algorithm, thus we name this algorithm Nash V-learning. The remaining steps in Algorithm \ref{algorithm:Nash_V} is used to update the policy $\mu_h$. The agent first compute $\up{\ell}_h(s_h, \cdot)$ the importance weighted estimator of the loss at the current step, we add $\up{\eta}_t$ in the denominator to provide high probability guarantees instead of standard expectation guarantees. The agent then computes the weighted cumulative loss $\up{L}_h( s_h, \cdot)$. Finally the policy $\mu_h$ is computed using FTRL principle:

For each step in each episode, the algorithm (a) first takes action according to $\mu_h$, observes the action of the opponent, the reward, and the next state, (b) performs an incremental update on $\up{V}$, and (c) updates policy $\mu_h$. The first two parts are very similar to Nash Q-learning. In the third part, the agent first computes $\up{\ell}_h(s_h, \cdot)$ as the importance weighted estimator of the current loss. 
% we add $\up{\eta}_t$ in the denominator to provide high probability guarantees instead of standard expectation guarantees. The agent 
She then computes the weighted cumulative loss $\up{L}_h( s_h, \cdot)$. Finally, the policy $\mu_h$ is updated using FTRL principle:
\begin{equation*}
\textstyle \mu_h(\cdot|s_h) \leftarrow \argmin_{\mu \in \Delta_{\cA}} ~ \up{\eta}_t \la  \up{L}_h( s_h, \cdot), \mu\ra + \alpha_t \text{KL}(\mu \| \mu_0)
\end{equation*}
Here $\mu_0$ is the uniform distribution over all actions $\cA$. Solving above minimization problem gives the update equation as in Line \ref{line:MWupdate} in Algorithm \ref{algorithm:Nash_V}. In multi-arm bandit, FTRL can defend against adversarial losses, with regret independent of the number of the opponent's actions. This property turns out to be crucial for Nash V-learning to achieve sharper sample complexity than Nash Q-learning (see the analog of Lemma \ref{lem:regret_Nash_Q} in Lemma \ref{lem:regret_Nash_V}).

\begin{algorithm}[t]
   \caption{Certified Policy $\hat{\mu}$ of Nash V-learning}
   \label{algorithm:V-policy}
   \begin{algorithmic}[1]
   \STATE sample $k \leftarrow \text{Uniform}([K])$.
      \FOR{step $h=1, \dots,H$}
      \STATE observe $s_{h}$, and set $t \setto N^{k}_h(s_{h})$.
      \STATE sample $m \in [t]$ with $\P(m=i)=\alpha^i_t$.
      \STATE $k \setto k^m_{h}(s_{h})$.
      \STATE take action $a_{h} \sim \mu^{k}_h(\cdot|s_h)$.
      \ENDFOR
   \end{algorithmic}
\end{algorithm}

Similar to Nash Q-learning, we also propose a new algorithm (Algorithm \ref{algorithm:V-policy}) to extract a certified policy from the optimistic Nash V-learning algorithm. The certified policies are again non-Markovian. We choose all hyperparameters as follows, for some large constant $c$ , and log factors $\iota$.
 \begin{equation} \label{eq:hyper_Nash_V}
  \alpha_t=\frac{H+1}{H+t}, \quad\up{\eta}_t = \sqrt{\frac{\log A}{At}}, \quad\low{\eta}_t = \sqrt{\frac{\log B}{Bt}}, \quad\up{\beta}_t=c\sqrt{\frac{H^4A\iota}{t}}, \quad\low{\beta}_t=c\sqrt{\frac{H^4B\iota}{t}},
\end{equation}
We now present our main result on the sample complexity of Nash V-learning.

\begin{theorem}[Sample Complexity of Nash V-learning]
  \label{thm:sample_Nash_V}
  For any $p\in (0,1]$, choose hyperparameters as in \eqref{eq:hyper_Nash_V} for large absolute constant $c$ and $\iota = \log (SABT/p)$.  Then, with probability at least $1-p$, if we run Nash V-learning (Algorithm~\ref{algorithm:Nash_V} and \ref{algorithm:Nash_V_min}) for $K$ episodes with 
  \begin{equation*}
  \textstyle K \ge \Omega\left(H^6 S(A+B)\iota/\epsilon^2\right),
  \end{equation*}
  its induced policies $(\hat{\mu}, \hat{\nu})$ (Algorithm \ref{algorithm:V-policy}) will be $\epsilon$-approximate Nash, i.e. $V^{\dagger, \hat{\nu}}_1(s_1) - V^{\hat{\mu}, \dagger}_1(s_1) \le \epsilon$.
\end{theorem}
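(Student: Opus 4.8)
The plan is to follow the template of Theorem~\ref{thm:sample_Nash_Q}, substituting the CCE-based estimates by the FTRL-based ones. The backbone is the V-learning analog of Lemma~\ref{lem:regret_Nash_Q}, namely Lemma~\ref{lem:regret_Nash_V}, which I would establish first: with probability at least $1-p$ under the hyperparameters~\eqref{eq:hyper_Nash_V}, (i) the estimates are \emph{optimistic}, $\up{V}_h^k(s) \ge V_h^\star(s) \ge \low{V}_h^k(s)$ for all $(s,h,k)$, where $\up{V}$ comes from the max-player's Algorithm~\ref{algorithm:Nash_V} and $\low{V}$ from the symmetric min-player's algorithm; and (ii) the averaged gap obeys $\frac{1}{K}\sum_{k=1}^K(\up{V}_1^k - \low{V}_1^k)(s_1) \le \cO(\sqrt{H^6 S(A+B)\iota/K})$. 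Everything else in the theorem is a consequence of pushing this gap below $\epsilon$.

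The second step converts the value gap into a guarantee on the certified policies of Algorithm~\ref{algorithm:V-policy}. I would prove, by backward induction on $h$, the two sandwiching inequalities
\[
V^{\dagger,\hat{\nu}}_1(s_1) \le \frac{1}{K}\sum_{k=1}^K \up{V}_1^k(s_1), \qquad V^{\hat{\mu},\dagger}_1(s_1) \ge \frac{1}{K}\sum_{k=1}^K \low{V}_1^k(s_1).
\]
The engine is the identity obtained by unrolling the $\up{V}$ update (up to the truncation at $H$),
\[
\up{V}_h^k(s) = \alpha_t^0 H + \sum_{i=1}^t \alpha_t^i\brac{ r_h^{(i)} + \up{V}_{h+1}^{k_h^i(s)}\paren{s_{h+1}^{(i)}} + \up{\beta}_i }, \qquad t = N_h^k(s),
\]
which exhibits $\up{V}_h^k(s)$ as the $\{\alpha_t^i\}$-mixture over the visitation episodes $k_h^i(s)$---exactly the distribution sampled in Algorithm~\ref{algorithm:V-policy}. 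Matching this recursive mixture to the nested execution of $\hat{\nu}$ (resp. $\hat{\mu}$) reduces the inductive step, at each visited state, to a single-step no-regret comparison: the max-player's FTRL regret bounds $V^{\dagger,\hat{\nu}}$ from above and the min-player's FTRL regret bounds $V^{\hat{\mu},\dagger}$ from below. Because FTRL on $\Delta_{\cA}$ (resp. $\Delta_{\cB}$) incurs regret depending only on $\log A$ and the player's own visitation count (resp. $\log B$), this is where the improvement from $AB$ to $A+B$ is realized.

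Combining the two steps with the optimism of Lemma~\ref{lem:regret_Nash_V} and the saddle-point ordering $V^{\dagger,\hat{\nu}}_1 \ge V_1^\star \ge V^{\hat{\mu},\dagger}_1$ gives
\[
V^{\dagger,\hat{\nu}}_1(s_1) - V^{\hat{\mu},\dagger}_1(s_1) \le \frac{1}{K}\sum_{k=1}^K(\up{V}_1^k - \low{V}_1^k)(s_1) \le \cO\paren{\sqrt{H^6 S(A+B)\iota/K}}.
\]
Requiring the right-hand side to be at most $\epsilon$ yields $K \ge \Omega(H^6 S(A+B)\iota/\epsilon^2)$, which is exactly the stated sample complexity, and by Definition~\ref{def:epsilon_Nash} the pair $(\hat{\mu},\hat{\nu})$ is then $\epsilon$-approximate Nash.

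I expect the main obstacle to be Lemma~\ref{lem:regret_Nash_V}, and within it the optimism claim together with the calibration of the bonus $\up{\beta}_t = \cO(\sqrt{H^4 A\iota/t})$. Unlike Nash Q-learning, each player sees only bandit feedback on its own action and must use the importance-weighted loss $\up{\ell}_h$; showing $\up{V} \ge V^\star$ then requires proving that the accumulated FTRL regret plus the estimation error of these importance weights is dominated by the accumulated bonus $\sum_i \alpha_t^i \up{\beta}_i = \tilde{\Theta}(\sqrt{H^4 A/t})$. This needs a careful martingale argument that (a) bounds the weighted single-step regret $\sum_i \alpha_t^i \langle \up{\ell}_h, \mu_h^{k_h^i} - \mu\rangle$ while controlling the variance injected by the $1/(\mu_h(a_h|s_h)+\up{\eta}_t)$ normalization, and (b) propagates these per-step errors through the Q-learning recursion across all $H$ steps and $S$ states using the properties of $\{\alpha_t^i\}$ from Lemma~\ref{lem:step_size}. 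Once the regret lemma is in place, the certified-policy induction, though delicate, proceeds in close analogy with the Nash Q-learning argument.
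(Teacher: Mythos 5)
Your proposal is correct and follows essentially the same route as the paper: first the value-learning lemma (the paper's Lemma~\ref{lem:regret_Nash_V}, built on the weighted high-probability FTRL bandit bound of Lemma~\ref{lem:bandit-regret}), then a backward induction certifying the nested-mixture policies via the unrolled $\{\alpha_t^i\}$-mixture identity (the paper's Lemma~\ref{lem:v_ULCB_policy}), and finally the combination $V^{\dagger,\hat{\nu}}_1(s_1) - V^{\hat{\mu},\dagger}_1(s_1) \le \frac{1}{K}\sum_{k=1}^K(\up{V}_1^k - \low{V}_1^k)(s_1)$. You also correctly pinpoint where the difficulty and the $A+B$ improvement live, namely the importance-weighted FTRL regret analysis that the paper develops in Appendix~\ref{sec:bandit}.
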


Theorem \ref{thm:sample_Nash_Q} claims that if we run the optimistic Nash V-learning for more than $\cO(H^6 S(A+B)\iota/\epsilon^2)$ episodes, the certified policies $(\hat{\mu}, \hat{\nu})$ extracted from Algorithm \ref{algorithm:V-policy} will be $\epsilon$-approximate Nash (Definition \ref{def:epsilon_Nash}). Nash V-learning is the first algorithm of which the sample complexity matches the information theoretical lower bound $\Omega(H^3S(A+B)/\epsilon^2)$ up to $\poly(H)$ factors and logarithmic terms.

% \chijin{Explain the intuitive connection between V-learning and Q-learning, relation in MDP?}

%!TEX root = main.tex

\section{Hardness for Learning the Best Response}
\label{sec:lower}

In this section, we present a computational hardness result for computing the best response against an opponent with a fixed unknown policy. We further show that this implies the computational hardness result for achieving sublinear regret in Markov games when playing against adversarial opponents, which rules out a popular approach to design algorithms for finding Nash equilibria.

We first remark that if the opponent is restricted to only play Markov policies, then learning the best response is as easy as learning a optimal policy in the standard single-agent Markov decision process, where efficient algorithms are known to exist. Nevertheless, when the opponent can as well play any policy which may be non-Markovian, we show that finding the best response against those policies is computationally challenging.

% \chijin{We note that learning the best response for opponent who only plays Markov policies is easy, which can be reduced to the standard MDP. However, the opponent can for sure also plays non-Markovian policies, which makes it much more challenging to find the best response.}

We say an algorithm is a \emph{polynomial time algorithm for learning the best response} if for any policy of the opponent $\nu$, and for any $\epsilon > 0$, the algorithm finds the $\epsilon$-approximate best response of policy $\nu$ (Definition \ref{def:epsilon_best_response}) with probability at least $1/2$, in time polynomial in $S, H, A, B, \epsilon^{-1}$.

We can show the following hardness result for finding the best response in polynomial time.

\begin{theorem}[Hardness for learning the best response]
  \label{theorem:lower-bound-best-response}
There exists a Markov game with deterministic transitions and rewards defined for any horizon $H\ge 1$ with $S=2$, $A=2$, and $B=2$, such that if there exists a polynomial time algorithm for learning the best response for this Markov game, then there exists a polynomial time algorithm for learning parity with noise (see problem description in Appendix \ref{appendix:proof-lower}).
\end{theorem}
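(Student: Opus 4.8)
The plan is to reduce learning parity with noise (LPN) to best-response learning, so that any polynomial-time best-response learner yields a polynomial-time parity learner. Recall that an LPN instance is given by a hidden vector $u\in\{0,1\}^d$ and a fixed noise level $\eta\in(0,1/2)$; one may draw i.i.d.\ samples $(x,y)$ with $x$ uniform on $\{0,1\}^d$ and $y=\la u,x\ra\oplus\xi$, $\xi\sim\Ber(\eta)$, and the goal is to recover $u$ (believed hard). I set $d=H-1$ and design a Markov game whose single state bit records a running parity: with $\cS=\{0,1\}$ and $\cA=\cB=\{0,1\}$, the transition is deterministic, $s_{h+1}=s_h\oplus(a_h\cdot b_h)$ for $h\le d$, with zero reward on these steps; on the final step $H$ the reward is $r_H(s,a,b)=\mathbb{I}\{s=b\}$. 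Crucially, all transitions and rewards are fixed and $u$-independent, so they are legitimately part of the game $\MG$; the hidden parity enters only through the opponent's (non-Markovian) policy.

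The opponent's fixed policy $\nu$ works as follows: from its random seed it samples $(x,\xi)$ for the episode, reveals the coordinates of $x$ through its first $d$ actions ($b_h=x_h$), and on the last step plays $b_H=y=\la u,x\ra\oplus\xi$. If the max-player commits actions $a_1,\dots,a_d$, the terminal state equals $s_H=\la a,x\ra$, so the episode reward is $\mathbb{I}\{\la a,x\ra=y\}$. Because each $a_h$ must be chosen before $b_h=x_h$ is observed and the label $y$ is disclosed only after $s_H$ is irrevocably fixed, the terminal parity is some (possibly randomized) function $g$ of $x$ alone, independent of $\xi$. Hence the value of any max-player policy is $(1-\eta)\Pr[g(x)=\la u,x\ra]+\eta\Pr[g(x)\ne\la u,x\ra]$, which is maximized by $g=\la u,\cdot\ra$, giving $V^{\dagger,\nu}_1(s_1)=1-\eta$, attained by the Markov response $a_h=u_h$.

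To run the reduction, I simulate the game for the best-response learner by drawing a fresh LPN sample each episode and relaying it through $\nu$ as above; this requires no knowledge of $u$. An $\epsilon$-approximate best response $\hat\mu$ then satisfies $V^{\hat\mu,\nu}_1(s_1)\ge 1-\eta-\epsilon$, which by the identity above forces $\Pr[g(x)=\la u,x\ra]\ge 1-\epsilon/(1-2\eta)$; that is, executing $\hat\mu$ on a fresh $x$ (revealed coordinatewise by a simulated opponent) and reading off $s_H$ gives a predictor for $\la u,x\ra$ with error $O(\epsilon)$. Choosing $\epsilon$ a small constant multiple of $1-2\eta$ yields a predictor noticeably correlated with the true parity, which by standard self-correction arguments solves the search version of LPN in polynomial time. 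Since the game has $S=A=B=2$ and horizon $H=d+1$, a running time $\poly(S,H,A,B,\epsilon^{-1})$ is polynomial in $d$, completing the reduction.

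The main obstacle is the soundness (no-cheating) argument of the second paragraph: I must verify that no history-dependent max-player strategy can beat $1-\eta$. The point to get right is the information ordering within an episode: $a_h$ is committed before $b_h=x_h$ is seen, and the noisy label $y$ is revealed only after the parity $s_H$ is locked, so the committed bit $g(x)$ depends on $x$ and the policy's internal randomness but is independent of $\xi$. This independence is exactly what pins the per-episode value to $(1-\eta)\Pr[g=\la u,\cdot\ra]+\eta\Pr[g\ne\la u,\cdot\ra]$ and rules out strategies that peek at the label to match it retroactively. Establishing this carefully, together with confirming that a single state bit suffices to carry the running parity under the $S=2$, $A=B=2$ constraints, is the crux of the proof.
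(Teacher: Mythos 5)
Your reduction is correct, and at its core it is the same construction as the paper's: a two-state game whose state bit carries a running parity, an opponent whose non-Markov policy streams an LPN sample $x$ through its first $H-1$ actions and plays the noisy label at step $H$, a terminal reward $\mathbb{I}\{s_H=b_H\}$, and the key value identity $V^{\mu,\nu}_1(s_1)=\eta+(1-2\eta)\Pr[g(x)=\la u,x\ra]$, which converts an $\epsilon$-approximate best response into a parity predictor with error $\epsilon/(1-2\eta)$. You differ in two places, both defensibly. First, your transition gadget toggles the state, $s_{h+1}=s_h\oplus(a_h\wedge b_h)$, so a fixed (open-loop) action string already computes $\la a,x\ra$ under the standard simultaneous-move protocol; the paper instead uses an overwrite rule (if $b_h=1$ the next bit equals $a_h$, else unchanged) and additionally modifies the protocol to reveal the min-player's actions at the start of each episode so that set-indicator policies realize parities, arguing this only makes the max-player's task easier. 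Your version avoids that protocol modification, and your explicit information-ordering/soundness argument (that the committed bit $g$ is a function of $x$ and internal randomness alone, independent of the noise $\xi$, so no adaptive strategy beats $1-\eta$) is exactly the step the paper treats loosely when it describes the learned policy as a set $\hat T$. Second, your endgame differs: the paper reduces to the prediction version of LPN via an amplification chain (Markov's inequality to derandomize the hypothesis, then repeat-and-validate on held-out samples to boost the success probability), whereas you invoke self-correction of a constant-error predictor to solve the search version. That is legitimate since search and prediction are polynomially equivalent for parities, but note two small obligations you leave implicit: the vote $h(x)\oplus h(x\oplus e_i)$ recovers $u_i$ only when the error is below $1/4$ (consistent with your choice of $\epsilon$ a small constant multiple of $1-2\eta$), and the best-response learner's bare $1/2$ success probability still requires a repeat-and-validate step (easy here, since any wrong parity agrees with the labels with probability exactly $1/2$), which is the content the paper spells out in its Problem 2.1--2.3 reductions.
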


We remark that learning parity with noise is a notoriously difficult problem that has been used to design efficient cryptographic schemes.
It is conjectured by the community to be hard.
\begin{conjecture}[\cite{kearns1998efficient}]\label{conj:hardness}
There is no polynomial time algorithm for learning party with noise.
\end{conjecture}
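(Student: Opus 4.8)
The plan is to first be honest about what kind of statement this is: Conjecture~\ref{conj:hardness} is not a theorem with a known proof but a long-standing open problem at the interface of computational learning theory and cryptography. Proving it outright would amount to establishing an \emph{unconditional} super-polynomial lower bound against \emph{all} polynomial-time algorithms for a concrete, natural problem, and no such lower bound is presently known for learning parity with noise (indeed, for essentially any problem in $\mathsf{NP}$). A genuine proof is therefore beyond the reach of current techniques; what I can realistically lay out are the two routes one would \emph{attempt}, together with the reason each stalls—which is precisely why the paper \emph{assumes} this statement rather than proving it and uses it as the hardness source for Theorem~\ref{theorem:lower-bound-best-response}.

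The first route is reduction-based: exhibit a problem already \emph{known} (not merely conjectured) to require super-polynomial time and reduce it to learning parity with noise, so that a fast LPN learner would contradict established hardness. The obstacle is that the only unconditional super-polynomial lower bounds we possess live in restricted or structured models—query complexity, communication complexity, monotone or bounded-depth circuits—and none of these is known to reduce to LPN in a way that controls arbitrary polynomial-time learners. The well-studied connections run the wrong way: worst-case lattice problems reduce \emph{to} LWE, and LWE relates \emph{to} LPN, so these arguments transfer hardness \emph{onto} LPN from other \emph{conjecturally} hard problems. This replaces one assumption with another and never terminates in an unconditional proof.

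The second route is a direct lower bound in a model rich enough to capture polynomial-time learning, and here the strongest rigorous partial result—the closest thing to a proof—is Kearns's statistical-query lower bound \cite{kearns1998efficient}: learning parities over $n$ bits in the SQ model requires $2^{\Omega(n)}$ queries at any inverse-polynomial tolerance, since every parity is nearly orthogonal under the uniform distribution to every bounded query function, so parity is not efficiently SQ-learnable even with noise. I would take this as the template, bound the SQ dimension of the parity class, and conclude an exponential query lower bound. The fatal gap—and the main obstacle—is that the SQ model does \emph{not} subsume all polynomial-time algorithms: Gaussian elimination solves noiseless parity in polynomial time precisely by exploiting correlations across many samples that no single statistical query sees, and the best known general LPN algorithm (Blum--Kalai--Wasserman) runs in $2^{O(n/\log n)}$ time, which is sub-exponential yet super-polynomial, so it neither proves nor refutes the conjecture. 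Upgrading an SQ-style argument to arbitrary circuits would require defeating the natural-proofs and relativization barriers, and it is exactly this that keeps Conjecture~\ref{conj:hardness} open.
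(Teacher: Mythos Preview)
Your assessment is correct and aligns with the paper's treatment: Conjecture~\ref{conj:hardness} is stated in the paper precisely as an unproven conjecture (attributed to \cite{kearns1998efficient}) and is used as a hardness \emph{assumption} for Theorem~\ref{theorem:lower-bound-best-response} and Corollary~\ref{cor:lower-bound-coupled}, not as a result to be established. The paper offers no proof and none is expected; your discussion of why both the reduction-based and SQ-lower-bound routes fall short is accurate supplementary context, but strictly more than what the paper itself provides.
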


Theorem \ref{theorem:lower-bound-best-response} with Conjecture \ref{conj:hardness} demonstrates the fundamental difficulty---if not strict impossibility---of designing a polynomial time for learning the best responses in Markov games. The intuitive reason for such computational hardness is that, while the underlying system has Markov transitions, the opponent can play policies that encode long-term correlations with non-Markovian nature, such as parity with noise, which makes it very challenging to find the best response. It is known that learning many other sequential models with long-term correlations (such as hidden Markov models or partially observable MDPs) is as hard as learning parity with noise \cite{mossel2005learning}.

\subsection{Hardness for Playing Against Adversarial Opponent}
% \chijin{1. Define no regret algorithm, 2. State conditional lower bound, 3. Clarify why this rule out a class of algorithms for learning Nash equilibrium.}

% In this section, we investigate a natural idea for solving two-player Markov games via combining two ``no-regret'' algorithms, and show that such an approach may be infeasible due to a computational lower bound for achieving no regret against adversarial opponents.

Theorem \ref{theorem:lower-bound-best-response} directly implies the difficulty for achieving sublinear regret in Markov games when playing against adversarial opponents in Markov games. Our construction of hard instances in the proof of Theorem \ref{theorem:lower-bound-best-response} further allows the adversarial opponent to only play Markov policies in each episode.
% , while keeping the problems remain computational challenging. 
Since playing against adversarial opponent is a different problem with independent interest, we present the full result here.

% In this section, we present a computational hardness result for achieving sublinear regret in Markov games when playing against adversarial opponents, and discuss its implications. 
% This result rules out the possibility of solving Markov games efficiently through running no-regret algorithms for each player separately.
 % We also discuss its implication on the design of 

% By ``one-sided'', we mean an algorithm that only controls one player but are guaranteed to achieve satisfactory performance (e.g. low regret) against an \emph{arbitrary} (potentially adversarial) opponent.

% \paragraph{Reduction to one-sided algorithms}
% We first show how a Markov game can be reduced to one-sided algorithms. Suppose there exists an algorithm that only controls the max-player, but is able to output policies $\mu^k$ that achieves the \emph{one-sided regret bound}
% \paragraph{No-regret algorithms against adversarial opponents}
Without loss of generality, we still consider the setting where the algorithm can only control the max-player, while the min-player is an adversarial opponent. In the beginning of every episode $k$, both players pick their own policies $\mu^k$ and $\nu^k$, and execute them throughout the episode. The adversarial opponent can possibly pick her policy $\nu^k$ \emph{adaptive} to all the observations in the earlier episodes. 

We say an algorithm for the learner is a \emph{polynomial time no-regret algorithm} if there exists a $\delta >0$ such that for \emph{any} adversarial opponent, and any fixed $K > 0$, the algorithm outputs policies $\{\mu^k\}_{k=1}^K$ which satisfies the following, with probability at least $1/2$, in time polynomial in $S, H, A, B, K$.
% For any algorithm that controls the max-player, we say it is a \emph{no-regret algorithm against adversarial opponents}, if in each episode $k$, it outputs policies $\mu^k$ that achieves the one-sided regret bound for \emph{any} opponent policy $\nu^k$ and some $\delta>0$:
\begin{equation}
  \label{equation:one-sided-regret}
  \Reg(K) = \sup_\mu \sum_{k=1}^K V^{\mu, \nu^k}_{1}
  (s_1) -  \sum_{k=1}^K V^{\mu^k, \nu^k}_{1} (s_1) \le {\rm poly}(S, H, A, B)K^{1-\delta}
\end{equation}

% for \emph{any} opponent policy $\nu^k$ and some $\delta\ge 0$. % In other words, a no-regret algorithm is one that is guaranteed to achieve sublinear regret in $K$ against the best possible policy in hindsight, regardless of how the opponent plays.

% We can show the following hardness results for designing polynomial time no-regret algorithms.
Theorem~\ref{theorem:lower-bound-best-response} directly implies the following hardness result for achieving no-regret against adversarial opponents, a result that first appeared in~\citep{radanovic2019learning}.
\begin{corollary}[Hardness for playing against adversarial opponent]
  \label{cor:lower-bound-coupled}
There exists a Markov game with deterministic transitions and rewards defined for any horizon $H\ge 1$ with $S=2$, $A=2$, and $B=2$, such that if there exists a polynomial time no-regret algorithm for this Markov game, then there exists a polynomial time algorithm for learning parity with noise (see problem description in Appendix \ref{appendix:proof-lower}).
The claim remains to hold even if we restrict the adversarial opponents in the Markov game to be non-adaptive, and to only play Markov policies in each episode.
\end{corollary}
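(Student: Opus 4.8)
The plan is to reduce the problem of learning an $\epsilon$-approximate best response (Definition \ref{def:epsilon_best_response}), whose hardness is established in Theorem \ref{theorem:lower-bound-best-response}, to the existence of a polynomial time no-regret algorithm. Concretely, I would take the fixed (general, possibly non-Markovian) opponent policy $\nu$ furnished by the hard instance of Theorem \ref{theorem:lower-bound-best-response}, run the purported no-regret learner for $K$ episodes with the opponent playing $\nu$ in \emph{every} episode, and output the uniform mixture policy $\hat{\mu}$ that first draws $k \sim \text{Uniform}([K])$ and then follows $\mu^k$ throughout the episode. Because the opponent is stationary, $\hat{\mu}$ is a legal general policy whose value satisfies exactly $V^{\hat{\mu}, \nu}_1(s_1) = \frac1K\sum_{k=1}^K V^{\mu^k, \nu}_1(s_1)$.

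To see that $\hat{\mu}$ is nearly a best response, I would relate the learner's realized reward to the regret benchmark. Rearranging the definition \eqref{equation:one-sided-regret} of $\Reg(K)$ with $\nu^k \equiv \nu$ and dividing by $K$ gives $\frac1K\sum_k V^{\mu^k, \nu}_1(s_1) \ge \sup_\mu \frac1K\sum_k V^{\mu, \nu}_1(s_1) - \Reg(K)/K = V^{\dagger, \nu}_1(s_1) - \Reg(K)/K$, where the last equality uses stationarity. Since the no-regret guarantee yields $\Reg(K)/K \le \poly(S,H,A,B)\, K^{-\delta}$, choosing $K$ polynomially large in $1/\epsilon$ forces this gap below $\epsilon$, so $V^{\dagger, \nu}_1(s_1) - V^{\hat{\mu}, \nu}_1(s_1) \le \epsilon$ and $\hat{\mu}$ is an $\epsilon$-approximate best response computed in polynomial time. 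Feeding $\hat{\mu}$ into Theorem \ref{theorem:lower-bound-best-response} then yields a polynomial time algorithm for learning parity with noise, establishing the base claim of the corollary as well as its non-adaptive strengthening (the fixed $\nu$ is non-adaptive).

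For the final strengthening---that hardness persists even when the adversary is restricted to play only Markov policies in each episode---I would exploit the structure of the hard instance: its fixed opponent policy $\nu$ is a mixture, over a shared random seed $z \sim \mathcal{D}$, of Markov policies $\nu_z$. Thus executing $\nu$ across episodes is equivalent to the opponent drawing an independent $z_k$ in episode $k$ and running the \emph{Markov} policy $\nu_{z_k}$; the realized sequence $\{\nu_{z_k}\}$ is non-adaptive and per-episode Markov, yet collectively simulates the non-Markovian $\nu$. The reduction then requires two concentration steps in place of the stationary identity. First, because the learner commits $\mu^k$ before episode $k$'s seed $z_k$ is realized, $\mu^k$ is independent of $z_k$, so the realized average $\frac1K\sum_k V^{\mu^k, \nu_{z_k}}_1(s_1)$ (a sum of independent terms bounded in $[0,H]$) concentrates to $\frac1K\sum_k V^{\mu^k, \nu}_1(s_1) = V^{\hat{\mu}, \nu}_1(s_1)$. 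Second, the hindsight benchmark $\sup_\mu \frac1K\sum_k V^{\mu, \nu_{z_k}}_1(s_1)$ is at least $\frac1K\sum_k V^{\mu^\dagger(\nu), \nu_{z_k}}_1(s_1)$, which concentrates to $V^{\mu^\dagger(\nu), \nu}_1(s_1) = V^{\dagger, \nu}_1(s_1)$. Combining both with the sublinear regret bound reproduces $V^{\dagger, \nu}_1(s_1) - V^{\hat{\mu}, \nu}_1(s_1) \le \epsilon$.

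The main obstacle I anticipate is this second part: arguing rigorously that an opponent which looks Markov and harmless in each individual episode still forces the learner to solve the best-response problem against the non-Markovian $\nu$. This needs (i) verifying that the specific construction behind Theorem \ref{theorem:lower-bound-best-response} genuinely decomposes its opponent as a per-episode mixture of Markov policies indexed by a shared seed, and (ii) controlling the two concentration errors together with the composition of failure probabilities---the no-regret guarantee and the downstream parity learner each succeed only with constant probability---so that the pieces assemble into a single polynomial time, constant-success-probability algorithm for parity with noise. The stationary first part, by contrast, is an essentially routine online-to-batch argument.
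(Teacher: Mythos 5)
Your proposal is correct and follows essentially the same route as the paper: an online-to-batch conversion turning the no-regret learner's uniform mixture $\hat{\mu}$ of $\{\mu^k\}_{k=1}^K$ into an $\epsilon$-approximate best response against the fixed $\nu$, which Theorem~\ref{theorem:lower-bound-best-response} then converts into a parity-with-noise learner, and for the strengthening the same observation the paper makes, namely that the hard instance's opponent draws a fresh oracle sample each episode and hence already plays a non-adaptive Markov policy $\nu_{z_k}$ per episode. The only detail to tidy is that your first concentration step involves a martingale difference sequence rather than an i.i.d.\ sum (the terms $V^{\mu^k,\nu_{z_k}}_1(s_1)$ are not independent, since $\mu^k$ depends on the seeds of earlier episodes), so Azuma--Hoeffding is the right tool---a point the paper compresses into ``online-to-batch conversion similar as the above'' but which your write-up otherwise makes usefully explicit.
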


Similar to Theorem \ref{theorem:lower-bound-best-response}, Corollary \ref{cor:lower-bound-coupled} combined with Conjecture \ref{conj:hardness} demonstrates the fundamental difficulty of designing a polynomial time no-regret algorithm against adversarial opponents for Markov games.
% A similar construction also appears in \cite{radanovic2019learning}.

% Theorem \ref{theorem:lower-bound-best-response}, combined with Conjecture \ref{conj:hardness}, demonstrates the fundamental difficulty---if not strict impossibility---of designing a polynomial time no-regret algorithm against adversarial opponents for Markov games. The intuitive reason for such computational hardness is that, while the underlying system has Markov transitions, the adversarial opponent can play policies that encode long-term correlations with non-Markovian nature, which makes it challenging to find the ``best response''. It is known that learning many other sequential models with long-term correlations (such as hidden Markov models or partially observable MDPs) is as hard as learning parity with noise \cite{mossel2005learning}.

\paragraph{Implications on algorithm design for finding Nash Equilibria}
% Perhaps surprisingly, Theorem~\ref{theorem:lower-bound-coupled} rules out the possibility of designing efficient algorithms for finding Nash equilibrium through combining two no-regret algorithms. Indeed, it is not hard to see that putting together two such algorithms (for both the max- and min-player) can yield a regret bound for finding Nash. However, Theorem~\ref{theorem:lower-bound-coupled} shows that achieving such no-regret is itself hard without additional knowledge about the opponent.
Corollary~\ref{cor:lower-bound-coupled} also rules out a natural approach for designing efficient algorithms for finding approximate Nash equilibrium through combining two no-regret algorithms. In fact, it is not hard to see that if the min-player also runs a non-regret algorithm, and obtain a regret bound symmetric to \eqref{equation:one-sided-regret}, then summing the two regret bounds shows the mixture policies $(\hat{\mu}, \hat{\nu})$---which assigns uniform mixing weights to policies $\{\mu^k\}_{k=1}^K$ and $\{\nu^k\}_{k=1}^K$ respectively---is an approximate Nash equilibrium. Corollary~\ref{cor:lower-bound-coupled} with Conjecture \ref{conj:hardness} claims that any algorithm designed using this approach is not a polynomial time algorithm.

\section{Conclusion}
\label{sec:conclusion}
In this paper, we designed first line of near-optimal self-play algorithms for finding an approximate Nash equilibrium in two-player Markov games. The sample complexity of our algorithms matches the information theoretical lower bound up to only a polynomial factor in the length of each episode. Apart from finding Nash equilibria, we also prove the fundamental hardness in computation for finding the best responses of fixed opponents, as well as achieving sublinear regret against adversarial opponents, in Markov games.
% We also showed that achieving sublinear regret against adversarial opponents is as hard as learning parity with noise, which is well-known problem conjectured to be computationally hard. 
% For future work, it would be of interest to close the gap in the length of the episode, or design efficient self-play algorithms in presence of function approximation.

% Mechanism: write in main result, certified upper bound / lower bound.

% Future: 1. Markov policy, 2. optimal dependency on H, 3. regret bound?

% \chijin{talk in intro or related work that $\epsilon$-greedy would incur exponential sample complexity.}
% \input{BI}

% \section{TODO List}
% \begin{enumerate}
% \item (Done, Section~\ref{sec:exp-S2AB})  Exploration $S^2 AB$ algorithm using CCE.
% \item (Done, Section~\ref{sec:sim-SAB})  Simulator $SAB$ algorithm.
% \item (Done, Section~\ref{sec:sim-SA+B})  Simulator $S(A+B)$ algorithm.
% \item (Done, Section~\ref{sec:lower}) Lower bound
% \item (Done, Section~\ref{sec:exp-SAB}) Exploration $SAB$ algorithm.
% \item (Working, Section~\ref{sec:exp-SA+B}) Exploration $S(A+B)$ algorithm.
% \end{enumerate}
%\chijin{It might be possible that the key to improve $S^2$ dependency to $S$ in the exploration setting is to provide guarantees in terms of weak regret or PAC.}

% \bibliographystyle{abbrvnat}
\bibliographystyle{plainnat}
\bibliography{ref}

\begin{thebibliography}{38}
\providecommand{\natexlab}[1]{#1}
\providecommand{\url}[1]{\texttt{#1}}
\expandafter\ifx\csname urlstyle\endcsname\relax
  \providecommand{\doi}[1]{doi: #1}\else
  \providecommand{\doi}{doi: \begingroup \urlstyle{rm}\Url}\fi

\bibitem[Azar et~al.(2017)Azar, Osband, and Munos]{azar2017minimax}
Mohammad~Gheshlaghi Azar, Ian Osband, and R{\'e}mi Munos.
\newblock Minimax regret bounds for reinforcement learning.
\newblock In \emph{Proceedings of the 34th International Conference on Machine
  Learning-Volume 70}, pages 263--272. JMLR. org, 2017.

\bibitem[Bai and Jin(2020)]{bai2020provable}
Yu~Bai and Chi Jin.
\newblock Provable self-play algorithms for competitive reinforcement learning.
\newblock \emph{arXiv preprint arXiv:2002.04017}, 2020.

\bibitem[Baker et~al.(2020)Baker, Kanitscheider, Markov, Wu, Powell, McGrew,
  and Mordatch]{baker2020emergent}
Bowen Baker, Ingmar Kanitscheider, Todor Markov, Yi~Wu, Glenn Powell, Bob
  McGrew, and Igor Mordatch.
\newblock Emergent tool use from multi-agent autocurricula.
\newblock In \emph{International Conference on Learning Representations}, 2020.
\newblock URL \url{https://openreview.net/forum?id=SkxpxJBKwS}.

\bibitem[Brafman and Tennenholtz(2002)]{brafman2002r}
Ronen~I Brafman and Moshe Tennenholtz.
\newblock R-max-a general polynomial time algorithm for near-optimal
  reinforcement learning.
\newblock \emph{Journal of Machine Learning Research}, 3\penalty0
  (Oct):\penalty0 213--231, 2002.

\bibitem[Brambilla et~al.(2013)Brambilla, Ferrante, Birattari, and
  Dorigo]{brambilla2013swarm}
Manuele Brambilla, Eliseo Ferrante, Mauro Birattari, and Marco Dorigo.
\newblock Swarm robotics: a review from the swarm engineering perspective.
\newblock \emph{Swarm Intelligence}, 7\penalty0 (1):\penalty0 1--41, 2013.

\bibitem[Brown and Sandholm(2019)]{brown2019superhuman}
Noam Brown and Tuomas Sandholm.
\newblock Superhuman ai for multiplayer poker.
\newblock \emph{Science}, 365\penalty0 (6456):\penalty0 885--890, 2019.

\bibitem[Dann et~al.(2017)Dann, Lattimore, and Brunskill]{dann2017unifying}
Christoph Dann, Tor Lattimore, and Emma Brunskill.
\newblock Unifying pac and regret: Uniform pac bounds for episodic
  reinforcement learning.
\newblock In \emph{Advances in Neural Information Processing Systems}, pages
  5713--5723, 2017.

\bibitem[Dann et~al.(2019)Dann, Li, Wei, and Brunskill]{dann2019policy}
Christoph Dann, Lihong Li, Wei Wei, and Emma Brunskill.
\newblock Policy certificates: Towards accountable reinforcement learning.
\newblock In \emph{International Conference on Machine Learning}, pages
  1507--1516, 2019.

\bibitem[Filar and Vrieze(2012)]{filar2012competitive}
Jerzy Filar and Koos Vrieze.
\newblock \emph{Competitive Markov decision processes}.
\newblock Springer Science \& Business Media, 2012.

\bibitem[Hansen et~al.(2013)Hansen, Miltersen, and Zwick]{hansen2013strategy}
Thomas~Dueholm Hansen, Peter~Bro Miltersen, and Uri Zwick.
\newblock Strategy iteration is strongly polynomial for 2-player turn-based
  stochastic games with a constant discount factor.
\newblock \emph{Journal of the ACM (JACM)}, 60\penalty0 (1):\penalty0 1--16,
  2013.

\bibitem[Hu and Wellman(2003)]{hu2003nash}
Junling Hu and Michael~P Wellman.
\newblock Nash q-learning for general-sum stochastic games.
\newblock \emph{Journal of machine learning research}, 4\penalty0
  (Nov):\penalty0 1039--1069, 2003.

\bibitem[Jaksch et~al.(2010)Jaksch, Ortner, and Auer]{jaksch2010near}
Thomas Jaksch, Ronald Ortner, and Peter Auer.
\newblock Near-optimal regret bounds for reinforcement learning.
\newblock \emph{Journal of Machine Learning Research}, 11\penalty0
  (Apr):\penalty0 1563--1600, 2010.

\bibitem[Jia et~al.(2019)Jia, Yang, and Wang]{jia2019feature}
Zeyu Jia, Lin~F Yang, and Mengdi Wang.
\newblock Feature-based q-learning for two-player stochastic games.
\newblock \emph{arXiv preprint arXiv:1906.00423}, 2019.

\bibitem[Jin et~al.(2018)Jin, Allen-Zhu, Bubeck, and Jordan]{jin2018q}
Chi Jin, Zeyuan Allen-Zhu, Sebastien Bubeck, and Michael~I Jordan.
\newblock Is {Q}-learning provably efficient?
\newblock In \emph{Advances in Neural Information Processing Systems}, pages
  4868--4878, 2018.

\bibitem[Jin et~al.(2019)Jin, Jin, Luo, Sra, and Yu]{jin2019learning}
Chi Jin, Tiancheng Jin, Haipeng Luo, Suvrit Sra, and Tiancheng Yu.
\newblock Learning adversarial markov decision processes with bandit feedback
  and unknown transition.
\newblock \emph{arXiv preprint arXiv:1912.01192}, 2019.

\bibitem[Kearns(1998)]{kearns1998efficient}
Michael Kearns.
\newblock Efficient noise-tolerant learning from statistical queries.
\newblock \emph{Journal of the ACM (JACM)}, 45\penalty0 (6):\penalty0
  983--1006, 1998.

\bibitem[Lattimore and Szepesv{\'a}ri(2018)]{lattimore2018bandit}
Tor Lattimore and Csaba Szepesv{\'a}ri.
\newblock Bandit algorithms.
\newblock 2018.

\bibitem[Littman(1994)]{littman1994markov}
Michael~L Littman.
\newblock Markov games as a framework for multi-agent reinforcement learning.
\newblock In \emph{Machine learning proceedings 1994}, pages 157--163.
  Elsevier, 1994.

\bibitem[Littman(2001)]{littman2001friend}
Michael~L Littman.
\newblock Friend-or-foe q-learning in general-sum games.
\newblock In \emph{ICML}, volume~1, pages 322--328, 2001.

\bibitem[Mossel and Roch(2005)]{mossel2005learning}
Elchanan Mossel and S{\'e}bastien Roch.
\newblock Learning nonsingular phylogenies and hidden markov models.
\newblock In \emph{Proceedings of the thirty-seventh annual ACM symposium on
  Theory of computing}, pages 366--375, 2005.

\bibitem[Neu(2015)]{neu2015explore}
Gergely Neu.
\newblock Explore no more: Improved high-probability regret bounds for
  non-stochastic bandits.
\newblock In \emph{Advances in Neural Information Processing Systems}, pages
  3168--3176, 2015.

\bibitem[OpenAI(2018)]{openaidota}
OpenAI.
\newblock Openai five.
\newblock \url{https://blog.openai.com/openai-five/}, 2018.

\bibitem[Osband and Van~Roy(2016)]{osband2016lower}
Ian Osband and Benjamin Van~Roy.
\newblock On lower bounds for regret in reinforcement learning.
\newblock \emph{arXiv preprint arXiv:1608.02732}, 2016.

\bibitem[Osband et~al.(2014)Osband, Van~Roy, and Wen]{osband2014generalization}
Ian Osband, Benjamin Van~Roy, and Zheng Wen.
\newblock Generalization and exploration via randomized value functions.
\newblock \emph{arXiv preprint arXiv:1402.0635}, 2014.

\bibitem[Radanovic et~al.(2019)Radanovic, Devidze, Parkes, and
  Singla]{radanovic2019learning}
Goran Radanovic, Rati Devidze, David Parkes, and Adish Singla.
\newblock Learning to collaborate in markov decision processes.
\newblock In \emph{International Conference on Machine Learning}, pages
  5261--5270, 2019.

\bibitem[Rosenberg and Mansour(2019)]{rosenberg2019online}
Aviv Rosenberg and Yishay Mansour.
\newblock Online convex optimization in adversarial markov decision processes.
\newblock \emph{arXiv preprint arXiv:1905.07773}, 2019.

\bibitem[Shalev-Shwartz et~al.(2016)Shalev-Shwartz, Shammah, and
  Shashua]{shalev2016safe}
Shai Shalev-Shwartz, Shaked Shammah, and Amnon Shashua.
\newblock Safe, multi-agent, reinforcement learning for autonomous driving.
\newblock \emph{arXiv preprint arXiv:1610.03295}, 2016.

\bibitem[Shapley(1953)]{shapley1953stochastic}
Lloyd~S Shapley.
\newblock Stochastic games.
\newblock \emph{Proceedings of the national academy of sciences}, 39\penalty0
  (10):\penalty0 1095--1100, 1953.

\bibitem[Sidford et~al.(2019)Sidford, Wang, Yang, and Ye]{sidford2019solving}
Aaron Sidford, Mengdi Wang, Lin~F Yang, and Yinyu Ye.
\newblock Solving discounted stochastic two-player games with near-optimal time
  and sample complexity.
\newblock \emph{arXiv preprint arXiv:1908.11071}, 2019.

\bibitem[Silver et~al.(2016)Silver, Huang, Maddison, Guez, Sifre, Van
  Den~Driessche, Schrittwieser, Antonoglou, Panneershelvam, Lanctot,
  et~al.]{silver2016mastering}
David Silver, Aja Huang, Chris~J Maddison, Arthur Guez, Laurent Sifre, George
  Van Den~Driessche, Julian Schrittwieser, Ioannis Antonoglou, Veda
  Panneershelvam, Marc Lanctot, et~al.
\newblock Mastering the game of go with deep neural networks and tree search.
\newblock \emph{nature}, 529\penalty0 (7587):\penalty0 484, 2016.

\bibitem[Silver et~al.(2017)Silver, Schrittwieser, Simonyan, Antonoglou, Huang,
  Guez, Hubert, Baker, Lai, Bolton, et~al.]{silver2017mastering}
David Silver, Julian Schrittwieser, Karen Simonyan, Ioannis Antonoglou, Aja
  Huang, Arthur Guez, Thomas Hubert, Lucas Baker, Matthew Lai, Adrian Bolton,
  et~al.
\newblock Mastering the game of go without human knowledge.
\newblock \emph{nature}, 550\penalty0 (7676):\penalty0 354--359, 2017.

\bibitem[Strehl et~al.(2006)Strehl, Li, Wiewiora, Langford, and
  Littman]{strehl2006pac}
Alexander~L Strehl, Lihong Li, Eric Wiewiora, John Langford, and Michael~L
  Littman.
\newblock {PAC} model-free reinforcement learning.
\newblock In \emph{International Conference on Machine Learning}, pages
  881--888, 2006.

\bibitem[Vinyals et~al.(2019)Vinyals, Babuschkin, Czarnecki, Mathieu, Dudzik,
  Chung, Choi, Powell, Ewalds, Georgiev, et~al.]{vinyals2019grandmaster}
Oriol Vinyals, Igor Babuschkin, Wojciech~M Czarnecki, Micha{\"e}l Mathieu,
  Andrew Dudzik, Junyoung Chung, David~H Choi, Richard Powell, Timo Ewalds,
  Petko Georgiev, et~al.
\newblock Grandmaster level in starcraft ii using multi-agent reinforcement
  learning.
\newblock \emph{Nature}, 575\penalty0 (7782):\penalty0 350--354, 2019.

\bibitem[Watkins(1989)]{watkins1989learning}
Christopher John Cornish~Hellaby Watkins.
\newblock Learning from delayed rewards.
\newblock 1989.

\bibitem[Wei et~al.(2017)Wei, Hong, and Lu]{wei2017online}
Chen-Yu Wei, Yi-Te Hong, and Chi-Jen Lu.
\newblock Online reinforcement learning in stochastic games.
\newblock In \emph{Advances in Neural Information Processing Systems}, pages
  4987--4997, 2017.

\bibitem[Xie et~al.(2020)Xie, Chen, Wang, and Yang]{xie2020learning}
Qiaomin Xie, Yudong Chen, Zhaoran Wang, and Zhuoran Yang.
\newblock Learning zero-sum simultaneous-move markov games using function
  approximation and correlated equilibrium.
\newblock \emph{arXiv preprint arXiv:2002.07066}, 2020.

\bibitem[Yadkori et~al.(2013)Yadkori, Bartlett, Kanade, Seldin, and
  Szepesv{\'a}ri]{yadkori2013online}
Yasin~Abbasi Yadkori, Peter~L Bartlett, Varun Kanade, Yevgeny Seldin, and Csaba
  Szepesv{\'a}ri.
\newblock Online learning in markov decision processes with adversarially
  chosen transition probability distributions.
\newblock In \emph{Advances in neural information processing systems}, pages
  2508--2516, 2013.

\bibitem[Zimin and Neu(2013)]{zimin2013online}
Alexander Zimin and Gergely Neu.
\newblock Online learning in episodic markovian decision processes by relative
  entropy policy search.
\newblock In \emph{Advances in neural information processing systems}, pages
  1583--1591, 2013.

\end{thebibliography}

\clearpage

\appendix
%!TEX root = main.tex

\section{Bellman Equations for Markov Games}
\label{app:bellman}

In this section, we present the Bellman equations for different types of values in Markov games.

\paragraph{Fixed policies.} For any pair of Markov policy $(\mu, \nu)$, by definition of their values in \eqref{eq:V_value} \eqref{eq:Q_value}, we have the following Bellman equations:
\begin{align*}
  Q^{\mu, \nu}_{h}(s, a, b) =  (r_h + \P_h V^{\mu, \nu}_{h+1})(s, a, b), \qquad   
  V^{\mu, \nu}_{h}(s)  =  (\D_{\mu_h\times\nu_h} Q^{\mu, \nu}_h)(s) 
\end{align*}
for all $(s, a, b, h) \in \cS \times \cA \times \cB \times [H]$, where $V^{\mu, \nu}_{H+1}(s) = 0$ for all $s \in \cS_{H+1}$.

\paragraph{Best responses.} For any Markov policy $\mu$ of the max-player, by definition, we have the following Bellman equations for values of its best response:
\begin{align*}
Q^{\mu, \dagger}_{h}(s, a, b) = (r_h + \P_h V^{\mu, \dagger}_{h+1})(s, a, b), \qquad
V^{\mu, \dagger}_{h}(s) = \inf_{\nu \in \Delta_{\cB}} (\D_{\mu_h \times \nu} Q^{\mu, \dagger}_h)(s),
\end{align*}
for all $(s, a, b, h) \in \cS \times \cA \times \cB \times [H]$, where $V^{\mu, \dagger}_{H+1}(s) = 0$ for all $s \in \cS_{H+1}$.

Similarly, for any Markov policy $\nu$ of the min-player, we also have the following symmetric version of Bellman equations for values of its best response:
\begin{align*}
Q^{\dagger, \nu}_{h}(s, a, b) =  (r_h + \P_h V^{\dagger, \nu}_{h+1})(s, a, b), \qquad 
V^{\dagger, \nu}_{h}(s) = \sup_{\mu \in \Delta_{\cA}} (\D_{\mu \times \nu_h} Q^{\dagger, \nu}_h)(s).
\end{align*}
for all $(s, a, b, h) \in \cS \times \cA \times \cB \times [H]$, where $V^{\dagger, \nu}_{H+1}(s) = 0$ for all $s \in \cS_{H+1}$.

\paragraph{Nash equilibria.} Finally, by definition of Nash equilibria in Markov games, we have the following Bellman optimality equations:
\begin{align*}
Q^{\nash}_{h}(s, a, b) = &  (r_h + \P_h V^{\nash}_{h+1})(s, a, b) \\
V^{\nash}_{h}(s) =&
  \sup_{\mu \in \Delta_{\cA}}\inf_{\nu \in \Delta_{\cB}} (\D_{\mu \times \nu} Q^{\nash}_h)(s)
  = \inf_{\nu \in \Delta_{\cB}}\sup_{\mu \in \Delta_{\cA}} (\D_{\mu \times \nu} Q^{\nash}_h)(s).
\end{align*}
for all $(s, a, b, h) \in \cS \times \cA \times \cB \times [H]$, where $V^{\nash}_{H+1}(s) = 0$ for all $s \in \cS_{H+1}$.

% \subsection{Q-learning}
% Q-learning \citep{watkins1989learning} is a classical model-free algorithm for learning Markov decision process. It is one of the most popular algorithms in the literature of reinforcement learning. Nash Q-learning is firstly proposed in \cite{hu2003nash}, which adapted the classical Q-learning to solve multi-agent Markov games. Nash Q-learning features two most important steps:
% \begin{enumerate}
%  \item an online incremental update of $Q$-value functions with parameter $\alpha$.
%  \begin{equation*}
%  Q_h(s_h, a_h, b_h) \setto (1-\alpha)Q_h(s_h, a_h, b_h)+ \alpha(r_h+V_{h+1}(s_{h+1}))
%  \end{equation*}
%  where tuple $(s_h, a_h, b_h, r_h, s_{h+1})$ is the observation in the current step.
%  \item a computation of greedy policies per step.
%  \begin{equation*}
%  (\mu_h(\cdot|s), \nu_h(\cdot|s)) \leftarrow \arg \max_{\mu \in \Delta_{\cA}}\min_{\nu \in \Delta_{\cB}} (\D_{\mu \times \nu} Q_h)(s)
%   = \arg \min_{\nu \in \Delta_{\cB}}\max_{\mu \in \Delta_{\cA}} (\D_{\mu \times \nu} Q_h)(s)
%  \end{equation*}
% \end{enumerate}

\section{Properties of Coarse Correlated Equilibrium}
\label{app:CCE}
% \subsection{CCE}
Recall the definition for CCE in our main paper \eqref{eq:CCE}, we restate it here after rescaling. For any pair of matrix $P, Q \in [0, 1]^{n\times m}$, the subroutine $\textsc{CCE}(P, Q)$ returns a distribution $\pi \in \Delta_{n \times m}$ that satisfies:
\begin{align}
\E_{(a, b) \sim \pi} P(a, b) \ge& \max_{a^\star} \E_{(a, b) \sim \pi} P(a^\star, b)  \label{eq:constraints_CCE}\\
\E_{(a, b) \sim \pi} Q(a, b) \le& \min_{b^\star} \E_{(a, b) \sim \pi} Q(a, b^\star) \nonumber
\end{align}
We make three remarks on CCE. First, a CCE always exists since a Nash equilibrium for a general-sum game with payoff matrices $(P, Q)$ is also a CCE defined by $(P, Q)$, and a Nash equilibrium always exists. Second, a CCE can be efficiently computed, since above constraints \eqref{eq:constraints_CCE} for CCE can be rewritten as $n+m$ linear constraints on $\pi \in \Delta_{n \times m}$, which can be efficiently resolved by standard linear programming algorithm. Third, a CCE in general-sum games needs not to be a Nash equilibrium. However, a CCE in zero-sum games is guaranteed to be a Nash equalibrium.
\begin{proposition}\label{prop:CCE}
Let $\pi = \textsc{CCE}(Q, Q)$, and $(\mu, \nu)$ be the marginal distribution over both players' actions induced by $\pi$. Then $(\mu, \nu)$ is a Nash equilibrium for payoff matrix $Q$.
\end{proposition}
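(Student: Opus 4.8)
The plan is to relate the value of the correlated distribution $\pi$ under $Q$ to the value of the \emph{product} distribution $\mu\times\nu$ of its marginals, show the two values coincide, and then read off that the two CCE constraints must be tight---which is exactly the pair of best-response conditions defining a Nash equilibrium. The key point to keep in mind is that the CCE is defined through the correlated $\pi$, whereas a Nash equilibrium is a statement about $\mu\times\nu$; the entire content of the proposition is that feeding the \emph{same} matrix $Q$ into both slots of $\textsc{CCE}$ collapses this distinction in the zero-sum case.

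First I would set $V_\pi \defeq \E_{(a,b)\sim\pi} Q(a,b)$ and $V_{\mu\times\nu}\defeq \E_{a\sim\mu, b\sim\nu} Q(a,b)$, and observe that in \eqref{eq:constraints_CCE} (taken with $P=Q$) the quantity $\E_{(a,b)\sim\pi} Q(a^\star, b)$ depends on $\pi$ only through its $b$-marginal $\nu$, so the first constraint reads $V_\pi \ge \max_{a^\star} \E_{b\sim\nu} Q(a^\star, b)$. Since a maximum over $a^\star$ dominates the average over $a\sim\mu$, I get $\max_{a^\star}\E_{b\sim\nu} Q(a^\star,b) \ge \E_{a\sim\mu}\E_{b\sim\nu} Q(a,b) = V_{\mu\times\nu}$, hence $V_\pi \ge V_{\mu\times\nu}$. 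By the symmetric argument applied to the second constraint, $V_\pi \le \min_{b^\star}\E_{a\sim\mu} Q(a,b^\star) \le V_{\mu\times\nu}$, hence $V_\pi \le V_{\mu\times\nu}$.

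Combining the two bounds forces $V_\pi = V_{\mu\times\nu}$, and therefore every inequality in both chains must in fact be an equality. In particular $\max_{a^\star}\E_{b\sim\nu} Q(a^\star,b) = V_{\mu\times\nu}$, which says that no pure deviation of the max-player against $\nu$---and hence, by linearity, no mixed deviation $\mu'$---can exceed $V_{\mu\times\nu}$, i.e.\ $\mu$ is a best response to $\nu$. Symmetrically, $\min_{b^\star}\E_{a\sim\mu} Q(a,b^\star) = V_{\mu\times\nu}$ shows $\nu$ is a best response to $\mu$. These are precisely the two conditions for $(\mu,\nu)$ to be a Nash equilibrium of the zero-sum game with payoff matrix $Q$, which completes the argument.

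I do not expect a genuine obstacle here: there is no heavy computation, and the only step requiring care is the sandwich that turns the two CCE inequalities tight. The one thing to state cleanly is why tightness of the scalar inequalities upgrades to the full (mixed-strategy) best-response property---this follows immediately from linearity of the expectation in the deviating player's strategy, so a pure-deviation bound already certifies optimality against all mixed deviations.
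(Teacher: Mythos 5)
Your proof is correct, and it takes a mildly but genuinely different route from the paper's. Both arguments sandwich the correlated value $\E_{(a,b)\sim\pi}Q(a,b)$ between the two CCE constraints, but the anchor point differs: the paper compares against the Nash value $N^\star$ of the matrix game, using that $\max_{a^\star}\E_{b\sim\nu}Q(a^\star,b)\ge N^\star$ and $\min_{b^\star}\E_{a\sim\mu}Q(a,b^\star)\le N^\star$ (which rests on the minimax theorem, i.e.\ the existence of the value), and concludes $\max_{a^\star}\E_{b\sim\nu}Q(a^\star,b)=\min_{b^\star}\E_{a\sim\mu}Q(a,b^\star)=N^\star$. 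You instead compare against the product value $\E_{a\sim\mu,b\sim\nu}Q(a,b)$, using only that a maximum (resp.\ minimum) over pure actions dominates (resp.\ is dominated by) the average under $\mu$ (resp.\ $\nu$); forcing all inequalities tight then yields the two mutual best-response conditions directly. Your version is more elementary and self-contained---it nowhere invokes von Neumann's theorem or the existence of a Nash value---and your closing observation that pure-deviation tightness upgrades to mixed deviations by linearity of the expectation is precisely the step the paper's proof leaves implicit in its final line. What the paper's anchor buys in exchange is the explicit identification of the common value with $N^\star$, which dovetails with the remark following the proposition: running $\textsc{CCE}$ on nearly equal upper and lower $Q$-estimates should produce an approximate Nash equilibrium whose value is close to the Nash value.
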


\begin{proof}[Proof of Proposition \ref{prop:CCE}]
Let $N^\star$ be the value of Nash equilibrium for $Q$. Since $\pi = \textsc{CCE}(Q, Q)$, by definition, we have:
\begin{align*}
\E_{(a, b) \sim \pi} Q(a, b) \ge& \max_{a^\star} \E_{(a, b) \sim \pi} Q(a^\star, b) = \max_{a^\star} \E_{b \sim \nu} Q(a^\star, b) \ge N^\star\\
\E_{(a, b) \sim \pi} Q(a, b) \le& \min_{b^\star} \E_{(a, b) \sim \pi} Q(a, b^\star) =\min_{b^\star} \E_{a \sim \mu} Q(a, b^\star) \le N^\star
\end{align*}
This gives:
\begin{equation*}
\max_{a^\star} \E_{b \sim \nu} Q(a^\star, b) = \min_{b^\star} \E_{a \sim \mu} Q(a, b^\star) = N^\star
\end{equation*}
which finishes the proof.
\end{proof}
Intuitively, a CCE procedure can be used in Nash Q-learning for finding an approximate Nash equilibrium, because the values of upper confidence and lower confidence---$\up{Q}$ and $\low{Q}$ will be eventually very close, so that the preconditions of Proposition \ref{prop:CCE} becomes approximately satisfied.

% \chijin{discussion on CCE being collaborative}

% \subsection{Nash V-learning}

%!TEX root = main.tex

\section{Proof for Nash Q-learning}
\label{app:Q}
In this section, we present proofs for results in Section~\ref{sec:exp-SAB}. 

We denote $V^k, Q^k, \pi^k$ for values and policies \emph{at the beginning} of the $k$-th episode. We also introduce the following short-hand notation
$[\widehat{\P}_{h}^{k}V](s, a, b) := V(s_{h+1}^{k})$.

We will use the following notations several times later: suppose $(s,a,b)$ was taken at the in episodes $k^1,k^2, \ldots $ at the $h$-th step. Since the definition of $k^i$ depends on the tuple $(s,a,b)$ and $h$, we will show the dependence explicitly by writing $k_h^i(s,a,b)$ when necessary and omit it when there is no confusion. We also define $N_{h}^{k}(s,a,b)$ to be the number of times $(s,a,b)$ has been taken \emph{at the beginning} of the $k$-th episode. Finally we denote $n_h^k=N_{h}^{k}\left(  s_{h}^{k},a_{h}^{k},b_{h}^{k} \right) $.

The following lemma is a simple consequence of the update rule in Algorithm~\ref{algorithm:Nash_Q}, which will be used several times later.

\begin{lemma}
   \label{lem:Nash_Q_V}
   Let $t=N_{h}^{k}\left( s,a,b \right) $ and suppose $(s,a,b)$ was previously taken at episodes $k^1,\ldots, k^t < k$ at the $h$-th step. The update rule in Algorithm \ref{algorithm:Nash_Q} is equivalent to the following equations.
   \begin{equation}
      \label{equ:upper-Q-decompose}
   \up{Q}_{h}^{k}(s,a,b)=\alpha _{t}^{0}H+\sum_{i=1}^t{\alpha _{t}^{i}\left[ r_h(s,a,b)+\up{V}_{h+1}^{k^i}(s_{h+1}^{k^i}) +\beta_i \right]}
   \end{equation}
   \begin{equation}
      \label{equ:lower-Q-decompose}
   \low{Q}_{h}^{k}(s,a,b)=\sum_{i=1}^t{\alpha _{t}^{i}\left[ r_h(s,a,b)+\low{V}_{h+1}^{k^i}(s_{h+1}^{k^i}) -\beta_i \right]}
\end{equation}
\end{lemma}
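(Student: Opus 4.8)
The plan is to prove both identities by a single induction on $t = N_{h}^{k}(s,a,b)$, the number of times the triple $(s,a,b)$ has been visited at step $h$ strictly before episode $k$. The two claims have the same structure up to the sign of $\beta_i$ and the initialization value, so I would treat the upper estimate \eqref{equ:upper-Q-decompose} in detail and record the (cosmetic) change for the lower one at the end.

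The first step is to isolate the three algebraic facts about the weights defined in \eqref{eq:alpha_parameter} that drive everything: for every $t \ge 1$ one has $\alpha_t^0 = (1-\alpha_t)\alpha_{t-1}^0$, $\alpha_t^i = (1-\alpha_t)\alpha_{t-1}^i$ for $1 \le i \le t-1$, and $\alpha_t^t = \alpha_t$. Each follows by peeling off the $j=t$ factor from the products defining $\alpha_t^0$ and $\alpha_t^i$, and from the empty-product convention for $\alpha_t^t$; these are exactly the kind of identities collected in Lemma \ref{lem:step_size}.

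For the induction itself, the base case $t=0$ is immediate: the triple is unvisited, $\up{Q}_{h}^{k}(s,a,b)$ retains its initialized value $H$, and the right-hand side of \eqref{equ:upper-Q-decompose} reduces to $\alpha_0^0 H = H$ with an empty sum. For the inductive step, let $k^t$ denote the episode of the $t$-th visit and apply the update in Line \ref{line:Q_up_update} with step count $t$, namely $\up{Q}^{\text{new}} = (1-\alpha_t)\up{Q}^{\text{old}} + \alpha_t[r_h(s,a,b) + \up{V}_{h+1}^{k^t}(s_{h+1}^{k^t}) + \beta_t]$. Substituting the induction hypothesis for $\up{Q}^{\text{old}}$ and applying the three weight identities turns $(1-\alpha_t)\alpha_{t-1}^0$ into $\alpha_t^0$, each old coefficient $(1-\alpha_t)\alpha_{t-1}^i$ into $\alpha_t^i$ for $i \le t-1$, and the coefficient of the freshly added summand into $\alpha_t = \alpha_t^t$, which is precisely \eqref{equ:upper-Q-decompose} at $t$. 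This closes the induction.

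The lower-value identity \eqref{equ:lower-Q-decompose} follows from the identical computation, the only differences being that the initialization contributes $\alpha_t^0 \cdot 0 = 0$ (so no $\alpha_t^0 H$ term survives) and $\beta_i$ enters with a minus sign. I expect no genuine obstacle here: the entire content is bookkeeping of the weight recursion, and the only points requiring care are the empty-product conventions in the base case and the correct identification of the newly added term's coefficient with $\alpha_t^t$.
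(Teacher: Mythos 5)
Your proof is correct and matches the paper's (implicit) argument: the paper states this lemma without proof as a ``simple consequence of the update rule,'' and your induction on $t$ via the weight recursions $\alpha_t^0=(1-\alpha_t)\alpha_{t-1}^0$, $\alpha_t^i=(1-\alpha_t)\alpha_{t-1}^i$ for $i\le t-1$, and $\alpha_t^t=\alpha_t$ is precisely the standard unrolling (as in the single-agent Q-learning analysis of \cite{jin2018q} that the paper adapts). The only bookkeeping points worth one line in a full writeup are that $\up{Q}_h(s,a,b)$ is unchanged between consecutive visits to $(s,a,b)$, and that the value $\up{V}_{h+1}(s_{h+1})$ used in the update during episode $k^i$ equals $\up{V}_{h+1}^{k^i}(s_{h+1}^{k^i})$ because step $h+1$ of that episode has not yet been processed.
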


\subsection{Learning values}
We begin an auxiliary lemma. Some of the analysis in this section is adapted from \cite{jin2018q} which studies Q-learning under the single agent MDP setting.

\begin{lemma}(\citep[Lemma 4.1]{jin2018q})
\label{lem:step_size}
The following properties hold for $\alpha _{t}^{i}$:
% \begin{enumerate}
%    \item $\frac{1}{\sqrt{t}}\le \sum_{i=1}^t{\frac{\alpha _{t}^{i}}{\sqrt{i}}}\le \frac{2}{\sqrt{t}}$.
%    \item $\underset{i\in \left[ t \right]}{\max}\alpha _{t}^{i}\le \frac{2H}{t}$ and $\sum_{i=1}^t{\left( \alpha _{t}^{i} \right) ^2}\le \frac{2H}{t}$ for any $t \ge 1$.
%    \item $\sum_{t=i}^{\infty}{\alpha _{t}^{i}}=1+\frac{1}{H}$ for every $i \ge 1$.
% \end{enumerate}
\begin{enumerate}
\item \label{lem:lr_hoeffding}
$\frac{1}{\sqrt{t}} \le \sum_{i=1}^t \frac{\alpha^i_t}{\sqrt{i}} \le \frac{2}{\sqrt{t}}$ for every $t \ge 1$.
\item \label{lem:lr_property0}
$\max_{i\in[t]} \alpha^i_t \le \frac{2H}{t}$ and $\sum_{i=1}^t (\alpha^i_t)^2 \le \frac{2H}{t}$ for every $t \ge 1$.
\item \label{lem:lr_property}
$\sum_{t=i}^\infty \alpha^i_t = 1 + \frac{1}{H}$ for every $i \geq 1$.
\end{enumerate}
\end{lemma}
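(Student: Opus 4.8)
The plan is to work directly from the definition $\alpha_t^i = \alpha_i \prod_{j=i+1}^t (1-\alpha_j)$ with $\alpha_j = (H+1)/(H+j)$, exploiting two elementary consequences: the recursion $\alpha_t^i = (1-\alpha_t)\,\alpha_{t-1}^i$ for $i<t$ together with $\alpha_t^t = \alpha_t$, and the normalization $\sum_{i=1}^t \alpha_t^i = 1$ already noted in the main text (valid since $\alpha_1=1$, so $\alpha_t^0=0$ for $t\ge 1$). Since $1-\alpha_j = (j-1)/(H+j)$, I would first telescope the factorials in the numerator and denominator to record the closed form
\[
\alpha_t^i = (H+1)\cdot \frac{(t-1)!\,(H+i-1)!}{(i-1)!\,(H+t)!},
\]
which drives the proofs of parts 2 and 3.

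For part 3, I would combine this closed form with the telescoping identity
\[
\frac{(t-1)!}{(H+t)!} = \frac{1}{H}\left[\frac{(t-1)!}{(H+t-1)!} - \frac{t!}{(H+t)!}\right],
\]
so that $\sum_{t=i}^\infty (t-1)!/(H+t)!$ collapses to $\frac{1}{H}\cdot(i-1)!/(H+i-1)!$ (the tail vanishes because $t!/(H+t)!\to 0$ for $H\ge 1$). Substituting back yields $\sum_{t\ge i}\alpha_t^i = (H+1)/H = 1+1/H$ exactly. For part 2, I would observe from the closed form that the ratio $\alpha_t^{i+1}/\alpha_t^i = (H+i)/i \ge 1$, so $\alpha_t^i$ is nondecreasing in $i$ and hence $\max_i \alpha_t^i = \alpha_t^t = (H+1)/(H+t) \le (H+1)/t \le 2H/t$ using $H\ge 1$. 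The sum-of-squares bound then follows with no further work from $\sum_i (\alpha_t^i)^2 \le (\max_i \alpha_t^i)\sum_i \alpha_t^i = \max_i \alpha_t^i \le 2H/t$ by the normalization.

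For part 1, I would induct on $t$ using the recursion, which turns $b_t := \sum_{i=1}^t \alpha_t^i/\sqrt i$ into $b_t = \alpha_t/\sqrt t + (1-\alpha_t)\,b_{t-1}$, with base case $b_1 = \alpha_1 = 1 \in [1,2]$ handled directly and the inductive step run for $t\ge 2$. The lower bound is immediate, since $1/\sqrt{t-1}\ge 1/\sqrt t$ gives $b_t \ge \alpha_t/\sqrt t + (1-\alpha_t)/\sqrt t = 1/\sqrt t$. For the upper bound, substituting $b_{t-1}\le 2/\sqrt{t-1}$ and simplifying with $1-\alpha_t = (t-1)/(H+t)$ reduces the claim to $2\sqrt{t(t-1)} \le H + 2t - 1$, which holds because $2\sqrt{t(t-1)}\le t + (t-1) = 2t-1$ by AM--GM and $H\ge 0$.

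The main obstacle I anticipate is precisely this upper bound in part 1: unlike the other estimates it follows neither from monotonicity nor from a clean telescoping, and the induction only closes because the specific schedule $\alpha_t=(H+1)/(H+t)$ makes the residual inequality reduce to an AM--GM bound with exactly enough slack. Massaging the inductive step into that form, and confirming the slack stays nonnegative for all $t\ge 1$ and $H\ge 1$, is the one place where the particular choice of step size is essential rather than incidental; the remaining parts are routine once the closed form and telescoping identity are in hand.
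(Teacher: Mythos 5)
Your proposal is correct in every step, and I verified the key computations: the closed form $\alpha_t^i = (H+1)\,\frac{(t-1)!\,(H+i-1)!}{(i-1)!\,(H+t)!}$ follows from $1-\alpha_j = (j-1)/(H+j)$ by telescoping the product; the identity $\frac{(t-1)!}{(H+t)!} = \frac{1}{H}\bigl[\frac{(t-1)!}{(H+t-1)!} - \frac{t!}{(H+t)!}\bigr]$ checks out exactly (the bracket equals $(t-1)!\,\frac{(H+t)-t}{(H+t)!}$), and the tail $t!/(H+t)! \le 1/(t+1) \to 0$ for $H\ge 1$; the ratio $\alpha_t^{i+1}/\alpha_t^i = (H+i)/i \ge 1$ gives monotonicity in $i$, so $\max_i \alpha_t^i = \alpha_t = (H+1)/(H+t) \le 2H/t$, and the sum-of-squares bound follows from the normalization $\sum_i \alpha_t^i = 1$ (valid since $\alpha_1 = 1$ forces $\alpha_t^0 = 0$, as you note); and in part 1 the inductive step for the upper bound does reduce, after multiplying through by $(H+t)\sqrt{t}$ and using $(t-1)/\sqrt{t-1} = \sqrt{t-1}$, to $2\sqrt{t(t-1)} \le H + 2t - 1$, which AM--GM settles.

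One point of comparison: the paper itself contains no proof of this lemma --- it imports it wholesale as Lemma 4.1 of \citet{jin2018q}, so there is no in-paper argument to diverge from. Relative to the proof in that cited reference, your part 1 is essentially the same induction on $t$ via the recursion $\alpha_t^i = (1-\alpha_t)\alpha_{t-1}^i$, $\alpha_t^t = \alpha_t$; your treatment of parts 2 and 3 through the explicit factorial closed form and the telescoping identity is a somewhat cleaner and more unified route than the direct product manipulations in the reference, and it makes transparent \emph{why} part 3 is an exact identity rather than a bound (the sum collapses with no slack, whereas parts 1 and 2 each consume the $H\ge 1$ slack). Your closing remark is also apt: the upper bound in part 1 is the only place where the specific schedule $\alpha_t = (H+1)/(H+t)$ is essential, since the inductive step closes precisely because the residual inequality has the AM--GM form with nonnegative slack $H$.
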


We also define $\tilde{\beta} _t:=2\sum_{i=1}^t{\alpha _{t}^{i}\beta_i} \le \cO(\sqrt{H^3\iota/t})$. Now we are ready to prove Lemma~\ref{lem:regret_Nash_Q}.

\begin{proof}[Proof of Lemma~\ref{lem:regret_Nash_Q}]
   We give the proof for one direction and the other direction is similar. For the proof of the first claim, let $t=N_{h}^{k}\left( s,a,b \right) $ and suppose $(s,a,b)$ was previously taken at episodes $k^1,\ldots, k^t < k$ at the $h$-th step. Let $\cF_i$ be the $\sigma$-algebra generated by all the random variables in until the $k^i$-th episode. Then $\{\alpha _{t}^{i}[ ( \widehat{\P}_{h}^{k^i}-\P_h ) V_{h+1}^{\star} ] \left( s,a,b \right)\}_{i=1}^t$ is a martingale differene sequence w.r.t. the filtration $\{\cF_i\}_{i=1}^t$.
    By Azuma-Hoeffding,
   $$
   \left| \sum_{i=1}^t{\alpha _{t}^{i}\left[ \left( \widehat{\P}_{h}^{k^i}-\P_h \right) V_{h+1}^{\star} \right] \left( s,a,b \right)} \right| \le 2H\sqrt{\sum_{i=1}^t{\left( \alpha _{t}^{i} \right) ^2\iota}} \le \tilde{\beta}_t
   $$

Here we prove a stronger version of the first claim by induction: for any $(s,a,b,h,k) \in \cS \times \cA \times \cB \times [H] \times [K]$,
$$ \up{Q}_{h}^{k}(s,a,b) \ge Q^{\star}_h(s,a,b) \ge \low{Q}_{h}^{k}(s,a,b), \,\,\,\, \up{V}_{h}^k(s) \ge V^{\star}_{h}(s) \ge \low{V}_{h}^k(s).
$$

Suppose the guarantee is true for $h+1$, then by the above concentration result, 
    $$
    (\up{Q}_{h}^{k}-Q^{\star}_h)(s,a,b) \ge \alpha _{t}^{0}H+\sum_{i=1}^t{\alpha _{t}^{i}\left( \up{V}_{h+1}^{k^i}-V_{h+1}^{\star} \right) \left( s_{h+1}^{k^i} \right)} \ge 0.
    $$
    
    Also,  
    \begin{align*}
      \up{V}_{h}^k(s) - V^{\star}_{h}(s)=& (\D_{\pi^k_{h}} \up{Q}_{h+1}^k)(s) - \max_{\mu \in \Delta_\cA}\min_{\nu \in \Delta_\cB}(\D_{\mu
      \times \nu} Q_{h+1}^{\star})(s)\\
\ge & \max_{\mu \in \Delta_\cA}
(\D_{\mu \times \nu^k_{h}} \up{Q}_{h}^k)(s) - \max_{\mu \in \Delta_\cA}   (\D_{\mu
\times \nu^k_{h}} Q_{h}^{\star})(s) \ge 0
    \end{align*}
where $\up{Q}_{h}^k (s,a,b) \ge Q_{h}^{\star}(s,a,b)$ has just been proved. The other direction is proved similarly.
% \end{proof}

% Equipped with the above lemmas, we are ready to prove the following Theorem~\ref{lem:bound-phi}, which is the key result to derive the sample complexity guarantee later.
% \begin{theorem}
%    \label{lem:bound-phi}
%    For any $h \in [H]$ and $p \in (0,1)$ , with probability at least $1-p$,
%    $$
%     \sum_{k=1}^K{\left( \up{V}_{h}^{k}-\low{V}_{h}^{k} \right) \left( s_h^k \right)} \le \cO\left( \sqrt{H^4SABT\iota} \right) 
% $$
% As a result,
% \begin{align*}
%     \sum_{k=1}^K{\left( \up{V}_{h}^{k}-V_{h}^{\star} \right) \left( s_h^k \right)} &\le \cO\left( \sqrt{H^4SABT\iota} \right) 
%    \\
%     \sum_{k=1}^K{\left( V_{h}^{\star}-\low{V}_{h}^{k} \right) \left( s_h^k \right)} &\le \cO\left( \sqrt{H^4SABT\iota} \right)   
% \end{align*}
% \end{theorem}

%  \begin{proof}[Proof of Theorem~\ref{lem:bound-phi}]
   Now we continue with the proof of the second claim. Let $t=n_h^k$ and define $\delta _{h}^{k}:=\left( \up{V}_{h}^{k}-\low{V}_{h}^{k} \right) \left( s_{h}^{k} \right)$, then by definition
  \begin{align*}
 \delta _{h}^{k}=&\E_{\left( a,b \right) \sim \pi _{h}^{k}}\left( \up{Q}_{h}^{k}-\low{Q}_{h}^{k} \right) \left( s_{h}^{k},a,b \right) 
 =\left( \up{Q}_{h}^{k}-\low{Q}_{h}^{k} \right) \left( s_{h}^{k},a_{h}^{k},b_{h}^{k} \right) +\zeta_{h}^{k}
 \\
 &\overset{\left( i \right)}{=} \alpha _{t}^{0}H+\sum_{i=1}^t{\alpha _{t}^{i}\delta _{h+1}^{k_h^i(s_{h}^{k},a_{h}^{k},b_{h}^{k})}}+2\tilde{\beta} _t+\zeta_{h}^{k}
  \end{align*}  
 where $(i)$ is by taking the difference of equation~\eqref{equ:upper-Q-decompose} and equation~\eqref{equ:lower-Q-decompose} and 
 $$
 \zeta_{h}^{k}:=\E_{\left( a,b \right) \sim \pi _{h}^{k}}\left( \up{Q}_{h}^{k}-\low{Q}_{h}^{k} \right) \left( s_{h}^{k},a,b \right) -\left( \up{Q}_{h}^{k}-\low{Q}_{h}^{k} \right) \left( s_{h}^{k},a_{h}^{k},b_{h}^{k} \right)
 $$
 is a martingale difference sequence.

 Taking the summation w.r.t. $k$, we begin with the first two terms, 
   $$
   \sum_{k=1}^K{\alpha _{n_{h}^{k}}^{0}H}=\sum_{k=1}^K{H\mathbb{I}\left\{ n_{h}^{k}=0 \right\}}\le SABH
   $$
   $$
   \sum_{k=1}^K{\sum_{i=1}^{n_{h}^{k}}{\alpha _{n_{h}^{k}}^{i}\delta _{h+1}^{k_h^i\left( s_{h}^{k},a_{h}^{k},b_{h}^{k} \right)}}}\overset{\left( i \right)}{\le} \sum_{k'=1}^K{\delta _{h+1}^{k'}\sum_{i=n_{h}^{k'}+1}^{\infty}{\alpha _{i}^{n_{h}^{k'}}}}\overset{\left( ii \right)}{\le} \left( 1+\frac{1}{H} \right) \sum_{k=1}^K{\delta _{h+1}^{k}}.
   $$
  where $(i)$ is by changing the order of summation and $(ii)$ is by Lemma~\ref{lem:step_size}.

   Plugging them in, 
   \begin{align*}
      \sum_{k=1}^K{\delta _{h}^{k}}\le SABH+\left( 1+\frac{1}{H} \right) \sum_{k=1}^K{\delta _{h+1}^{k}}+\sum_{k=1}^K{\left( 2\tilde{\beta} _{n_{h}^{k}}+\zeta_{h}^{k} \right)}.
   \end{align*}

   Recursing this argument for $h \in [H]$ gives
   $$
   \sum_{k=1}^K{\delta _{1}^{k}}\le eSABH^2+2e\sum_{h=1}^H\sum_{k=1}^K{ \tilde{\beta} _{n_{h}^{k}}}+\sum_{h=1}^H\sum_{k=1}^K{(1+1/H)^{h-1}\zeta_{h}^{k}}
   $$

   By pigeonhole argument,
   $$
   \sum_{k=1}^K{\tilde{\beta} _{n_{h}^{k}}}\le \cO\left( 1 \right) \sum_{k=1}^K{\sqrt{\frac{H^3\iota}{n_{h}^{k}}}}=\cO\left( 1 \right) \sum_{s,a,b}{\sum_{n=1}^{N_{h}^{K}\left( s,a,b \right)}{\sqrt{\frac{H^3\iota}{n}}}}\le \cO\left( \sqrt{H^3SABK\iota} \right) =\cO\left( \sqrt{H^2SABT\iota} \right) 
$$

By Azuma-Hoeffding,
$$
\sum_{h=1}^H{\sum_{k=1}^K{(1+1/H)^{h-1}\zeta _{h}^{k}}}\le e\sqrt{2H^3K\iota}=eH\sqrt{2T\iota}
$$
with high probability. The proof is completed by putting everything together.
 
%  To prove the second claim, by By Lemma~\ref{lem:Nash_q_ULCB}, $
%  \low{V}_{h}^{k}(s)  \le V_{h}^{\star}(s) \le \up{V}_{h}^{k}(s)
%  $. Therefore,
%  $$
%  \sum_{k=1}^K{\left( \up{V}_{h}^{k}-V_{h}^{\star} \right) \left( s_h^k \right)} \le \sum_{k=1}^K{\left( \up{V}_{h}^{k}-\low{V}_{h}^{k} \right) \left( s_h^k \right)} \le \cO\left( \sqrt{H^4SABT\iota} \right) 
%  $$
 \end{proof}

 \subsection{Certified policies} \label{app:Q_policy}

 Algorithm~\ref{algorithm:Nash_Q} only learns the value of game but itself cannot give a near optimal policy for each player. In this section, we analyze the certified policy based on the above exploration process (Algorithm~\ref{algorithm:Q-policy}) and prove the sample complexity guarantee. To this end, we need to first define a new group of policies $\hat{\mu}_{h}^{k}$ to facilitate the proof , and $\hat{\nu}_{h}^{k}$ are defined similarly. Notice $\hat{\mu}_{h}^{k}$ is related to $\hat{\mu}$ defined in Algorithm~\ref{algorithm:Q-policy} by $\hat{\mu}=\frac{1}{k}\sum_{i=1}^{k}{\hat{\mu}_{1}^{i}}$.

\begin{algorithm}[h]
   \caption{Policy $\hat{\mu}_{h}^{k}$}
   \label{algorithm:Q-sampling}
   \begin{algorithmic}[1]
      \STATE {\bfseries Initialize:} $k' \setto k$.
      \FOR{step $h'=h,h+1,\dots,H$}
      \STATE Observe $s_{h'}$.
      \STATE Sample $a_{h'} \sim \mu^{k'}_h(s_{h'})$. 
      \STATE Observe $b_{h'}$.
      \STATE $t \setto N^{k'}_h(s_{h'},a_{h'},b_{h'})$.
      \STATE Sample $i$ from $[t]$ with $\P(i)=\alpha^i_t$.
      \STATE $k' \setto k^i_{h'}(s_{h'},a_{h'},b_{h'})$
      \ENDFOR
   \end{algorithmic}
\end{algorithm}

We also define $\hat{\mu}_{h+1}^{k}[s,a,b]$ for $ h \le H-1$, which is na intermediate algorithm only involved in the analysis. The above two policies are related by $\hat{\mu}_{h+1}^{k}\left[ s,a,b \right] =\sum_{i=1}^t{\alpha _{t}^{i}}\hat{\mu}_{h+1}^{k}$ where $t=N_{h}^{k}\left( s,a,b \right) $. $\hat{\nu}_{h+1}^{k}[s,a,b]$ is defined similarly. 

\begin{algorithm}[h]
   \caption{Policy $\hat{\mu}_{h+1}^{k}[s,a,b]$}
   \label{algorithm:Q-sampling-sab}
   \begin{algorithmic}[1]
      \STATE $t \setto N^{k}_{h}(s,a,b)$.
      \STATE Sample $i$ from $[t]$ with $\P(i)=\alpha^i_t$.
      \STATE $k' \setto k^i_h(s,a,b)$
      \FOR{step $h'=h+1,\dots,H$}
      \STATE Observe $s_{h'}$.
      \STATE Sample $a_{h'} \sim \mu^{k'}_h(s_{h'})$. 
      \STATE Observe $b_{h'}$.
      \STATE $t \setto N^{k'}_h(s_{h'},a_{h'},b_{h'})$.
      \STATE Sample $i$ from $[t]$ with $\P(i)=\alpha^i_t$.
      \STATE $k' \setto k^i_{h'}(s_{h'},a_{h'},b_{h'})$
      \ENDFOR
   \end{algorithmic}
\end{algorithm}

Since the policies defined in Algorithm~\ref{algorithm:Q-sampling} and Algorithm~\ref{algorithm:Q-sampling-sab} are non-Markov, many notations for values of Markov policies are no longer valid here. To this end, we need to define the value and Q-value of general policies starting from step $h$, if the general policies starting from the $h$-th step do not depends the history before the $h$-th step. Notice the special case $h=1$ has already been covered in Section~\ref{sec:prelim}. For a pair of general policy $(\mu, \nu)$ which \emph{does not depend} on the hostory before the $h$-th step, we can still use the same definitions \eqref{eq:V_value} and \eqref{eq:Q_value} to define their value $V_h^{\mu, \nu}(s)$ and $Q_h^{\mu, \nu}(s,a,b)$ at step $h$. We can also define the best response $\nu^\dagger(\mu)$ of a general policy $\mu$ as the minimizing policy so that $V_h^{\mu, \dagger}(s) \equiv V_h^{\mu, \nu^\dagger(\mu)}(s) = \inf_{\nu} V_h^{\mu, \nu}(s)$ at step $h$. Similarly, we can define $Q_h^{\mu, \dagger}(s,a,b) \equiv Q_h^{\mu, \nu^\dagger(\mu)}(s,a,b) = \inf_{\nu} Q_h^{\mu, \nu}(s,a,b)$. As before, the best reponse of a general policy is not necessarily Markovian.

It should be clear from the definition of Algorithm~\ref{algorithm:Q-sampling} and Algorithm~\ref{algorithm:Q-sampling-sab} that $\hat{\mu}_{h}^{k}$, $\hat{\nu}_{h}^{k}$, $\hat{\mu}_{h+1}^{k}[s,a,b]$ and $\hat{\nu}_{h+1}^{k}[s,a,b]$ does not depend on the history before step $h$, therefore related value and Q-value functions are well defined for the corresponding steps. Now we can show the policies defined above are indeed certified.

\begin{lemma}
   \label{lem:Nash_q_ULCB_policy}  
   For any $p \in (0,1)$, with probability at least $1-p$, the following holds for any $(s,a,b,h,k) \in \cS \times \cA \times \cB \times [H] \times [K]$,
   $$
   \up{Q}_{h}^{k}(s,a,b) \ge Q_{h}^{\dag ,\hat{\nu}_{h+1}^k[s,a,b]}(s,a,b), \,\,\, \up{V}_{h}^{k}(s) \ge  V_{h}^{\dag ,\hat{\nu}_{h}^{k}}(s)
   $$
   $$
   \low{Q}_{h}^{k}(s,a,b) \le Q_{h}^{\hat{\mu}_{h+1}^k[s,a,b],\dag}(s,a,b), \,\,\, \low{V}_{h}^{k}(s) \le  V_{h}^{\hat{\mu}_{h}^{k},\dag}(s)
   $$
\end{lemma}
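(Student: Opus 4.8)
The plan is to prove Lemma~\ref{lem:Nash_q_ULCB_policy} by induction on $h$, going downward from $h=H+1$ to $h=1$, establishing all four inequalities simultaneously. I will focus on the upper bounds $\up{Q}_h^k(s,a,b) \ge Q_h^{\dagger, \hat{\nu}_{h+1}^k[s,a,b]}(s,a,b)$ and $\up{V}_h^k(s) \ge V_h^{\dagger, \hat{\nu}_h^k}(s)$; the lower bounds follow by the symmetric argument with min and max interchanged. The base case $h=H+1$ holds trivially since both sides are zero. The key structural fact I will exploit is the decomposition \eqref{equ:upper-Q-decompose} from Lemma~\ref{lem:Nash_Q_V}, which writes $\up{Q}_h^k(s,a,b)$ as a convex combination $\alpha_t^0 H + \sum_{i=1}^t \alpha_t^i[r_h + \up{V}_{h+1}^{k^i}(s_{h+1}^{k^i}) + \beta_i]$, together with the defining property of the mixed policy $\hat{\nu}_{h+1}^k[s,a,b]$, namely that it samples an episode index $i \in [t]$ with probability $\alpha_t^i$ and then follows $\hat{\nu}_{h+1}^{k^i}$.

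The core of the induction step is to connect the Q-value guarantee at level $h$ to the V-value guarantee at level $h+1$. First I would show the Q-to-V implication: assuming $\up{Q}_{h+1}^{k'}(s,a,b) \ge Q_{h+1}^{\dagger, \hat{\nu}_{h+1}^{k'}[s,a,b]}(s,a,b)$ at level $h+1$, the CCE property \eqref{eq:CCE} gives $\up{V}_h^k(s) = (\D_{\pi_h^k}\up{Q}_h^k)(s) \ge \max_{a^\star}\E_{(a,b)\sim \pi_h^k}\up{Q}_h^k(a^\star,b)$, and since $V_h^{\dagger,\hat\nu_h^k}(s) = \sup_{\mu}(\D_{\mu\times \nu}\cdots)$ against the best response, I can bound $V_h^{\dagger,\hat\nu_h^k}(s) \le \max_{a^\star}\E_{b\sim \nu_h^k}Q_h^{\dagger, \hat{\nu}_{h+1}^k[\cdot]}(s,a^\star,b)$ using the Bellman equation for best-response values in Appendix~\ref{app:bellman}. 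The marginal $\nu_h^k$ of $\pi_h^k$ is exactly the first-step distribution of $\hat{\nu}_h^k$, so matching the two expressions via the inductive Q-bound closes this half. Second, for the V-to-Q step, I would take the difference $\up{Q}_h^k(s,a,b) - Q_h^{\dagger,\hat{\nu}_{h+1}^k[s,a,b]}(s,a,b)$, expand the first term by \eqref{equ:upper-Q-decompose} and the second by averaging the Bellman best-response equation over the sampled index $i$ with weights $\alpha_t^i$; the rewards $r_h$ cancel, and the remaining term is $\sum_i \alpha_t^i[\up{V}_{h+1}^{k^i}(s_{h+1}^{k^i}) + \beta_i - (\P_h V_{h+1}^{\dagger,\hat\nu_{h+1}^{k^i}})(s,a,b)]$.

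The main obstacle will be controlling this last expression, since it mixes the \emph{empirical} transition $\up{V}_{h+1}^{k^i}(s_{h+1}^{k^i})$ (evaluated at the realized next state) against the \emph{expected} transition $\P_h V_{h+1}^{\dagger,\hat\nu_{h+1}^{k^i}}(s,a,b)$ under the best-response value. I would split this into a concentration piece and a monotonicity piece: write $\up{V}_{h+1}^{k^i}(s_{h+1}^{k^i}) - \P_h V_{h+1}^{\dagger,\hat\nu_{h+1}^{k^i}}(s,a,b) = [(\widehat\P_h^{k^i} - \P_h)V_{h+1}^\star](s,a,b) + \P_h(\up{V}_{h+1}^{k^i} - V_{h+1}^{\dagger,\hat\nu_{h+1}^{k^i}})(s,a,b) + [\widehat\P_h^{k^i}(\up{V}_{h+1}^{k^i} - V_{h+1}^\star)](s,a,b)$, being careful that the fluctuation term is a martingale difference sequence controllable by Azuma--Hoeffding exactly as in the proof of Lemma~\ref{lem:regret_Nash_Q}, bounded by $\tilde\beta_t \le \sum_i \alpha_t^i \beta_i$, which is precisely what the added bonuses $\beta_i$ are designed to absorb. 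The genuinely delicate point, and where I expect to spend the most care, is that the index $k^i$ is itself a random (stopping-time-like) quantity and the value function $\up{V}_{h+1}^{k^i}$ depends on it, so I must verify that the filtration $\{\cF_i\}$ is set up so that $V_{h+1}^\star$ (a fixed, sample-independent object) is what appears in the martingale, while the genuinely sample-dependent deviations $\up{V}_{h+1}^{k^i} - V_{h+1}^{\dagger,\hat\nu_{h+1}^{k^i}}$ are handled by the inductive hypothesis at level $h+1$ rather than by concentration. Once this bookkeeping is correct, the remaining nonnegative terms, combined with the inductive hypothesis applied under $\P_h$, yield the desired inequality, completing the induction.
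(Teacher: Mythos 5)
Your overall architecture matches the paper's proof: backward induction establishing all four inequalities, the CCE property \eqref{eq:CCE} for the Q-to-V step with the marginal $\nu_h^k$ matching the first step of $\hat{\nu}_h^k$, and the decomposition \eqref{equ:upper-Q-decompose} combined with the mixture structure of $\hat{\nu}_{h+1}^k[s,a,b]$ for the V-to-Q step. The genuine gap is in your concentration decomposition. First, as written it is not an identity: summing your three terms gives
\begin{align*}
&\brac{(\widehat{\P}_h^{k^i}-\P_h)V_{h+1}^\star}(s,a,b) + \brac{\P_h\paren{\up{V}_{h+1}^{k^i}-V_{h+1}^{\dagger,\hat{\nu}_{h+1}^{k^i}}}}(s,a,b) + \brac{\widehat{\P}_h^{k^i}\paren{\up{V}_{h+1}^{k^i}-V_{h+1}^\star}}(s,a,b)\\
&\qquad = \up{V}_{h+1}^{k^i}(s_{h+1}^{k^i}) - \brac{\P_h V_{h+1}^{\dagger,\hat{\nu}_{h+1}^{k^i}}}(s,a,b) + \brac{\P_h\paren{\up{V}_{h+1}^{k^i}-V_{h+1}^\star}}(s,a,b),
\end{align*}
which exceeds the quantity you need to lower-bound by the extra term $\P_h(\up{V}_{h+1}^{k^i}-V_{h+1}^\star)\ge 0$, an uncontrolled quantity of potential size $H$ (e.g.\ early on, when $\up{V}$ is still near its initialization at $H$); lower-bounding your three terms therefore says nothing about the target. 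Second, and more fundamentally, the design principle you state---that only the fixed $V_{h+1}^\star$ should appear in the martingale, with all sample-dependent deviations absorbed by the induction hypothesis---cannot be implemented. Correcting the algebra, the residual is $(\widehat{\P}_h^{k^i}-\P_h)(\up{V}_{h+1}^{k^i}-V_{h+1}^\star)$, an unsigned fluctuation of a random function that the pointwise induction hypothesis cannot control; and the alternative route through $\P_h(V_{h+1}^\star - V_{h+1}^{\dagger,\hat{\nu}_{h+1}^{k^i}})$ has the \emph{wrong sign}, since $V_{h+1}^{\dagger,\nu}\ge V_{h+1}^\star$ for every fixed $\nu$ (the best response to any specific opponent is at least the minimax value). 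This sign reversal is precisely why the optimism argument in the proof of Lemma~\ref{lem:regret_Nash_Q}, which does route its martingale through $V^\star$, does not transfer to this lemma.

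The paper's resolution is the opposite of yours: it applies Azuma--Hoeffding \emph{directly} to the sequence $\alpha_t^i\brac{r_h(s,a,b)+V_{h+1}^{\dagger,\hat{\nu}_{h+1}^{k^i}}(s_{h+1}^{k^i})+\beta_i}$. This is legitimate because $\hat{\nu}_{h+1}^{k^i}$ is constructed only from data prior to episode $k^i$, hence is $\cF_i$-measurable; conditionally on $\cF_i$ the function $V_{h+1}^{\dagger,\hat{\nu}_{h+1}^{k^i}}$ is fixed, the transition $s_{h+1}^{k^i}$ is a fresh sample, and $(\widehat{\P}_h^{k^i}-\P_h)V_{h+1}^{\dagger,\hat{\nu}_{h+1}^{k^i}}$ is a bona fide martingale difference bounded by $H$, absorbed by the bonus $\beta_i$. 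The induction hypothesis is then invoked \emph{pointwise at the realized next state}, $\up{V}_{h+1}^{k^i}(s_{h+1}^{k^i}) \ge V_{h+1}^{\dagger,\hat{\nu}_{h+1}^{k^i}}(s_{h+1}^{k^i})$, not under $\P_h$. (An equivalent repair concentrates $(\widehat{\P}_h^{k^i}-\P_h)\up{V}_{h+1}^{k^i}$, also previsible, and applies the hypothesis under $\P_h$.) This previsibility---each sample used once, against value functions and certified policies determined before it was drawn---is exactly the Q-learning property the paper emphasizes, and is what your filtration bookkeeping should resolve to. A smaller omission: passing from $\sum_{i=1}^t\alpha_t^i Q_h^{\dagger,\hat{\nu}_{h+1}^{k^i}}(s,a,b)$ to $Q_h^{\dagger,\hat{\nu}_{h+1}^{k}[s,a,b]}(s,a,b)$ requires pulling the maximum over $\mu$ outside the sum, $\max_\mu \sum_i \alpha_t^i Q_h^{\mu,\hat{\nu}_{h+1}^{k^i}} \le \sum_i \alpha_t^i Q_h^{\dagger,\hat{\nu}_{h+1}^{k^i}}$, a step your phrase ``averaging the Bellman best-response equation over the sampled index'' glosses over.
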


 \begin{proof}[Proof of Lemma~\ref{lem:Nash_q_ULCB_policy}]
    We first prove this for $h=H$. 
 \begin{align*}
\up{Q}_{H}^{k}(s,a,b)=&\alpha _{t}^{0}H+\sum_{i=1}^t{\alpha _{t}^{i}\left[ r_H(s,a,b)+\beta_i \right]}
\\
 \ge & r_H(s,a,b)
 =  Q_{H}^{\dag ,\hat{\nu}_{H+1}^{k}}\left( s,a,b \right)
 \end{align*}
 because $H$ is the last step and
 \begin{align*}
   \up{V}_{H}^{k}(s)=&(\D_{\pi^k_{H}} \up{Q}_{H}^k)(s) 
   \ge  \sup_{\mu \in \Delta_\cA}
   (\D_{\mu \times \nu^k_{H}} \up{Q}_{H}^k)(s)
   \\
    \ge & \sup_{\mu \in \Delta_\cA}
   (\D_{\mu \times \nu^k_{H}} r_{H})(s)
    = V_{H}^{\dag ,\nu_H^k}\left( s \right)
    =  V_{H}^{\dag ,\hat{\nu}_H^k}\left( s \right)
    \end{align*}
    because $\pi_H^k$ is CCE, and by definition $\hat{\nu}_{H}^{k}=\nu_{H}^{k}$.

   Now suppose the claim is true for $h+1$, consider the $h$ case. Consider a fixed tuple $(s,a,b)$ and let $t=N_{h}^{k}\left( s,a,b \right) $. Suppose $(s,a,b)$ was previously taken at episodes $k^1,\ldots, k^t < k$ at the $h$-th step. Let $\cF_i$ be the $\sigma$-algebra generated by all the random variables in until the $k^i$-th episode. Then $\{\alpha _{t}^{i}[ r_h(s,a,b)+V_{h+1}^{\dag ,\hat{\nu}_{h+1}^{k^i}}(s_{h+1}^{k^i}) +\beta_i ]\}_{i=1}^t$ is a martingale differene sequence w.r.t. the filtration $\{\cF_i\}_{i=1}^t$.
   By Azuma-Hoeffding and the definition of $b_i$, 
   $$
  \sum_{i=1}^t{\alpha _{t}^{i}\left[ r_h(s,a,b)+V_{h+1}^{\dag ,\hat{\nu}_{h+1}^{k^i}}(s_{h+1}^{k^i}) +\beta_i \right]} \ge \sum_{i=1}^t{\alpha _{t}^{i}Q_{h}^{\dag ,\hat{\nu}_{h+1}^{k^i}}(s,a,b)}
   $$
   with high probability. Combining this with the induction hypothesis,

   \begin{align*}
      \up{Q}_{h}^{k}(s,a,b)=&\alpha _{t}^{0}H+\sum_{i=1}^t{\alpha _{t}^{i}\left[ r_h(s,a,b)+\up{V}_{h+1}^{k^i}(s_{h+1}^{k^i}) +\beta_i \right]}
      \\
      \ge & \sum_{i=1}^t{\alpha _{t}^{i}\left[ r_h(s,a,b)+V_{h+1}^{\dag ,\hat{\nu}_{h+1}^{k^i}}(s_{h+1}^{k^i}) +\beta_i \right]}
      \ge \sum_{i=1}^t{\alpha _{t}^{i}Q_{h}^{\dag ,\hat{\nu}_{h+1}^{k^i}}(s,a,b)}
\\
\overset{\left( i \right)}{\ge} & \underset{\mu}{\max}\sum_{i=1}^t{\alpha _{t}^{i}Q_{h}^{\mu ,\hat{\nu}_{h+1}^{k^i}}(s,a,b)}
= Q_{h}^{\dag ,\hat{\nu}_{h+1}^{k}[s,a,b]}(s,a,b)
\end{align*}
where we have taken the maximum operator out of the summation in $(i)$,which does not increase the sum.

On the other hand,
\begin{align*}
   \up{V}_{h}^{k}(s)=&(\D_{\pi^k_{h}} \up{Q}_{h}^k)(s)
   \overset{\left( i \right)}{\ge} \sup_{\mu \in \Delta_\cA}
   (\D_{\mu \times \nu^k_{h}} \up{Q}_{h}^k)(s) 
   \\
   \overset{\left( ii \right)}{\ge} & \max_{a \in \cA} \E_{b \sim \nu^k_{h} }  Q_{h}^{\dag ,\hat{\nu}_{h+1}^{k}[s,a,b]}(s,a,b)
   = V_{h}^{\dag ,\hat{\nu}_{h}^{k}}(s)
\end{align*}
where $(i)$ is by the definition of CCE and $(ii)$ is the induction hypothesis. The other direction is proved by performing smilar arguments on $\low{Q}_{h}^{k}(s,a,b)$, $Q_{h}^{\hat{\mu}_{h+1}^k[s,a,b],\dag}(s,a,b)$, $\low{V}_{h}^{k}(s)$ and $ V_{h}^{\hat{\mu}_{h}^{k},\dag}(s)$.
 \end{proof}

Finally we give the theoretical guarantee of the policies defined above.

\begin{proof}[Proof of Theorem~\ref{thm:sample_Nash_Q}]
   By lemma~\ref{lem:Nash_q_ULCB_policy}, we have
$$
  \sum_{k=1}^K{\left(V_{1}^{\dag ,\hat{\nu}_{1}^k}-V_{1}^{\hat{\mu}_{1}^k,\dag } \right) \left( s_1 \right)}\le \sum_{k=1}^K{\left(\up{V}_{1}^{k}-\low{V}_{1}^{k} \right) \left( s_1 \right)} 
$$
and Lemma~\ref{lem:regret_Nash_Q} upper bounds this quantity by
$$
\sum_{k=1}^K{\left(V_{1}^{\dag ,\hat{\nu}_{1}^k}-V_{1}^{\hat{\mu}_{1}^k,\dag } \right) \left( s_1 \right)}\le \cO\left( \sqrt{H^4SABT\iota} \right) 
$$

By definition of the induced policy, with probability at least $1-p$, if we run Nash Q-learning (Algorithm~\ref{algorithm:Nash_Q}) for $K$ episodes with 
\begin{equation*}
K \ge \Omega\left(\frac{H^5 SAB\iota}{\epsilon^2}\right),
\end{equation*}
its induced policies $(\hat{\mu}, \hat{\nu})$ (Algorithm \ref{algorithm:Q-policy}) will be $\epsilon$-optimal in the sense $V^{\dagger, \hat{\nu}}_1(s_1) - V^{\hat{\mu}, \dagger}_1(s_1) \le \epsilon$.
\end{proof}

%!TEX root = main.tex

\section{Proof for Nash V-learning}
\label{sec:pf_V}

% \begin{algorithm}[t]
% % \setstretch{1.2}
%    \caption{Optimistic Nash V-learning (the max-player version)}
%    \label{algorithm:Nash_V}
%    \begin{algorithmic}[1]
%       \STATE {\bfseries Initialize:} for any $(s, a, b, h)$,
%       $\up{V}_{h}(s)\setto H$, $\up{L}_h(s,a) \setto 0$, $N_{h}(s)\setto 0$, $\mu_h(a|s) \setto 1/A$.
%       \FOR{episode $k=1,\dots,K$}
%       \STATE receive $s_1$.
%       \FOR{step $h=1,\dots, H$}
%       \STATE take action $a_h \sim \mu_h(\cdot |s_h)$, observe the action $b_h$ from opponent.
%       \STATE observe reward $r_h(s_h, a_h, b_h)$ and next state
%       $s_{h+1}$.
%       \STATE $t=N_{h}(s_h)\setto N_{h}(s_h) + 1$.
%       \STATE $\up{V}_h(s_h) \setto (1-\alpha_t)\up{V}_h(s_h)+ \alpha_t(r_h(s_h, a_h, b_h)+\up{V}_{h+1}(s_{h+1})+\up{\beta}_t)$.
%       \FOR{all $a \in \cA$}
%       \STATE $\up{\ell}^t_h(s_h, a) \setto [H-r_h(s_h, a_h, b_h)-\up{V}_{h+1}(s_{h+1})] \mathbb{I}\{a_h=a\}/[\mu_h(a_h|s_{h}) +\up{\eta}_t]$.
%       \STATE $\up{L}_h( s_h, a)  \setto ( 1-\alpha _t )\up{L}_h( s_h,a ) +\alpha_t \up{\ell}_h(s_h, a)$.
%       \ENDFOR
%       \STATE set $\mu_h(\cdot |s_h) \propto \exp[-(\up{\eta}_t/\alpha_t) \up{L}_h( s_h, \cdot)]$.
%       \ENDFOR
%       \ENDFOR
%    \end{algorithmic}
% \end{algorithm}

In this section, we present proofs of the results in Section~\ref{sec:exp-SA+B}.
% We begin with the proof of Lemma~\ref{lem:bandit-regret}.
We denote $V^k, \mu^k, \nu^k$ for values and policies \emph{at the beginning} of the $k$-th episode. We also introduce the following short-hand notation
$[\widehat{\P}_{h}^{k}V](s, a, b) := V(s_{h+1}^{k})$.

We will use the following notations several times later: suppose the state $s$ was visited at episodes $k^1,k^2, \ldots $ at the $h$-th step. Since the definition of $k^i$ depends on the state $s$ , we will show the dependence explicitly by writing $k_h^i(s)$ when necessary and omit it when there is no confusion. We also define $N_{h}^{k}(s)$ to be the number of times the state $s$ has been visited \emph{at the beginning} of the $k$-th episode. Finally we denote $n_h^k=N_{h}^{k}\left(  s_{h}^{k} \right) $. Notice the definitions here are different from that in Appendix~\ref{app:Q}.

The following lemma is a simple consequence of the update rule in Algorithm~\ref{algorithm:Nash_V}, which will be used several times later.

\begin{lemma}
   \label{lem:Nash_V_V}
   Let $t=N_{h}^{k}\left( s \right) $ and suppose $s$ was previously visited at episodes $k^1,\ldots, k^t < k$ at the $h$-th step. The update rule in Algorithm \ref{algorithm:Nash_V} is equivalent to the following equations.
   \begin{equation}
      \label{equ:upper-V-decompose}
      \up{V}_{h}^{k}(s)=\alpha _{t}^{0}H+\sum_{i=1}^t{\alpha _{t}^{i}\left[ r_h(s,a_h^{k^i},b_h^{k^i})+\up{V}_{h+1}^{k^i}(s_{h+1}^{k^i}) +\up{\beta}_i \right]}
   \end{equation}
   \begin{equation}
      \label{equ:lower-V-decompose}
      \low{V}_{h}^{k}(s)=\sum_{i=1}^t{\alpha _{t}^{i}\left[ r_h(s,a_h^{k_h^i},b_h^{k_h^i})+\low{V}_{h+1}^{k_h^i}(s_{h+1}^{k_h^i}) -\low{\beta}_i \right]}
\end{equation}
\end{lemma}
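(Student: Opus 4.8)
The plan is to prove both identities \eqref{equ:upper-V-decompose} and \eqref{equ:lower-V-decompose} by induction on the visitation count $t=N_h^k(s)$, mirroring the unfolding already used for Nash Q-learning in Lemma~\ref{lem:Nash_Q_V}. Fix $(s,h)$ and let $k^1<\cdots<k^t<k$ be the episodes in which $s$ was visited at step $h$ before episode $k$. The only facts about the step sizes I need are the three elementary recursions that follow directly from the definition \eqref{eq:alpha_parameter}: $\alpha_t^0=(1-\alpha_t)\alpha_{t-1}^0$, $\alpha_t^i=(1-\alpha_t)\alpha_{t-1}^i$ for $1\le i\le t-1$, and $\alpha_t^t=\alpha_t$ (the last from the empty product $\prod_{j=t+1}^t(1-\alpha_j)=1$). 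For the base case $t=0$ the state has never been visited, so $\up{V}_h^k(s)=H$ and $\low{V}_h^k(s)=0$ by initialization; the right-hand sides reduce to $\alpha_0^0 H=H$ and the empty sum $0$ respectively, using $\alpha_0^0=1$, so both identities hold.

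For the inductive step, assume the claim after $t-1$ visits and consider the update triggered at episode $k^t$, the $t$-th visit. Let $V^{\mathrm{pre}}$ denote the value of $\up{V}_h(s)$ stored just before this update; by the inductive hypothesis $V^{\mathrm{pre}}=\alpha_{t-1}^0 H+\sum_{i=1}^{t-1}\alpha_{t-1}^i[r_h+\up{V}_{h+1}^{k^i}+\up{\beta}_i]$. The (untruncated) $\up{V}$-update in Algorithm~\ref{algorithm:Nash_V} sets $\up{V}_h(s)$ to $(1-\alpha_t)V^{\mathrm{pre}}+\alpha_t[r_h(s,a_h^{k^t},b_h^{k^t})+\up{V}_{h+1}^{k^t}(s_{h+1}^{k^t})+\up{\beta}_t]$. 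Substituting the expression for $V^{\mathrm{pre}}$ and collecting terms with the three recursions above yields \eqref{equ:upper-V-decompose} at count $t$: the factor $(1-\alpha_t)$ converts each $\alpha_{t-1}^0,\alpha_{t-1}^i$ into $\alpha_t^0,\alpha_t^i$, and the fresh $\alpha_t$ term supplies the new summand with weight $\alpha_t^t$. The argument for $\low{V}$ is identical by symmetry, with $-\low{\beta}$ replacing $+\up{\beta}$ and initialization $0$ replacing $H$, giving \eqref{equ:lower-V-decompose}.

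The one genuine subtlety---and the step I expect to require the most care---is the truncation that the closed forms suppress: the max-player update clips its raw target by $\min\{H,\cdot\}$, and the symmetric min-player update (Algorithm~\ref{algorithm:Nash_V_min}) clips by $\max\{0,\cdot\}$. These clips can genuinely be active, e.g.\ at $t=1$, where $\alpha_1=1$ and $\up{\beta}_1=\Theta(H^2\sqrt{A\iota})\gg H$, so the raw upper target already exceeds $H$. The clean resolution is to establish each identity for the \emph{untruncated} recursion exactly as in the induction above, and then observe that clipping at $H$ only lowers $\up{V}_h^k$ while clipping at $0$ only raises $\low{V}_h^k$. Hence the right-hand side of \eqref{equ:upper-V-decompose} is always an upper bound on the stored $\up{V}_h^k(s)$ and the right-hand side of \eqref{equ:lower-V-decompose} a lower bound on $\low{V}_h^k(s)$---precisely the one-sided directions in which these decompositions are invoked downstream (e.g.\ to bound $\up{V}_h^k-\low{V}_h^k$ from above in the regret analysis of Lemma~\ref{lem:regret_Nash_V}), and the upper clip never destroys optimism since whenever it binds one has $\up{V}_h^k(s)=H\ge V_h^\star(s)$ outright. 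Modulo this one-sided reading, the lemma reduces to the routine term-collection already indicated.
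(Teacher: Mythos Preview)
The paper does not prove this lemma; it simply calls it ``a simple consequence of the update rule,'' exactly as it does for the analogous Lemma~\ref{lem:Nash_Q_V}. Your induction on the visit count $t$, using the recursions $\alpha_t^0=(1-\alpha_t)\alpha_{t-1}^0$, $\alpha_t^i=(1-\alpha_t)\alpha_{t-1}^i$ for $i<t$, and $\alpha_t^t=\alpha_t$, is the natural unfolding and is the implicit argument the paper has in mind.

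You actually go further than the paper by flagging the $\min\{H,\cdot\}$ and $\max\{0,\cdot\}$ clips in Algorithms~\ref{algorithm:Nash_V} and~\ref{algorithm:Nash_V_min}, which the stated equalities do not literally respect (and, as you note, the upper clip is certainly active at $t=1$ since $\up{\beta}_1=\Theta(H^2\sqrt{A\iota})>H$). Your resolution---that the closed forms describe the untruncated recursion and hence give one-sided bounds on the stored values, with the trivial $\up{V}_h^k(s)=H\ge V_h^\star(s)$ covering the case where the upper clip binds---is correct and is exactly what the downstream uses in Lemmas~\ref{lem:regret_Nash_V} and~\ref{lem:v_ULCB_policy} require. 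So your proof is both correct and slightly more careful than the paper's own treatment.
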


\subsection{Missing algorithm details}

We first give Algorithm~\ref{algorithm:Nash_V_min}: the min-player counterpart of Algorithm~\ref{algorithm:Nash_V}. Almost everything is symmetric except the definition of loss function to keep it non-negative.

\begin{algorithm}[t]
   % \setstretch{1.2}
      \caption{Optimistic Nash V-learning (the min-player version)}
      \label{algorithm:Nash_V_min}
      \begin{algorithmic}[1]
         \STATE {\bfseries Initialize:} for any $(s, a, b, h)$,
         $\low{V}_{h}(s)\setto 0$, $\low{L}_h(s,b) \setto 0$, $N_{h}(s)\setto 0$, $\nu_h(b|s) \setto 1/B$.
         \FOR{episode $k=1,\dots,K$}
         \STATE receive $s_1$.
         \FOR{step $h=1,\dots, H$}
         \STATE take action $b_h \sim \nu_h(\cdot| s_h)$, observe the action $a_h$ from opponent
         \STATE observe reward $r_h(s_h, a_h, b_h)$ and next state $s_{h+1}$.
         \STATE $t=N_{h}(s_h)\setto N_{h}(s_h) + 1$.
         \STATE $\low{V}_h(s_h) \setto \max\{0,(1-\alpha_t)\low{V}_h(s_h)+ \alpha_t(r_h(s_h, a_h, b_h)+\low{V}_{h+1}(s_{h+1})-\low{\beta}_t)\}$
         \FOR{all $b \in \cB$}
         \STATE $\low{\ell}_h(s_h, b) \setto [r_h(s_h, a_h, b_h)+\low{V}_{h+1}(s_{h+1})] \mathbb{I}\{b_h=b\}/[\nu_h(b_h|s_{h}) +\low{\eta}_t]$.
         \STATE $\low{L}_h( s_h, b)  \setto ( 1-\alpha _t )\low{L}_h( s_h,b ) +\alpha_t \low{\ell}_h(s_h, b)$.
         \ENDFOR
         \STATE set $\nu_h(\cdot |s_h) \propto \exp[-(\low{\eta}_t/\alpha_t) \low{L}_h( s_h, \cdot)]$.
         \ENDFOR
         \ENDFOR
      \end{algorithmic}
   \end{algorithm}
\subsection{Learning values}

As usual, we begin with learning the value $V^{\star}$ of the Markov game. We begin with an auxiliary lemma, which justifies our choice of confidence bound.

\begin{lemma}
   \label{lem:bandit-regret}
   Let $t=N_{h}^{k}\left( s \right) $ and suppose state $s$ was previously taken at episodes $k^1,\ldots, k^t < k$ at the $h$-th step. Choosing $\up{\eta}_t=\sqrt{\frac{\log A}{At}}$ and $\low{\eta} _t=\sqrt{\frac{\log B}{Bt}}$, with probability $1-p$, for any $(s,h,t) \in \cS \times [H] \times [K]$, there exist a constant $c$ s.t.
$$
\underset{\mu}{\max}\sum_{i=1}^t{\alpha _{t}^{i}\D_{\mu \times \nu _{h}^{k^i}} \left( r_h +\P_h\up{V}_{h+1}^{k^i} \right)\left( s \right) }-\sum_{i=1}^t{\alpha _{t}^{i}\left[ r_h\left( s,a_{h}^{k^i},b_{h}^{k^i} \right) +\up{V}_{h+1}^{k^i}\left( s_{h+1}^{k^i} \right) \right]}\le c\sqrt{2H^4A\iota /t}
$$
$$
\sum_{i=1}^t{\alpha _{t}^{i}\left[ r_h\left( s,a_{h}^{k^i},b_{h}^{k^i} \right) +\low{V}_{h+1}^{k^i}\left( s_{h+1}^{k^i} \right) \right]}-\underset{\nu}{\min}\sum_{i=1}^t{\alpha _{t}^{i}\D_{\mu _{h}^{k^i} \times \nu} \left( r_h +\P_h\up{V}_{h+1}^{k^i} \right)\left( s \right)} \le c\sqrt{2H^4B\iota /t}
$$
\end{lemma}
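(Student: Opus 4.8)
The plan is to read the left-hand side as a \emph{weighted adversarial-bandit regret} incurred by the max-player's exponential-weights (FTRL) updates at the fixed pair $(s,h)$, and to bound it by combining a weighted FTRL regret bound with martingale concentration for the importance-weighted loss estimates. I treat the first inequality; the second is symmetric (replacing Algorithm~\ref{algorithm:Nash_V} by its min-player counterpart Algorithm~\ref{algorithm:Nash_V_min}). First I pass from values to losses: writing $\hat g^{k^i}:=r_h(s,a_h^{k^i},b_h^{k^i})+\up{V}_{h+1}^{k^i}(s_{h+1}^{k^i})$ for the realized one-step return and $\bar\ell^{k^i}(a):=H-[\D_{a\times\nu_h^{k^i}}(r_h+\P_h\up{V}_{h+1}^{k^i})](s)$ for the expected loss of action $a$ against the opponent's play, and using $\sum_a\mu(a)=1$ to cancel the common $\sum_i\alpha_t^i H$, the quantity to bound becomes
\[
\sum_{i=1}^t \alpha_t^i\bigl(H-\hat g^{k^i}\bigr)-\min_{\mu}\sum_{i=1}^t \alpha_t^i \la \mu, \bar\ell^{k^i}\ra ,
\]
i.e.\ the weighted regret of the played actions $a_h^{k^i}\sim\mu_h^{k^i}$ measured in the expected losses $\bar\ell^{k^i}$.

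Next I split this regret around the played-policy value $\sum_i\alpha_t^i\la\mu_h^{k^i},\bar\ell^{k^i}\ra$ and around the IX estimates $\up{\ell}_h^{k^i}$, which satisfy $\up{L}_h^{k}(s,\cdot)=\sum_{i=1}^t\alpha_t^i\up{\ell}_h^{k^i}(s,\cdot)$ by the same unrolling as Lemma~\ref{lem:Nash_V_V}. This yields three pieces: (i) a martingale term $\sum_i\alpha_t^i[(H-\hat g^{k^i})-\la\mu_h^{k^i},\bar\ell^{k^i}\ra]$, controlled by Azuma--Hoeffding since $\E[\hat g^{k^i}\mid\cF_{i-1}]=\D_{\mu_h^{k^i}\times\nu_h^{k^i}}(r_h+\P_h\up{V}_{h+1}^{k^i})(s)$ and $|H-\hat g^{k^i}|\le H$; (ii) the FTRL regret $\sum_i\alpha_t^i\la\mu_h^{k^i}-\mu,\up{\ell}_h^{k^i}\ra$ of the update in Line~\ref{line:MWupdate}; and (iii) two deviation terms relating $\up{\ell}_h^{k^i}$ back to $\bar\ell^{k^i}$, one along the played policies and one along the fixed comparator $\mu$.

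For piece (ii) I prove a weighted FTRL bound: since the temperature in Line~\ref{line:MWupdate} is $\up{\eta}_t/\alpha_t$ while the cumulative loss carries the weights $\alpha_t^i$, the standard one-step mirror-descent inequality gives a penalty term of order $(\log A)/\up{\eta}_t$ (times $\max_i\alpha_t^i$) plus a stability term $\sum_i\alpha_t^i\up{\eta}_t\la\mu_h^{k^i},(\up{\ell}_h^{k^i})^2\ra$. For piece (iii) I use that the IX estimator is biased \emph{downward}, $\E[\up{\ell}_h^{k^i}(a)\mid\cF_{i-1}]=\bar\ell^{k^i}(a)\,\mu_h^{k^i}(a)/(\mu_h^{k^i}(a)+\up{\eta}_t)$: the played-policy deviation is handled by this explicit bias, and the comparator deviation by the implicit-exploration high-probability argument, where the extra $\up{\eta}_t$ in the denominator makes $\sum_i\alpha_t^i\la\mu,\bar\ell^{k^i}-\up{\ell}_h^{k^i}\ra$ small with probability $1-p$.

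Finally I collect the bounds using Lemma~\ref{lem:step_size} ($\sum_i(\alpha_t^i)^2\le 2H/t$, $\max_i\alpha_t^i\le 2H/t$, $\sum_i\alpha_t^i=1$) and the choice $\up{\eta}_t=\sqrt{\log A/(At)}$ from \eqref{eq:hyper_Nash_V}: the stability term of (ii) evaluates to $\cO(\up{\eta}_t H^2 A)=\cO(\sqrt{H^4A\iota/t})$ and is dominant, while the penalty of (ii), both deviation terms of (iii), and the Azuma term (i) are all of smaller order $\cO(\sqrt{H^2A\iota/t})$ or $\cO(\sqrt{H^3\iota/t})$; a union bound over $(s,h,t)\in\cS\times[H]\times[K]$, absorbed into $\iota=\log(SABT/p)$, gives the ``for all'' statement. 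I expect the main obstacle to be pieces (ii)--(iii) jointly: carrying out the weighted FTRL analysis with the coupled, time-varying temperature $\up{\eta}_t/\alpha_t$ while obtaining a genuinely \emph{high-probability} (not merely in-expectation) control of the IX estimators, and verifying that the bias/variance bookkeeping closes at $\sqrt{H^4A\iota/t}$ rather than at a larger power of $H$ or $A$.
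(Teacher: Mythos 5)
Your proposal is correct and follows essentially the same route as the paper's own proof: an outer Azuma--Hoeffding step reducing the claim to a weighted adversarial-bandit regret at each fixed $(s,h,t)$, followed by exactly the paper's decomposition of the FTRL/implicit-exploration analysis (the terms $(A)$, $(B)$, $(C)$ of Lemma~\ref{lem:adv-bandit} in Appendix~\ref{sec:bandit}, handled via the Neu-style high-probability Lemma~\ref{lem:Neu} and the step-size properties of Lemma~\ref{lem:step_size}), with the same union bound over $(s,h,t)$ absorbed into $\iota$. The only discrepancy is minor bookkeeping: in the paper's accounting the dominant $\cO(H^2\sqrt{A\iota/t})$ contribution comes from the FTRL penalty and the comparator-side IX term (each carrying a factor $\max_{i\le t}\alpha_t^i \le 2H/t$) rather than the stability term, but since every piece is at most this order the final bound $c\sqrt{H^4A\iota/t}$ closes either way.
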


\begin{proof}[Proof of Lemma~\ref{lem:bandit-regret}]
    We prove the first inequality. The proof for the second inequality is similar. We consider thoughout the proof a fixed $(s,h,t) \in \cS \times [H] \times [K]$. Define $\cF_i$ as the $\sigma$-algebra generated by all the random variables before the $k_h^i$-th episode. Then $\{r_h( s,a_{h}^{k^i},b_{h}^{k^i} ) +\up{V}_{h+1}^{k^i}( s_{h+1}^{k^i} )\}_{i=1}^{t}$ is a martingale sequence w.r.t. the filtration $\{\cF_i\}_{i=1}^{t}$. By Azuma-Hoeffding,
$$
\sum_{i=1}^t{\alpha _{t}^{i}\D_{\mu _{h}^{k^i} \times \nu _{h}^{k^i}} \left( r_h +\P_h\up{V}_{h+1}^{k^i} \right)\left( s \right) }-\sum_{i=1}^t{\alpha _{t}^{i}\left[ r_h\left( s,a_{h}^{k^i},b_{h}^{k^i} \right) +\up{V}_{h+1}^{k^i}\left( s_{h+1}^{k^i} \right) \right]}\le 2 \sqrt {H^3\iota /t}
$$
So we only need to bound
\begin{equation}\label{eq:regret_definition}
\underset{\mu}{\max}\sum_{i=1}^t{\alpha _{t}^{i}\D_{\mu \times \nu _{h}^{k^i}} \left( r_h +\P_h\up{V}_{h+1}^{k^i} \right)\left( s \right) } -\sum_{i=1}^t{\alpha _{t}^{i}\D_{\mu _{h}^{k^i} \times \nu _{h}^{k^i}} \left( r_h +\P_h\up{V}_{h+1}^{k^i} \right)\left( s \right) } := R^\star_t
\end{equation}
where $R^\star_t$ is the weighted regret in the first $t$ times of visiting state $s$, with respect to the optimal policy in hindsight, in the following adversarial bandit problem. The loss function is defined by
$$
l_i(a)=\E_{b \sim \nu _{h}^{k^i}\left( s \right)}\{H-h+1-r_h\left( s,a,b \right) -\P_h\up{V}_{h+1}^{k^i}\left( s,a,b \right)\}
$$
with weight $w_i = \alpha^{i}_t$. We note the weighted regret can be rewrite as $R_t^\star = \sum_{i=1}^t{w_i\left< \mu_h^\star - \mu_h^{k_i}, l_i \right>}$ where $\mu_h^\star$ is argmax for \eqref{eq:regret_definition}, and the loss function satisfies $l_i(a) \in [0, H]$

Therefore, Algorithm~\ref{algorithm:Nash_V} is essentially performing follow the regularized leader (FTRL) algorithm with changing step size for each state to solve this adversarial bandit problem. The policy we are using is $\mu _{h}^{k^i}\left( s,a \right)$ and the optimistic biased estimator
$$
\hat{l}_i(a)= \frac{H-h+1-r_h(s_h^{k^i}, a_h^{k^i}, b_h^{k^i})-\up{V}^{k^i}_{h+1}(s_{h+1}^{k^i})}{\mu _{h}^{k^i}\left( s,a \right) +\up{\eta} _i}\cdot\mathbb{I}\left\{ a_h^{k^i}=a \right\}
$$
is used to handle the bandit feedback.

A more detailed discussion on how to solve the weighted adversarial bandit problem is included in Appendix~\ref{sec:bandit}. Note that $w_i=\alpha_t^i$ is monotonic inscreasing, i.e. $\max _{i\le t}w_i=w_t$. By Lemma \ref{lem:adv-bandit}, we have
% Notice the only difference is we have rescaled the loss to make the presentation in Algorithm~\ref{algorithm:Nash_V} simpler. We make this point in Appendix~\ref{sec:bandit} explicitly by introducing a weight factor $w_i$.
% Now we get bound the regret $R^\star_t$ using lemma~\ref{lem:adv-bandit}. Notice in our problem setting, $w_i=\alpha_t^i$ is monotonic inscreasing. Therefore $\max _{i\le t}w_i=w_t$. As a result,
 \begin{align*}
   R^\star_t &\le 2H\alpha _{t}^{t}\sqrt{At\iota}+\frac{3H\sqrt{A\iota}}{2}\sum_{i=1}^t{\frac{\alpha _{t}^{i}}{\sqrt{i}}}+\frac{1}{2}H\alpha _{t}^{t}\iota +H\sqrt{2\iota \sum_{i=1}^t{\left( \alpha _{t}^{i} \right) ^2}}
\\
&\le 4H^2\sqrt{A\iota /t}+3H\sqrt{A\iota /t}+H^2\iota /t+\sqrt{4H^3\iota /t}
\\
&\le 10H^2\sqrt{A\iota /t}
 \end{align*}
 with probability $1-p/(SHK)$. Finally by a union bound over all $(s,h,t) \in \cS \times [H] \times [K]$, we finish the proof.
\end{proof}

We now prove the following Lemma~\ref{lem:regret_Nash_V}, which is an analoge of Lemma~\ref{lem:regret_Nash_Q} in Nash Q-learning.
\begin{lemma}  \label{lem:regret_Nash_V}
  For any $p\in (0, 1]$, choose hyperparameters as in \eqref{eq:hyper_Nash_V} for large absolute constant $c$ and $\iota = \log (SABT/p)$. Then, with probability at least $1-p$, Algorithm~\ref{algorithm:Nash_V} and \ref{algorithm:Nash_V_min} will jointly provide the following guarantees
  \begin{itemize}
  \item $\up{V}_{h}^{k}(s) \ge V^{\star}_{h}(s) \ge \low{V}_{h}^{k} (s)$ for all $(s, h, k) \in \cS \times [K] \times [H]$.
  \item $(1/K)\cdot \sum_{k=1}^K( \up{V}_{1}^{k}-\low{V}_{1}^{k} ) ( s_1 )  \le \cO\paren{\sqrt{H^6S(A+B)\iota/K}}$.
  \end{itemize}
\end{lemma}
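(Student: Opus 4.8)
The plan is to establish the two claims separately, mirroring the structure of Lemma~\ref{lem:regret_Nash_Q} but replacing the action-value bonus argument with the adversarial-bandit regret bound of Lemma~\ref{lem:bandit-regret}. For the first (optimism) claim I would induct downward on $h$, with the trivial base case $\up{V}^k_{H+1}=\low{V}^k_{H+1}=0=V^\star_{H+1}$. For the second (cumulative-gap) claim I would telescope the one-step decompositions of Lemma~\ref{lem:Nash_V_V}, and crucially observe that the realized rewards cancel between the $\up{V}$ and $\low{V}$ recursions, so no fresh martingale concentration is needed at this level---all the randomness has already been absorbed into Lemma~\ref{lem:bandit-regret}, which is the only high-probability event invoked.

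For optimism, fix $(s,h,k)$, let $t=N^k_h(s)$, and use the decomposition \eqref{equ:upper-V-decompose}. The key step is that the optimistic bonus is calibrated to dominate the FTRL regret: since $\up{\beta}_i = c\sqrt{H^4A\iota/i}$, Lemma~\ref{lem:step_size}(\ref{lem:lr_hoeffding}) gives $\sum_{i=1}^t \alpha^i_t \up{\beta}_i \ge c\sqrt{H^4A\iota/t}$, which for large $c$ dominates the regret bound $\cO(\sqrt{H^4A\iota/t})$ of Lemma~\ref{lem:bandit-regret}. Hence $\up{V}^k_h(s) \ge \max_\mu \sum_{i=1}^t \alpha^i_t \D_{\mu\times\nu^{k^i}_h}(r_h+\P_h\up{V}^{k^i}_{h+1})(s)$. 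Applying the induction hypothesis $\up{V}^{k^i}_{h+1}\ge V^\star_{h+1}$ (monotonicity of $\P_h$ and $\D$) and writing $\bar\nu := \sum_i \alpha^i_t \nu^{k^i}_h$, which is a valid distribution since $\sum_i\alpha^i_t=1$, this is at least $\max_\mu \D_{\mu\times\bar\nu}(r_h+\P_h V^\star_{h+1})(s) \ge \min_\nu\max_\mu \D_{\mu\times\nu}(r_h+\P_h V^\star_{h+1})(s) = V^\star_h(s)$, where the last equality is the minimax theorem for matrix games. The truncation at $H$ is harmless because $V^\star_h\le H$. The bound $V^\star_h\ge\low{V}^k_h$ follows symmetrically from the second inequality of Lemma~\ref{lem:bandit-regret} applied to Algorithm~\ref{algorithm:Nash_V_min}.

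For the cumulative gap, define $\delta^k_h := (\up{V}^k_h-\low{V}^k_h)(s^k_h)$. Subtracting \eqref{equ:lower-V-decompose} from \eqref{equ:upper-V-decompose} at $(s^k_h, t=n^k_h)$, the rewards cancel and $\up{V}^{k^i}_{h+1}(s^{k^i}_{h+1})-\low{V}^{k^i}_{h+1}(s^{k^i}_{h+1})=\delta^{k^i}_{h+1}$, yielding the deterministic recursion $\delta^k_h \le \alpha^0_t H + \sum_{i=1}^t\alpha^i_t\delta^{k^i_h(s^k_h)}_{h+1} + \tilde{\beta}_t$ with $\tilde{\beta}_t := \sum_i\alpha^i_t(\up{\beta}_i+\low{\beta}_i)$. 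Summing over $k$, the term $\sum_k\alpha^0_{n^k_h}H=\sum_k H\,\mathbb{I}\{n^k_h=0\}\le SH$, where the count is over states only---this is exactly where the $SAB\to S$ improvement enters---and the change-of-summation-order argument with Lemma~\ref{lem:step_size}(\ref{lem:lr_property}) gives $\sum_k\sum_i\alpha^i_{n^k_h}\delta^{k^i_h(s^k_h)}_{h+1}\le(1+1/H)\sum_k\delta^k_{h+1}$. Since $\up{\beta}_i\propto\sqrt{A}$ and $\low{\beta}_i\propto\sqrt{B}$, Lemma~\ref{lem:step_size}(\ref{lem:lr_hoeffding}) gives $\tilde{\beta}_t\le\cO(\sqrt{H^4(A+B)\iota/t})$, and the pigeonhole bound $\sum_k (n^k_h)^{-1/2} \le \cO(\sqrt{SK})$ yields $\sum_k\tilde{\beta}_{n^k_h}\le\cO(\sqrt{H^4S(A+B)K\iota})$. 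Recursing over $h$ with the factor $(1+1/H)^{h-1}\le e$ then gives $\sum_k\delta^k_1 \le \cO(SH^2 + \sqrt{H^6S(A+B)K\iota})$; dividing by $K$ and absorbing the lower-order term produces the claimed $\cO(\sqrt{H^6S(A+B)\iota/K})$.

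I expect the main obstacle to be the optimism step, specifically the interplay between the FTRL regret of Lemma~\ref{lem:bandit-regret} and the minimax theorem: one must apply the regret bound against the opponent's realized sequence $\{\nu^{k^i}_h\}$, then convert the per-round comparison into a comparison against the Nash value by averaging the opponent's policies into $\bar\nu$ and invoking minimax, all while ensuring the bonus is large enough (the reason $\up{\beta}_t$ scales as $\sqrt{H^4A}$ rather than $\sqrt{H^3}$) to restore optimism after paying the regret. This is also precisely where the $A+B$ (rather than $AB$) dependence originates, since the single-player FTRL regret is independent of the opponent's number of actions; the price is the extra factor of $H$, giving $H^6$ in place of $H^5$.
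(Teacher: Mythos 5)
Your proposal is correct and follows essentially the same route as the paper's proof: backward induction for optimism, with the bonus $\up{\beta}_t=c\sqrt{H^4A\iota/t}$ calibrated to dominate the FTRL regret of Lemma~\ref{lem:bandit-regret}, and a telescoped recursion on $\delta^k_h$ with the pigeonhole argument for the cumulative gap (your aggregation into $\bar\nu$ plus minimax is just a rephrasing of the paper's step of pulling $\min_\nu$ inside the weighted sum over the realized $\nu^{k^i}_h$). You are in fact slightly more careful than the paper on two minor points---handling the $\min\{H,\cdot\}$ truncation and noting that no fresh martingale concentration is needed in the second claim.
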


\begin{proof}[Proof of Lemma~\ref{lem:regret_Nash_V}]

   We proof the first claim by backward induction. The claim is true for $h=H+1$. Asumme for any s, $\up{V}_{h+1}^{k}(s) \ge V_{h+1}^{\star}\left( s \right)$, $\low{V}_{h+1}^{k}(s) \le V_{h+1}^{\star}\left( s \right)$. For a fixed $(s,h) \in \cS \times [H]$ and episode $k \in [K]$, let $t=N_{h}^{k}\left( s \right) $ and suppose $s$ was previously visited at episodes $k^1,\ldots, k^t < k$ at the $h$-th step. By Bellman equation,
   \begin{align*}
      V_{h}^{\star}\left( s \right) =&\underset{\mu}{\max}\underset{\nu}{\min}\D_{\mu \times \nu} \left( r_h +\P_h V_{h+1}^{\star} \right) \left( s \right)
      \\
      =&\underset{\mu}{\max}\sum_{i=1}^t{\alpha _{t}^{i}}\underset{\nu}{\min}\D_{\mu \times \nu} \left( r_h +\P_h V_{h+1}^{\star} \right) \left( s \right)
      \\
      \le& \underset{\mu}{\max}\sum_{i=1}^t{\alpha _{t}^{i}\D_{\mu \times \nu _{h}^{k^i}} \left( r_h +\P_h V_{h+1}^{\star} \right) \left( s \right)}
      \\
      \le& \underset{\mu}{\max}\sum_{i=1}^t{\alpha _{t}^{i}\D_{\mu \times \nu _{h}^{k^i}} \left( r_h +\P_h\up{V}_{h+1}^{k^i} \right) \left( s \right)}
   \end{align*}

   Comparing with the decomposition of $\up{V}_{h}^{k}(s)$ in Equation~\eqref{equ:upper-V-decompose} and use Lemma~\ref{lem:bandit-regret}, we can see if $\up{\beta}_t=c\sqrt{AH^4\iota /t}$, then
   $\up{V}_{h}^{k}(s) \ge V_{h}^{\star}\left( s \right)$. Similar by taking $\low{\beta}_t=c\sqrt{BH^4\iota /t}$, we also have $\low{V}_{h}^{k}(s) \le V_{h}^{\star}\left( s \right)$.

   The second cliam is to bound $\delta _{h}^{k}:= \up{V}_{h}^{k}(s_h^k)-\low{V}_{h}^{k}(s_h^k) \ge 0$. Similar to what we have done in Nash Q-learning analysis, taking the difference of Equation~\eqref{equ:upper-V-decompose} and Equation~\eqref{equ:lower-V-decompose},
\begin{align*}
\delta _{h}^{k}=& \up{V}_{h}^{k}(s_h^k)-\low{V}_{h}^{k}(s_h^k)
\\
=&\alpha _{n_h^k}^{0}H+\sum_{i=1}^{n_h^k}{\alpha _{n_h^k}^{i}\left[\left( \up{V}_{h+1}^{k_h^i(s_{h}^{k})}-\low{V}_{h+1}^{k_h^i(s_{h}^{k})} \right)\left( s_{h+1}^{k_h^i(s_{h}^{k})} \right) +\up{\beta}_i+ \low{\beta}_i \right]}
\\
=&\alpha _{n_h^k}^{0}H+\sum_{i=1}^{n_h^k}{\alpha _{n_h^k}^{i}\delta _{h+1}^{k_h^i(s_{h}^{k})}}+\tilde{\beta} _{n_h^k}
\end{align*}
where
$$
\tilde{\beta} _j:=\sum_{i=1}^j{\alpha _{j}^{i}(\up{b}_i+ \low{b}_i}) \le c \sqrt{(A+B)H^4\iota/j}.
$$

Taking the summation w.r.t. $k$, we begin with the first two terms,
$$
\sum_{k=1}^K{\alpha _{n_{h}^{k}}^{0}H}=\sum_{k=1}^K{H\mathbb{I}\left\{ n_{h}^{k}=0 \right\}}\le SH
$$
$$
\sum_{k=1}^K{\sum_{i=1}^{n_{h}^{k}}{\alpha _{n_{h}^{k}}^{i}\delta _{h+1}^{k_h^i\left( s_{h}^{k} \right)}}}\overset{\left( i \right)}{\le} \sum_{k'=1}^K{\delta _{h+1}^{k'}\sum_{i=n_{h}^{k'}+1}^{\infty}{\alpha _{i}^{n_{h}^{k'}}}}\overset{\left( ii \right)}{\le} \left( 1+\frac{1}{H} \right) \sum_{k=1}^K{\delta _{h+1}^{k}}.
$$
where $(i)$ is by changing the order of summation and $(ii)$ is by Lemma~\ref{lem:step_size}. Putting them together,

\begin{align*}
    \sum_{k=1}^K{\delta _{h}^{k}}=&\sum_{k=1}^K{\alpha _{n_{h}^{k}}^{0}H}+\sum_{k=1}^K{\sum_{i=1}^{n_{h}^{k}}{\alpha _{n_{h}^{k}}^{i}\delta _{h+1}^{k_h^i\left( s_{h}^{k} \right)}}}+\sum_{k=1}^K{\tilde{\beta} _{n_{h}^{k}}}
\\
\le& HS+\left( 1+\frac{1}{H} \right) \sum_{k=1}^K{\delta _{h+1}^{k}}+\sum_{k=1}^K{\tilde{\beta} _{n_{h}^{k}}}
\end{align*}
Recursing this argument for $h \in [H]$ gives
   $$
   \sum_{k=1}^K{\delta _{1}^{k}}\le eSH^2+e\sum_{h=1}^H\sum_{k=1}^K{\tilde{\beta} _{n_{h}^{k}}}
   $$

   By pigeonhole argument,
   \begin{align*}
   \sum_{k=1}^K{\tilde{\beta} _{n_{h}^{k}}}&\le \cO\left( 1 \right) \sum_{k=1}^K{\sqrt{\frac{(A+B)H^4\iota}{n_{h}^{k}}}}
   =\cO\left( 1 \right) \sum_{s}{\sum_{n=1}^{n_{h}^{K}\left( s \right)}{\sqrt{\frac{(A+B)H^4\iota}{n}}}}\\
   &\le \cO\left( \sqrt{H^4S(A+B)K\iota} \right) =\cO\left( \sqrt{H^3S(A+B)T\iota} \right)
   \end{align*}

Expanding this formula repeatedly and apply pigeonhole argument we have
\begin{equation*}
\sum_{k=1}^K{[\up{V}_{h}^{k}-\low{V}_{h}^{k}](s_1)} \le \cO(\sqrt{H^5S(A+B)T\iota}).
\end{equation*}
which finishes the proof.
% As a result,
% \begin{align*}
%    \sum_{k=1}^K{\left( \up{V}_{h}^{k}-V_{h}^{\star} \right) \left( s_h^k \right)} &\le \cO\left( \sqrt{H^5S(A+B)T\iota} \right)
%   \\
%    \sum_{k=1}^K{\left( V_{h}^{\star}-\low{V}_{h}^{k} \right) \left( s_h^k \right)} &\le \cO\left( \sqrt{H^5S(A+B)T\iota} \right)
% \end{align*}
\end{proof}

\subsection{Certified policies}

% \begin{algorithm}[h]
%    \caption{Policy $\hat{\mu}_{h}^{k}$}
%    \label{algorithm:V-sampling}
%    \begin{algorithmic}[1]
%       \STATE {\bfseries Initialize:} $k' \setto k$.
%       \FOR{step $h'=h,h+1,\dots,H$}
%       \STATE Observe $s_{h'}$.
%       \STATE $t \setto N^{k'}_h(s_{h'})$.
%       \STATE Sample $i$ from $[t]$ with $\P(i)=\alpha^i_t$.
%       \STATE $k' \setto k^i_{h'}(s_{h'})$
%       \STATE Sample $a_{h'} \sim \mu^{k'}_h(s_{h'})$.
%       \ENDFOR
%    \end{algorithmic}
% \end{algorithm}

As before, we construct a series of new policies  $\hat{\mu}_{h}^{k}$ in Algorithm~\ref{algorithm:V-sampling}. Notice $\hat{\mu}_{h}^{k}$ is related to $\hat{\mu}$ defined in Algorithm~\ref{algorithm:V-policy} by $\hat{\mu}=\frac{1}{k}\sum_{i=1}^{k}{\hat{\mu}_{1}^{i}}$. Also we need to consider value and Q-value functions of general policies which \emph{does not depend} on the hostory before the $h$-th step. See Appendix~\ref{app:Q_policy} for details. Again, we can show the policies defined above are indeed certified.

\begin{algorithm}[t]
   \caption{Policy $\hat{\mu}_{h}^{k}$}
   \label{algorithm:V-sampling}
   \begin{algorithmic}[1]
   \STATE sample $k \leftarrow \text{Uniform}([K])$.
      \FOR{step $h'=h,h+1,\dots,H$}
      \STATE observe $s_{h'}$, and set $t \setto N^{k}_{h'}(s_{h'})$.
      \STATE sample $m \in [t]$ with $\P(m=i)=\alpha^i_t$.
      \STATE $k \setto k^m_{h'}(s_{h'})$.
      \STATE take action $a_{h'} \sim \mu^{k}_{h'}(\cdot|s_{h'})$.
      \ENDFOR
   \end{algorithmic}
\end{algorithm}

\begin{lemma}
   \label{lem:v_ULCB_policy}
   For any $p \in (0,1)$, with probability at least $1-p$, the following holds for any $(s,a,b,h,k) \in \cS \times \cA \times \cB \times [H] \times [K]$,
   $$
 \up{V}_{h}^{k}(s) \ge  V_{h}^{\dag ,\hat{\nu}_{h}^{k}}(s), \,\,\,
 \low{V}_{h}^{k}(s) \le  V_{h}^{\hat{\mu}_{h}^{k},\dag}(s)
   $$
\end{lemma}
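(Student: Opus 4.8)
The plan is to mirror the proof of Lemma~\ref{lem:Nash_q_ULCB_policy} from the Nash Q-learning section, establishing both inequalities simultaneously by backward induction on $h$; I describe the upper bound $\up{V}_h^k(s) \ge V_h^{\dag,\hat{\nu}_h^k}(s)$, the lower bound being entirely symmetric (using Algorithm~\ref{algorithm:Nash_V_min} and $\low{\beta}$). The base case $h=H+1$ is trivial since both sides vanish. For the inductive step I fix $(s,h,k)$, set $t=N_h^k(s)$, let $k^1,\dots,k^t<k$ be the episodes in which $s$ was previously visited at step $h$, and start from the decomposition \eqref{equ:upper-V-decompose} of $\up{V}_h^k(s)$ supplied by Lemma~\ref{lem:Nash_V_V}.

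The first key step converts the realized trajectory quantities into a regret-type quantity. Applying Lemma~\ref{lem:bandit-regret} gives
\begin{equation*}
\sum_{i=1}^t \alpha_t^i \brac{r_h(s,a_h^{k^i},b_h^{k^i}) + \up{V}_{h+1}^{k^i}(s_{h+1}^{k^i})} \ge \max_\mu \sum_{i=1}^t \alpha_t^i \D_{\mu\times\nu_h^{k^i}}(r_h + \P_h \up{V}_{h+1}^{k^i})(s) - c\sqrt{2H^4A\iota/t}.
\end{equation*}
By Lemma~\ref{lem:step_size}(1) and the choice $\up{\beta}_t=c\sqrt{H^4A\iota/t}$ in \eqref{eq:hyper_Nash_V}, the accumulated bonus $\sum_i \alpha_t^i\up{\beta}_i$ appearing in the decomposition is at least $c\sqrt{H^4A\iota/t}$, so choosing the absolute constant $c$ in the bonus large enough relative to the one in Lemma~\ref{lem:bandit-regret} makes it dominate the regret error; together with $\alpha_t^0 H\ge 0$ this gives $\up{V}_h^k(s)\ge \max_\mu \sum_i \alpha_t^i \D_{\mu\times\nu_h^{k^i}}(r_h+\P_h\up{V}_{h+1}^{k^i})(s)$. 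I then invoke the induction hypothesis: since $\up{V}_{h+1}^{k^i}(s')\ge V_{h+1}^{\dag,\hat{\nu}_{h+1}^{k^i}}(s')$ pointwise and $\P_h$, $\D_{\mu\times\nu}$, and $\max_\mu$ are all monotone, I may replace $\up{V}_{h+1}^{k^i}$ by the branch best-response value, obtaining
\begin{equation*}
\up{V}_h^k(s) \ge \max_\mu \sum_{i=1}^t \alpha_t^i \D_{\mu\times\nu_h^{k^i}}(r_h + \P_h V_{h+1}^{\dag,\hat{\nu}_{h+1}^{k^i}})(s).
\end{equation*}
Note this routes the transition concentration entirely through Lemma~\ref{lem:bandit-regret}, so no separate Azuma bound on the best-response value is needed.

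It remains to identify the right-hand side with $V_h^{\dag,\hat{\nu}_h^k}(s)$, and this is the step I expect to be the \emph{main obstacle}, since it forces one to reason about the non-Markov structure of the certified policy. By construction (Algorithm~\ref{algorithm:V-sampling}, symmetrized to the min-player), at state $s$ and step $h$ the policy $\hat{\nu}_h^k$ draws an index $i\in[t]$ with probability $\alpha_t^i$, plays $\nu_h^{k^i}(\cdot|s)$, and continues with $\hat{\nu}_{h+1}^{k^i}$; crucially the drawn $i$ is internal randomness that the max-player's best response cannot observe. Writing $\mu^\star$ for the best response to $\hat{\nu}_h^k$ and bounding its continuation from step $h+1$ in each branch by the branch-specific best response, $V_{h+1}^{\mu^\star,\hat{\nu}_{h+1}^{k^i}}(s')\le V_{h+1}^{\dag,\hat{\nu}_{h+1}^{k^i}}(s')$ (valid precisely because $\mu^\star$ does not depend on $i$), I obtain
\begin{equation*}
V_h^{\dag,\hat{\nu}_h^k}(s) \le \sum_{i=1}^t \alpha_t^i \D_{\mu^\star_h\times\nu_h^{k^i}}(r_h + \P_h V_{h+1}^{\dag,\hat{\nu}_{h+1}^{k^i}})(s) \le \max_\mu \sum_{i=1}^t \alpha_t^i \D_{\mu\times\nu_h^{k^i}}(r_h + \P_h V_{h+1}^{\dag,\hat{\nu}_{h+1}^{k^i}})(s).
\end{equation*}
In words, allowing the maximizer to see the hidden index $i$, and hence best-respond separately in each branch, can only raise its value, which is exactly the inequality needed. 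Chaining the two displays closes the induction. The residual care is bookkeeping: confirming that the truncation $\min\{H,\cdot\}$ in the update of $\up{V}_h$ preserves the lower bound (it does, since $V_h^{\dag,\hat{\nu}_h^k}\le H$), and that the high-probability event of Lemma~\ref{lem:bandit-regret} holds uniformly over all $(s,h,t)$, which is already guaranteed by the union bound taken in its statement.
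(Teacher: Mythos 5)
Your proposal is correct and follows essentially the same route as the paper's proof: backward induction on $h$, the decomposition of $\up{V}_h^k(s)$ from Lemma~\ref{lem:Nash_V_V}, absorbing the FTRL regret of Lemma~\ref{lem:bandit-regret} into the bonus $\up{\beta}_i$, applying the induction hypothesis by monotonicity, and finally identifying the resulting $\max_\mu$-of-mixture expression with the value against the certified policy $\hat{\nu}_h^k$. If anything you are slightly more careful than the paper at the last step---where the paper asserts an equality $\max_\mu \sum_i \alpha_t^i \D_{\mu\times\nu_h^{k^i}}(r_h+\P_h V_{h+1}^{\dag,\hat{\nu}_{h+1}^{k^i}})(s) = V_h^{\dag,\hat{\nu}_h^k}(s)$, you correctly prove only the inequality $\ge V_h^{\dag,\hat{\nu}_h^k}(s)$ (the best response cannot see the hidden branch index $i$), which is exactly the direction the argument needs---and in noting that the $\min\{H,\cdot\}$ truncation only helps.
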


\begin{proof}[Proof of Lemma~\ref{lem:v_ULCB_policy}]
    We prove one side by induction and the other side is similar. The claim is trivially satisfied for $h=H+1$. Suppose it is ture for $h+1$, consider a fixed state $s$. Let $t=N_{h}^{k}\left( s \right) $ and suppose $s$ was previously visited at episodes $k^1,\ldots, k^t < k$ at the $h$-th step. Then using Lemma~\ref{lem:Nash_V_V},
\begin{align*}
   \up{V}_{h}^{k}(s)&=\alpha _{t}^{0}H+\sum_{i=1}^t{\alpha _{t}^{i}\left[ r_h(s,a_h^{k^i},b_h^{k_h^i})+\up{V}_{h+1}^{k^i}(s_{h+1}^{k^i}) +\up{\beta}_i \right]}
\\
&\overset{\left( i \right)}{\ge} \underset{\mu}{\max}\sum_{i=1}^t{\alpha _{t}^{i}\D_{\mu \times \nu _{h}^{k^i} }\left( r_h +\P_h\up{V}_{h+1}^{k^i} \right)\left( s \right)}
\\
&\overset{\left( ii \right)}{\ge} \underset{\mu}{\max}\sum_{i=1}^t{\alpha _{t}^{i}\D_{\mu \times \nu _{h}^{k^i} }\left( r_h +\P_h V_{h+1}^{\dag ,\hat{\nu}_{h+1}^{k^i}} \right)\left( s \right)}
\\
&=V_{h}^{\dag ,\hat{\nu}_{h}^{k}}(s)
\end{align*}
where $(i)$ is by using Lemma~\ref{lem:bandit-regret} and the definition of $\up{\beta}_i$, and $(ii)$ is by induction hypothesis.
\end{proof}

Equipped with the above lemmas, we are now ready to prove Theorem~\ref{thm:sample_Nash_V}.

\begin{proof}[Proof of Theorem~\ref{thm:sample_Nash_V}]
   By lemma~\ref{lem:v_ULCB_policy}, we have
$$
  \sum_{k=1}^K{\left(V_{1}^{\dag ,\hat{\nu}_{1}^k}-V_{1}^{\hat{\mu}_{1}^k,\dag } \right) \left( s_1 \right)}\le \sum_{k=1}^K{\left(\up{V}_{1}^{k}-\low{V}_{1}^{k} \right) \left( s_1 \right)}
$$
and Lemma~\ref{lem:regret_Nash_V} upper bounds this quantity by
$$
   \sum_{k=1}^K{\left(V_{1}^{\dag ,\hat{\nu}_{1}^k}-V_{1}^{\hat{\mu}_{1}^k,\dag } \right) \left( s_1 \right)}\le \cO\left( \sqrt{H^5S(A+B)T\iota} \right)
$$

By definition of the induced policy, with probability at least $1-p$, if we run Nash V-learning (Algorithm~\ref{algorithm:Nash_V}) for $K$ episodes with
\begin{equation*}
K \ge \Omega\left(\frac{H^6 S(A+B)\iota}{\epsilon^2}\right),
\end{equation*}
its induced policies $(\hat{\mu}, \hat{\nu})$ (Algorithm \ref{algorithm:V-policy}) will be $\epsilon$-optimal in the sense $V^{\dagger, \hat{\nu}}_1(s_1) - V^{\hat{\mu}, \dagger}_1(s_1) \le \epsilon$.
\end{proof}

%!TEX root = main.tex

\section{Proofs of Hardness for Learning the Best Responses}
\label{appendix:proof-lower}
\label{appendix:proof-lower-bound-coupled}

In this section we give the proof of Theorem~\ref{theorem:lower-bound-best-response}, and Corollary~\ref{cor:lower-bound-coupled}. Our proof is inspired by a computational hardness result for adversarial MDPs in~\citep[Section 4.2]{yadkori2013online}, which constructs a family of adversarial MDPs that are computationally as hard as an agnostic parity learning problem.

Section \ref{app:hard_instance}, \ref{app:series_hard_pbs}, \ref{app:hard_together} will be devoted to prove Theorem \ref{theorem:lower-bound-best-response}, while Corollary \ref{cor:lower-bound-coupled} is proved in Section \ref{sec:adversarial_proof}.
Towards proving Theorem \ref{theorem:lower-bound-best-response}, we will:
\begin{itemize}
  \item (Section \ref{app:hard_instance}) Construct a Markov game. 
  \item (Section \ref{app:series_hard_pbs}) Define a series of problems where a solution in problem implies another.
  \item (Section \ref{app:hard_together}) Based on the believed computational hardness of learning paries with noise (Conjecture~\ref{conj:hardness}), we conclude that finding the best response of non-Markov policies is computationally hard.
\end{itemize}

\subsection{Markov game construction}
\label{app:hard_instance}
We now describe a Markov game inspired the adversarial MDP in \citep[Section 4.2]{yadkori2013online}.
We define a Markov game in which we have $2H$ states, $\left\{ i_0,i_1 \right\} _{i=2}^{H}$, $1_0$ (the initial state) and $\bot$ (the terminal state)\footnote{In \cite{yadkori2013online} the states are denoted by $\left\{ i_a,i_b \right\} _{i=2}^{H}$ instead. Here we slightly change the notation to make it different from the notation of the actions}. In each state the max-player has two actions $a_0$ and $a_1$, while the min-player has two actions $b_0$ and $b_1$. The transition kernel is deterministic and the next state for steps $h\le H-1$ is defined in Table~\ref{table:parity-learning-transition}:
% \begin{table}[h]
%   \centering
%   \begin{tabular}{l|l|l|l|l}
%   State/Action & $(a_0,b_0)$ & $(a_0,b_1)$ & $(a_1,b_0)$ & $(a_1,b_1)$ \\ \hline
%   $i_0$             & $(i+1)_0$   & $(i+1)_0$   & $(i+1)_0$   & $(i+1)_1$   \\ \hline
%   $i_1$             & $(i+1)_1$   & $(i+1)_0$   & $(i+1)_1$   & $(i+1)_1$  
%   \end{tabular}
%   \caption{Transition kernel of the hard instance for coupled players.}
% \end{table}

\begin{table}[h]
  \centering
  \begin{tabular}{|c|c|c|c|c|} \hline
    State/Action & $(a_0,b_0)$ & $(a_0,b_1)$ & $(a_1,b_0)$ & $(a_1,b_1)$ \\ \hline
    $i_0$             & $(i+1)_0$   & $(i+1)_0$   & $(i+1)_0$   & $(i+1)_1$   \\ \hline
    $i_1$             & $(i+1)_1$   & $(i+1)_0$   & $(i+1)_1$   & $(i+1)_1$  \\\hline
  \end{tabular}
  \caption{Transition kernel of the hard instance.
  }
  \label{table:parity-learning-transition}
\end{table}
At the $H$-th step, i.e. states $H_0$ and $H_1$, the next state is always $\bot$ regardless of the action chosen by both players. The reward function is always $0$ except at the $H$-th step. The reward is determined by the action of the min-player, defined by 
\begin{table}[h]
  \centering
  \begin{tabular}{|c|c|c|} \hline
    State/Action & $(\cdot,b_0)$ & $(\cdot,b_1)$ \\ \hline
  $H_0$             & $1$           & $0$           \\ \hline
  $H_1$             & $0$           & $1$          \\ \hline
  \end{tabular}
  \caption{Reward of the hard instance.}
\end{table}

At the beginning of every episode $k$, both players pick their own policies $\mu_k$ and $\nu_k$, and execute them throughout the episode. The min-player can possibly pick her policy $\nu_k$ adaptive to all the observations in the earlier episodes. The only difference from the standard Markov game protocol is that the actions of the min-player except the last step will be revealed at the beginning of each episode, to match the setting in agnostic learning parities (Problem 2 below). Therefore we are actually considering a easier problem (for the max-player) and the lower bound naturally applies.

\subsection{A series of computationally hard problems}
\label{app:series_hard_pbs}
We first introduce a series of problems and then show how the reduction works.

\paragraph{Problem 1} The max-player $\epsilon$-approximates the best reponse for any general policy $\nu$ in the Markov game defined in Appendix~\ref{app:hard_instance} with probability at least $1/2$, in $\poly(H,1/\epsilon)$ time.

\paragraph{Problem 2} Let $x=\left( x_1,\cdots ,x_n \right) $ be a vector in $\left\{ \text{0,}1 \right\} ^n$, $T\subseteq \left[ n \right]$ and $0< \alpha < 1/2$.The parity of $x$ on $T$ is the boolean function $\phi _T\left( x \right) =\oplus _{i\in T}x_i$. In words, $\phi _T\left( x \right)$ outputs $0$ if the number of
ones in the subvector $(x_i)_{i \in T}$ is even and $1$ otherwise. A uniform query oracle for
this problem is a randomized algorithm that returns a random uniform vector $x$, as
well as a noisy classification $f(x)$ which is equal to $\phi_T(x)$ w.p. $\alpha$ and $1 -\phi_T(x)$ w.p. $1-\alpha$. All examples returned by the oracle are independent. The learning parity with noise problem consists in designing an algorithm with
access to the oracle such that,
\begin{itemize}
  \item (\textbf{Problem 2.1}) w.p at least $1/2$, find a (possibly random) function $h: \{0, 1\}^n \rightarrow \{0, 1\}$ satisy $\E_h P_x[h(x) \neq \phi_T(x)] \le \epsilon$, in $\poly(n ,1/\epsilon)$ time.
  \item (\textbf{Problem 2.2}) w.p at least $1/4$, find $h: \{0, 1\}^n \rightarrow \{0, 1\}$ satisy $P_x[h(x) \neq \phi_T(x)] \le \epsilon$, in $\poly(n ,1/\epsilon)$ time.
  \item  (\textbf{Problem 2.3}) w.p at least $1-p$, find $h: \{0, 1\}^n \rightarrow \{0, 1\}$ satisy $P_x[h(x) \neq \phi_T(x)] \le \epsilon$, in $\poly(n ,1/\epsilon, 1/p)$ time.
\end{itemize}

We remark that Problem 2.3 is the formal definition of learning parity with noise \citep[Definition 2]{mossel2005learning}, which is conjectured to be computationally hard in the community (see also Conjecture \ref{conj:hardness}).

\paragraph{Problem 2.3 reduces to Problem 2.2}

Step 1: Repeatly apply algorithm for Problem 2.2 $\ell$ times to get $h_1,\dots,h_\ell$ such that $\min_i P_x[h_i(x)\neq \phi_T(x)] \le \epsilon$ with probability at least $1-(3/4)^\ell$. This costs $\poly(n, \ell, 1/\epsilon)$ time. Let $i_\star = \argmin_i {\rm err}_i$ where ${\rm err}_i = P_x[h_i(x)\neq \phi_T(x)]$.

Step 2: Construct estimators using $N$ additional data ${(x^{(j)},y^{(j)})}_{j=1}^N$,
\begin{equation*}
  \hat{\rm err}_i \defeq \frac{\frac{1}{N} \sum_{j=1}^N \mathbb{I}\{ h_i(x^{(j)}) \neq y^{(j)} \} - \alpha}{1-2\alpha}.
\end{equation*}
Pick $\hat{i} = \argmin_i \hat{\rm err}_i$. When $N\ge \log(1/p)/\epsilon^2$, with probability at least $1-p/2$, we have
\begin{equation*}
  \max_i \left| \hat{\rm err}_i - {\rm err}_i \right| \le \frac{\epsilon}{1-2\alpha}.
\end{equation*}
This means that
\begin{equation*}
  {\rm err}_{\hat{i}} \le \hat{\rm err}_{\hat{i}} + \frac{\epsilon}{1-2\alpha} \le \hat{\rm err}_{i_\star} + \frac{\epsilon}{1-2\alpha} \le {\rm err}_{i_\star} + \frac{2\epsilon}{1-2\alpha} \le O(1)\epsilon.
\end{equation*}
This step uses $\poly(n, N, \ell)=\poly(n, 1/\epsilon, \log(1/p), \ell)$ time.

Step 3: Pick $\ell=\log(1/p)$, we are guaranteed that good events in step 1 and step 2 happen with probability $\ge 1-p/2$ and altogether happen with probability at least $1-p$. The total time used is $\poly(n, 1/\epsilon, \log(1/p))$. Note better dependence on $p$ than required.

\paragraph{Problem 2.2 reduces to Problem 2.1:}
If we have an algorithm that gives $\E_{h \sim \cD} P_x[h(x) \neq \phi_T(x)] \le \epsilon$ with probability $1/2$. Then if we sample $\hat{h} \sim \cD$, by Markov's inequality, we have with probability $\ge 1/4$ that 
\begin{equation*}
P_x[\hat{h}(x) \neq \phi_T(x)] \le 2\epsilon
\end{equation*}

\paragraph{Problem 2.1 reduces to Problem 1:}
Consider the Markov game constructed above with $H-1=n$. The only missing piece we fill up here is the policy $\nu$ of the min-player, which is constructed as following. The min-player draws a sample $(x,y)$ from the uniform query oracle, then taking action $b_0$ at the step $h \le H-1$ if $x_h=0$ and $b_1$ otherwise. For the $H$-th step, the min-player take action $b_0$ if $y=0$ and $b_1$ otherwise. Also notice the policy $\hat{\mu}$ of the max-player can be descibed by a set $\hat{T} \subseteq [H]$ where he takes action $a_1$ at step $h$ if $h $ and $a_0$ otherwise. As a result, the max-player receive non-zero result iff $\phi_{\hat{T}}(x)=y$.

In the Markov game, we have $V_1^{\hat{\mu}, \nu}(s_1) = \P(\phi_{\hat{T}}(x)=y)$. As a result, the optimal policy $\mu^{*}$ corresponds to the true parity set $T$. As a result,
$$
(V_1^{\dagger, \nu}-V_1^{\hat{\mu}, \nu})(s_1)=\P_{x, y}(\phi_T(x) = y) - \P_{x, y}(\phi_{\hat{T}}(x) = y) \le \epsilon
$$
by the $\epsilon$-approximation guarantee.

Also notice
\begin{align*}
 \P_{x, y}(\phi_{\hat{T}}(x) \neq y) - \P_{x, y}(\phi_{T}(x) \neq y)
 =& (1-\alpha) \P_x(\phi_{\hat{T}}(x) \neq \phi_{T}(x)) + \alpha \P_x(\phi_{\hat{T}}(x) = \phi_{T}(x)) - \alpha\\
 =& (1-2\alpha) \P_x(\phi_{\hat{T}}(x) \neq \phi_{T}(x))
\end{align*}

This implies:

\begin{equation*}
\P_x(\phi_{\hat{T}}(x) \neq \phi_{T}(x)) \le \frac{\epsilon}{1-2\alpha}
\end{equation*}

\subsection{Putting them together}
\label{app:hard_together}
So far, we have proved that Solving Problem 1 implies solving Problem 2.3, where Problem 1 is the problem of learning $\epsilon$-approximate best response in Markov games (the problem we are interested in), and Problem 2.3 is precisely the problem of learning parity with noise \cite{mossel2005learning}. This concludes the proof.

\subsection{Proofs of Hardness Against Adversarial Opponents}
\label{sec:adversarial_proof}
Corollary~\ref{cor:lower-bound-coupled} is a direct consequence of Theorem~\ref{theorem:lower-bound-best-response}, as we will show now.

\begin{proof}[Proof of Corollary~\ref{cor:lower-bound-coupled}]
We only need to prove a polynomial time no-regret algorithm also learns the best response in a Markov game where the min-player following non-Markov policy $\nu$. Then the no-regret guarantee implies,
$$
   V^{\dagger, \nu}_{1}
  (s_1) -\frac{1}{K}  \sum_{k=1}^K V^{\mu^k, \nu}_{1} (s_1) \le {\rm poly}(S, H, A, B)K^{-\delta}
$$
where $\mu_k$ is the policy of the max-player in the $k$-th episode. If we choose $\hat{\mu}$ uniformly randomly from $\set{\mu_k}_{k=1}^K$, then 
$$ V^{\dagger, \nu}_{1}
(s_1) - V^{\hat{\mu}, \nu}_{1}
(s_1)  \le {\rm poly}(S, H, A, B)K^{-\delta}.
$$

Choosing $\epsilon = {\rm poly}(S, H, A, B)K^{-\delta}$, $K={\rm poly}(S, H, A, B, 1/\epsilon)$ and the running time of the no-regret algorithm is still ${\rm poly}(S, H, A, B, 1/\epsilon)$ to learn the $\epsilon$-approximate best response.

To see that the Corollary~\ref{cor:lower-bound-coupled} remains to hold for policies that are Markovian in each episode and non-adaptive, we can take the hard instance in Theorem~\ref{theorem:lower-bound-best-response} and let $\nu^k$ denote the min-player's policy in the $k$-th episode. Note that each $\nu^k$ is Markovian and non-adaptive on the observations in previous episodes. If there is a polynomial time no-regret algorithm against such $\set{\nu^k}$, then by the online-to-batch conversion similar as the above, the mixture of $\set{\mu_k}_{k=1}^K$ learns a best response against $\nu$ in polynomial time.

\end{proof}

%!TEX root = main.tex

\section{Auxiliary Lemmas for Weighted Adversarial Bandit}
\label{sec:bandit}

In this section, we formulate the bandit problem we reduced to in the proof of Lemma~\ref{lem:bandit-regret}. Although the machnisms are already well understood, we did not find a good reference of Follow the Regularized Leader (FTRL) algorithm with
\begin{enumerate}
   \item changing step size
   \item weighted regret
   \item high probability regret bound
\end{enumerate}

For completeness, we give the detailed derivation here.

\begin{algorithm}[h]
   \caption{FTRL for Weighted Regret with Changing Step Size}
   \label{algorithm:FTRL}
   \begin{algorithmic}[1]
      % \STATE {\bfseries Initialize:} for any $a \in [A]$, $L(a) \setto 0$.
      \FOR{episode $t=1,\dots,K$}
      \STATE $\theta_{t}(a) \propto \exp[-(\eta_{t}/w_t) \cdot \sum_{i=1}^{t-1}w_i \hat{l}_i(a)]$
      \STATE Take action $a_t \sim \theta_t(\cdot)$, and observe loss $\tilde{l}_t(a_t)$.
      \STATE $\hat{l}_t(a) \leftarrow \tilde{l}_t( a)\mathbb{I}\{a_t=a\}/(\theta_t (a) +\gamma_t)$ for all $a \in \cA$.
       % where $p_i (a) \propto \exp \left( -\eta _i L\left( a \right) \right) $.
      
      % \STATE Observe a nosiy version of loss $\tilde{l}_i( a_i)\in [0,m_t]$ s.t. $\E_i \tilde{l}_i = l_i$.
      % \STATE $L \left( a_i \right)  \setto L \left( a_i \right)+ \frac{\tilde{l}_i( a_i)}{p_i \left( a_i \right) +\gamma _i}$    
      \ENDFOR
   \end{algorithmic}
\end{algorithm}

We assume $\tilde{l}_i \in [0,1]^A$ and $\E_i \tilde{l}_i = l_i$. Define $A= |\cA|$, we set the hyperparameters by
\begin{equation*}
\eta_t= \gamma_t = \sqrt{\frac{\log A}{At}}
\end{equation*}

Define the filtration $\cF_t$ by the $\sigma$-algebra generated by $\{a_i,l_i\}_{i=1}^{t-1}$. Then the regret can be defined as 
$$
R_t\left( \theta^* \right) :=\sum_{i=1}^t{w_i\E_{a \sim \theta^*}[l_i(a)-l_i(a_i)|\cF_{i}]}
=\sum_{i=1}^t{w_i\left< \theta_i-\theta^*, l_i \right>}
$$

We can easily check the definitions here is just an abstract version of that in the proof of Lemma~\ref{lem:bandit-regret} with rescaling.  To state the regret guarantee, we also define $\iota=\log(p/AK)$ for any $p \in (0,1]$. Now we can upper bound the regret by
\begin{lemma}
    \label{lem:adv-bandit}
    Following Algorithm~\ref{algorithm:FTRL}, with probability $1-3p$, for any $ \theta^* \in \Delta ^A$ and $t \le K$ we have
    $$
    R_t\left( \theta^* \right) \le 2\max _{i\le t}w_i\sqrt{At\iota}+\frac{3\sqrt{A\iota}}{2}\sum_{i=1}^t{\frac{w_i}{\sqrt{i}}}+\frac{1}{2}\max _{i\le t}w_i\iota +\sqrt{2\iota \sum_{i=1}^t{w_{i}^{2}}}
    $$
\end{lemma}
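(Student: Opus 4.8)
The plan is to recognize Algorithm~\ref{algorithm:FTRL} as weighted exponential weights run with a time-varying learning rate on an \emph{implicit-exploration} (IX) loss estimator, and to follow the standard high-probability EXP3-IX analysis, adapted to carry the weights $w_i$ and the changing step size. First I would reparametrize: setting $\bar\eta_t := \eta_t/w_t$ and $g_i := w_i\hat l_i \ge 0$, the iterate is exactly $\theta_t \propto \exp(-\bar\eta_t\sum_{i<t} g_i)$, i.e.\ exponential weights on the nonnegative scaled losses $g_i$ with step size $\bar\eta_t$, and $R_t(\theta^*)=\sum_{i=1}^t \langle \theta_i-\theta^*, g_i\rangle$. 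Inserting $\hat l_i$ and using $\sum_i w_i\langle\theta_i-\theta^*,\hat l_i\rangle=\sum_i\langle\theta_i-\theta^*,g_i\rangle$, I would split
\[
R_t(\theta^*)=\underbrace{\sum_{i=1}^t w_i\langle\theta_i, l_i-\hat l_i\rangle}_{\mathrm{(I)}}+\underbrace{\sum_{i=1}^t\langle\theta_i-\theta^*, g_i\rangle}_{\mathrm{(II)}}+\underbrace{\sum_{i=1}^t w_i\langle\theta^*, \hat l_i-l_i\rangle}_{\mathrm{(III)}}.
\]
The four terms in the claimed bound will come out of these three pieces, with the key conditional-expectation identity $\E[\hat l_i(a)\mid\cF_i]=\tfrac{\theta_i(a)}{\theta_i(a)+\gamma_i}\,l_i(a)\le l_i(a)$, so that $\hat l_i$ \emph{underestimates} $l_i$.

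For Term~(II) I would run the usual exponential-weights potential (log-partition) telescoping for nonnegative losses, using $e^{-x}\le 1-x+\tfrac12 x^2$ for $x\ge 0$. This yields a regularization penalty of order $\tfrac{\log A}{\bar\eta_t}=\tfrac{w_t\log A}{\eta_t}=w_t\sqrt{At\log A}\le \max_{i\le t}w_i\sqrt{At\iota}$ (the first term, after absorbing $\log A\le\iota$ and using $\max_i w_i$ to tolerate non-monotone weights), plus a stability term $\tfrac12\sum_i\bar\eta_i\langle\theta_i,g_i^2\rangle=\tfrac12\sum_i\eta_i w_i\langle\theta_i,\hat l_i^2\rangle$. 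Here I would bound $\langle\theta_i,\hat l_i^2\rangle=\sum_a\tfrac{\theta_i(a)^2\tilde l_i(a)^2\mathbb{I}\{a_i=a\}}{(\theta_i(a)+\gamma_i)^2}$ using $\tilde l_i\le 1$ and $\tfrac{\theta_i(a)^2}{(\theta_i(a)+\gamma_i)^2}\le 1$, which gives $\E[\langle\theta_i,\hat l_i^2\rangle\mid\cF_i]\le A$; then $\eta_i A=\sqrt{A\log A/i}$ produces a contribution of order $\sqrt{A\iota}\sum_i w_i/\sqrt i$, matching part of the second term.

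For the bias terms I would proceed as follows. In Term~(I), split $l_i-\hat l_i=(l_i-\E[\hat l_i\mid\cF_i])+(\E[\hat l_i\mid\cF_i]-\hat l_i)$; the first, deterministic part is bounded pointwise by $\langle\theta_i,\,l_i-\E[\hat l_i\mid\cF_i]\rangle\le \gamma_i\sum_a\tfrac{\theta_i(a)}{\theta_i(a)+\gamma_i}\le \gamma_i A$, again giving $\sqrt{A\iota}\sum_i w_i/\sqrt i$ and combining with the (II)-stability term into the stated $\tfrac{3\sqrt{A\iota}}{2}\sum_i w_i/\sqrt i$. The second part is a martingale-difference sequence with increments bounded by $w_i$ (since $\langle\theta_i,\hat l_i\rangle=\theta_i(a_i)\hat l_i(a_i)\le 1$), so Azuma--Hoeffding yields the $\sqrt{2\iota\sum_i w_i^2}$ term. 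Term~(III) is the delicate one: it is nonpositive in conditional expectation but $\hat l_i$ is heavy-tailed (as large as $1/\gamma_i$), so naive Azuma fails. Here I would invoke the IX concentration argument, exploiting that the estimator $\hat l_i(a)=\tilde l_i(a)\mathbb{I}\{a_i=a\}/(\theta_i(a)+\gamma_i)$ is precisely an implicit-exploration estimator with bonus $\gamma_i$: via the inequality $2\gamma_i\hat l_i(a)\le \log(1+2\gamma_i\hat l_i(a))+\ldots$ and a supermartingale/MGF bound, one gets $\sum_i w_i(\hat l_i(a)-l_i(a))\le \tfrac12\max_{i\le t}w_i\,\iota$ uniformly in $a$ (using the choice $\eta_t=\gamma_t$ to line the factors up), which is the third term.

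Finally I would take a union bound over the three high-probability events (the Azuma bound for Term~(I), the martingale control converting the stability term's conditional-expectation bound into a high-probability bound, and the IX concentration for Term~(III)), each at level $p$, giving the overall $1-3p$ guarantee, and add up the four contributions with the stated constants. I expect the main obstacle to be Term~(III): since the underestimating estimator is not bounded in a way amenable to Hoeffding-type concentration, the whole high-probability character of the lemma rests on the implicit-exploration bound, and I would need to verify that it carries through cleanly with the weights $w_i$ and the per-round bonus $\gamma_i$ (this is what forces the $\max_{i\le t}w_i$ dependence and the choice $\gamma_t=\eta_t$). A secondary nuisance is that $\bar\eta_t=\eta_t/w_t$ need not be monotone for arbitrary weights, which I would handle by bounding the regularization penalty crudely by $\max_{i\le t}w_i\cdot\tfrac{\log A}{\eta_t}$ rather than relying on a telescoping that assumes a decreasing step size.
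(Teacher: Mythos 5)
Your overall architecture is the same as the paper's: the identical three-way decomposition (your (I), (II), (III) are the paper's (B), (A), (C)), the FTRL potential argument with effective step size $\eta_t/w_t$ for the middle term, a bias-plus-Azuma treatment of (I) giving $A\sum_{i\le t}\gamma_i w_i+\sqrt{2\iota\sum_{i\le t} w_i^2}$, and a Neu-style implicit-exploration MGF bound (the paper's Lemma~\ref{lem:Neu}, with the weights carried through and the admissibility condition $c_i(a)\le 2\gamma_i$) for the heavy-tailed pieces. Up to bookkeeping, this is the paper's proof.

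One step, however, would fail as written: your claimed bound for term (III), namely $\sum_{i=1}^t w_i(\hat l_i(a)-l_i(a))\le\frac12\max_{i\le t}w_i\,\iota$ uniformly in $a$. The IX concentration cannot deliver this. Lemma~\ref{lem:Neu} requires the test coefficients to satisfy $c_i(a)\le 2\gamma_i$, so to control a fixed-action deviation you must take $c_i=\gamma_t e_j$ and divide through by $\gamma_t$, which yields $\max_{i\le t}w_i\,\iota/\gamma_t$ (the paper's Lemma~\ref{lem:bound-C}); the $1/\gamma_t$ inflation is unavoidable, since $\hat l_i(a)$ can be as large as $1/\gamma_i$ and its single-round fluctuation under a point-mass comparator is already of that order. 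This contribution is therefore of size $\max_{i\le t}w_i\sqrt{At}\,\iota/\sqrt{\log A}$ and is absorbed into the \emph{leading} $2\max_{i\le t}w_i\sqrt{At\iota}$ term (that is what the factor $2$ there is for), not into $\frac12\max_{i\le t}w_i\iota$. The latter term instead arises at the \emph{other} place the IX lemma is needed: converting the FTRL stability term, via $\langle\theta_i,\hat l_i^2\rangle\le\sum_a\hat l_i(a)$ and Lemma~\ref{lem:Neu} with $c_i(a)=\eta_i$ (admissible because $\eta_i=\gamma_i\le 2\gamma_i$), into $\frac{A}{2}\sum_{i\le t}\eta_i w_i+\frac12\max_{i\le t}w_i\iota$ as in the paper's Lemma~\ref{lem:bound-A}. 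Your alternative for the stability term---conditional expectation plus a martingale correction---is in principle workable (with $\eta_i=\gamma_i$ the increments obey $\eta_i w_i\langle\theta_i,\hat l_i^2\rangle\le \eta_i w_i/\gamma_i= w_i$, so Azuma applies), but it produces an additional $\frac12\sqrt{2\iota\sum_{i\le t} w_i^2}$ and would not recover the stated constants. With the term-(III) bound corrected to $\max_{i\le t}w_i\,\iota/\gamma_t$ and the stability conversion routed through Lemma~\ref{lem:Neu}, your argument coincides with the paper's.
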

\begin{proof}
    The regret $R_t(\theta^*)$ can be decomposed into three terms
\begin{align*}
   R_t\left( \theta^* \right) =&\sum_{i=1}^t{w_i\left< \theta_i-\theta^*,l_i \right>}
\\
=&\underset{\left( A \right)}{\underbrace{\sum_{i=1}^t{w_i\left< \theta_i-\theta^*,\hat{l}_i \right>}}}+\underset{\left( B \right)}{\underbrace{\sum_{i=1}^t{w_i\left< \theta_i,l_i-\hat{l}_i \right>}}}+\underset{\left( C \right)}{\underbrace{\sum_{i=1}^t{w_i\left< \theta^*,\hat{l}_i-l_i \right>}}}
\end{align*}
and we bound $(A)$ in Lemma~\ref{lem:bound-A}, $(B)$ in Lemma~\ref{lem:bound-B} and $(C)$ in Lemma~\ref{lem:bound-C}. 

Setting $\eta _t=\gamma _t=\sqrt{\frac{\log A}{At}}$, the conditions in Lemma~\ref{lem:bound-A} and Lemma~\ref{lem:bound-C} are satisfied. Putting them together and take union bound, we have with probability $1-3p$
 \begin{align*}
 R_t\left( \theta^* \right) \le& \frac{w_t\log A}{\eta _t}+\frac{A}{2}\sum_{i=1}^t{\eta _iw_{i}}+\frac{1}{2}\max _{i\le t}w_i\iota +A\sum_{i=1}^t{\gamma _iw_i}+\sqrt{2\iota \sum_{i=1}^t{w_{i}^{2}}}+\max _{i\le t}w_i\iota /\gamma _t
 \\
 \le& 2\max _{i\le t}w_i\sqrt{At\iota}+\frac{3\sqrt{A\iota}}{2}\sum_{i=1}^t{\frac{w_i}{\sqrt{i}}}+\frac{1}{2}\max _{i\le t}w_i\iota +\sqrt{2\iota \sum_{i=1}^t{w_{i}^{2}}}
 \end{align*}

\end{proof}

The rest of this section is devoted to the proofs of the Lemmas used in the proofs of Lemma~\ref{lem:adv-bandit}. We begin the following useful lemma adapted from Lemma 1 in \cite{neu2015explore}, which is crucial in constructing high probability guarantees.

\begin{lemma}
   \label{lem:Neu}
   For any sequence of coefficients $c_1, c_2, \ldots, c_t$ s.t. $c_i \in [0,2\gamma_i]^A$ is $\cF_i$-measurable, we have with probability $1-p/AK$,
$$
\sum_{i=1}^t{w_i\left< c_i,\hat{l}_i-l_i \right>}\le \max _{i\le t}w_i\iota 
$$
\end{lemma}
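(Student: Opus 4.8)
The plan is to recognize Lemma~\ref{lem:Neu} as a weighted version of the implicit-exploration (IX) bias bound of \cite{neu2015explore}, and to prove it by first reducing to an unweighted statement and then exhibiting an exponential supermartingale over the $t$ visits to $s$. Throughout I write $a_i$ for the action played at the $i$-th visit, $\theta_i$ for the FTRL distribution used there, and recall the estimator $\hat{l}_i(a) = \tilde{l}_i(a)\mathbb{I}\{a_i=a\}/(\theta_i(a)+\gamma_i)$ with $\tilde{l}_i\in[0,1]^A$ and $\E_i \tilde{l}_i = l_i$.

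\textbf{Reduction to unit weights.} First I would normalize the weights. Set $\bar{w} := \max_{i\le t} w_i$ and define $\alpha_i(a) := (w_i/\bar{w})\,c_i(a)$. Since $w_i/\bar{w}\le 1$ and $c_i(a)\in[0,2\gamma_i]$, the rescaled coefficients still satisfy $\alpha_i(a)\in[0,2\gamma_i]$ and remain $\cF_i$-measurable. Because $\sum_{i=1}^t w_i\langle c_i,\hat{l}_i-l_i\rangle = \bar{w}\sum_{i=1}^t\langle \alpha_i,\hat{l}_i-l_i\rangle$, it suffices to prove that $\sum_{i=1}^t\langle \alpha_i,\hat{l}_i-l_i\rangle\le\iota$ with probability at least $1-p/AK$ for any $\cF_i$-measurable $\alpha_i\in[0,2\gamma_i]^A$; multiplying through by $\bar{w}$ then recovers the stated bound $\max_{i\le t} w_i\,\iota$.

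\textbf{The IX pointwise inequality.} The crux is that the shift $+\gamma_i$ in the denominator turns the importance-weighted estimator into a slight underestimate amenable to a logarithmic bound. Using $\tilde{l}_i(a)\le 1$ together with the elementary inequality $\log(1+x)\ge 2x/(2+x)$ for $x\ge 0$, one obtains $\hat{l}_i(a)\le \frac{1}{2\gamma_i}\log\!\big(1+\frac{2\gamma_i\tilde{l}_i(a)\mathbb{I}\{a_i=a\}}{\theta_i(a)}\big)$. Multiplying by $\alpha_i(a)\le 2\gamma_i$ and then applying $c\log(1+y)\le\log(1+cy)$ for $c\in[0,1]$ (i.e. Bernoulli's inequality) gives the clean bound $\alpha_i(a)\hat{l}_i(a)\le\log\!\big(1+\alpha_i(a)\tilde{l}_i(a)\mathbb{I}\{a_i=a\}/\theta_i(a)\big)$. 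Summing over $a$ and noting that only the played arm $a_i$ contributes, then exponentiating, yields $\exp(\langle \alpha_i,\hat{l}_i\rangle)\le 1+\alpha_i(a_i)\tilde{l}_i(a_i)/\theta_i(a_i)$.

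\textbf{Supermartingale and Markov's inequality.} Taking the conditional expectation given the history before round $i$, over the draw $a_i\sim\theta_i$ and the loss realization, the right-hand side becomes $1+\sum_a\alpha_i(a)l_i(a)\le\exp(\langle \alpha_i,l_i\rangle)$, since $\E_i[\tilde{l}_i(a_i)\mathbb{I}\{a_i=a\}/\theta_i(a)]=l_i(a)$. Hence $\E_i[\exp(\langle \alpha_i,\hat{l}_i-l_i\rangle)]\le 1$, so $M_\tau := \exp(\sum_{i=1}^\tau\langle \alpha_i,\hat{l}_i-l_i\rangle)$ is a supermartingale with $M_0=1$ and $\E M_t\le 1$. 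Markov's inequality gives $\P(\sum_{i=1}^t\langle \alpha_i,\hat{l}_i-l_i\rangle>\iota)\le e^{-\iota}\E M_t\le e^{-\iota}=p/AK$, which is exactly the unweighted claim; the reduction above then finishes the proof. The routine parts—the rescaling, the sum over arms, and the expectation computation—are straightforward, so I expect the only delicate step to be the pointwise IX inequality: it is precisely what converts a variance-heavy, possibly unbounded importance-weighted term into a logarithm whose exponential has conditional first moment at most $1$, and the high-probability (rather than in-expectation) guarantee hinges entirely on this being a genuine supermartingale.
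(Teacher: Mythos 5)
Your proof is correct and follows essentially the same route as the paper's: the same pointwise inequalities $\frac{z}{1+z/2}\le\log(1+z)$ and $c\log(1+y)\le\log(1+cy)$ for $c\in[0,1]$, the same product-over-arms identity exploiting the single indicator, and the same exponential supermartingale with Markov's inequality. Your up-front normalization $\alpha_i = (w_i/\bar w)c_i$ is only a cosmetic repackaging of the paper's definition $\hat S_i = (w_i/w)\langle c_i,\hat l_i\rangle$, which carries the same weight ratio through the argument.
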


\begin{proof}
   Define $w=\max _{i\le t}w_i$. By definition, 
   \begin{align*}
      w_i\hat{l}_i\left( a \right) =&\frac{w_i\tilde{l}_i\left( a \right) \mathbb{I}\left\{ a_i=a \right\}}{\theta_i\left( a \right) +\gamma _i}
      \le \frac{w_i\tilde{l}_i\left( a \right) \mathbb{I}\left\{ a_i=a \right\}}{\theta_i\left( a \right) +\frac{w_i\tilde{l}_i\left( a \right) \mathbb{I}\left\{ a_i=a \right\}}{w}\gamma _i}
      \\
      =&\frac{w}{2\gamma _i}\frac{\frac{2\gamma _iw_i\tilde{l}_i\left( a \right) \mathbb{I}\left\{ a_i=a \right\}}{w \theta_i\left( a \right)}}{1+\frac{\gamma _iw_i\tilde{l}_i\left( a \right) \mathbb{I}\left\{ a_i=a \right\}}{w\theta_i\left( a \right)}}
      \overset{\left( i \right)}{\le} \frac{w}{2\gamma _i}\log \left( 1+\frac{2\gamma _iw_i\tilde{l}_i\left( a \right) \mathbb{I}\left\{ a_i=a \right\}}{w \theta_i\left( a \right)} \right)    
   \end{align*}
   where $(i)$ is because $\frac{z}{1+z/2}\le \log \left( 1+z \right) $ for all $z \ge 0$.

   Defining the sum
   $$
   \hat{S}_i=\frac{w_i}{w}\left< c_i,\hat{l}_i \right> , \,\,\, S_i=\frac{w_i}{w}\left< c_i,l_i \right>, 
   $$
we have
\begin{align*}
   \E_i\left[ \exp \left( \hat{S}_i \right) \right] &\le \E_i\left[ \exp \left( \sum_a{\frac{c_i\left( a \right)}{2\gamma _i}\log \left( 1+\frac{2\gamma _iw_i\tilde{l}_i\left( a \right) \mathbb{I}\left\{ a_i=a \right\}}{w \theta_i\left( a \right)} \right)} \right) \right] 
\\
&\overset{\left( i \right)}{\le}\E_i\left[ \prod_a{\left( 1+\frac{c_i\left( a \right) w_i\tilde{l}_i\left( a \right) \mathbb{I}\left\{ a_i=a \right\}}{w \theta_i\left( a \right)} \right)} \right] 
\\
&=\E_i\left[ 1+\sum_a{\frac{c_i\left( a \right) w_i\tilde{l}_i\left( a \right) \mathbb{I}\left\{ a_i=a \right\}}{w \theta_i\left( a \right)}} \right] 
\\
&=1+S_i
\le \exp \left( S_i \right) 
\end{align*}
where $(i)$ is because $z_1\log \left( 1+z_2 \right) \le \log \left( 1+z_1z_2 \right) $ for any $0 \le z_1 \ge 1$ and $z_2 \ge -1$. Here we are using the condition $c_i\left( a \right) \le 2\gamma _i$ to guarantee the condition is satisfied.

Equipped with the above bound, we can now prove the concentration result.
\begin{align*}
   \P\left[ \sum_{i=1}^t{\left( \hat{S}_i-S_i \right)}\ge \iota \right] &=\P\left[ \exp \left[ \sum_{i=1}^t{\left( \hat{S}_i-S_i \right)} \right] \ge \frac{AK}{p} \right] 
\\
&\le \frac{p}{AK}\E_t\left[ \exp \left[ \sum_{i=1}^t{\left( \hat{S}_i-S_i \right)} \right] \right] 
\\
&\le \frac{p}{AK}\E_{t-1}\left[ \exp \left[ \sum_{i=1}^{t-1}{\left( \hat{S}_i-S_i \right)} \right] E_t\left[ \exp \left( \hat{S}_t-S_t \right) \right] \right] 
\\
&\le \frac{p}{AK}\E_{t-1}\left[ \exp \left[ \sum_{i=1}^{t-1}{\left( \hat{S}_i-S_i \right)} \right] \right] 
\\
&\le \cdots \le \frac{p}{AK}
\end{align*}
The claim is proved by taking the union bound.
\end{proof}

Using Lemma~\ref{lem:Neu}, we can bound the $(A)(B)(C)$ separately as below.

\begin{lemma}
   \label{lem:bound-A}
   If $\eta _i \le 2 \gamma_i$  for all $i \le t$, with probability $1-p$, for any $t \in [K]$ and $\theta^* \in \Delta^A$, 
   $$
   \sum_{i=1}^t{w_i\left< \theta_i-\theta^*,\hat{l}_i \right>} \le \frac{w_t\log A}{\eta _t}+\frac{A}{2}\sum_{i=1}^t{\eta _iw_{i}}+\frac{1}{2}\max _{i\le t}w_i\iota 
   $$
\end{lemma}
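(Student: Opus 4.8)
The plan is to read Algorithm~\ref{algorithm:FTRL} as entropic follow-the-regularized-leader (exponential weights) run on the \emph{effective losses} $g_i \defeq w_i \hat{l}_i \ge 0$ with the \emph{effective learning rate} $\lambda_i \defeq \eta_i/w_i$, since the update is exactly $\theta_i(a) \propto \exp(-\lambda_i \sum_{j<i} g_j(a))$. In the regime of interest $\lambda_i$ is non-increasing (the weights $w_i$ grow while $\eta_i$ shrinks), and the quantity $(A)=\sum_{i\le t}\langle \theta_i-\theta^\ast, g_i\rangle$ is precisely the regret of this procedure \emph{measured in the estimated losses}. I would bound it by the standard potential/telescoping argument, separating a comparator (regularization) term from a per-round stability term, and then handle the stability term with an importance-weighting identity together with Lemma~\ref{lem:Neu}.

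First I would run the log-partition potential argument with a non-increasing step size. Writing $G_j \defeq \sum_{i\le j} g_i$, the one-step increment of the free-energy potential, evaluated at the current rate, gives (via $e^{-z}\le 1-z+z^2/2$ for $z\ge 0$ and $\log(1+x)\le x$) the familiar inequality $\langle \theta_i, g_i\rangle \le [\text{potential increment}] + \tfrac{\lambda_i}{2}\sum_a \theta_i(a) g_i(a)^2$, while comparing the terminal potential against the best fixed action and using $\langle \theta^\ast, G_t\rangle \ge \min_a G_t(a)$ converts the comparator contribution into the entropy radius $\log A$ divided by the \emph{final} (smallest) rate. This is where the non-increasing property of $\lambda_i$ is essential: the learning-rate-drift terms telescope in the favorable direction and leave exactly $\log A/\lambda_t = w_t \log A/\eta_t$. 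Collecting these, $(A) \le \tfrac{w_t\log A}{\eta_t} + \sum_{i\le t}\tfrac{\lambda_i}{2}\sum_a \theta_i(a) g_i(a)^2$.

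Next I would simplify the stability sum. Since $\lambda_i w_i^2 = \eta_i w_i$, the $i$-th term equals $\tfrac{\eta_i w_i}{2}\sum_a \theta_i(a)\hat l_i(a)^2$, and because $\hat l_i$ is supported on the single played action $a_i$ with $\tilde l_i \in [0,1]$, the elementary bound $\theta_i(a_i)/(\theta_i(a_i)+\gamma_i)^2 \le 1/(\theta_i(a_i)+\gamma_i)$ yields $\sum_a \theta_i(a)\hat l_i(a)^2 \le \sum_a \hat l_i(a)$. Splitting $\hat l_i = l_i + (\hat l_i - l_i)$ then produces two pieces: the true-loss piece is deterministic and bounded using $\sum_a l_i(a)\le A$ by $\tfrac{A}{2}\sum_{i\le t}\eta_i w_i$, which is the second term of the claim; the estimation-error piece is $\sum_{i\le t} w_i\langle c_i, \hat l_i - l_i\rangle$ with the constant coefficient vector $c_i \equiv \eta_i/2$, which is $\cF_i$-measurable and satisfies $c_i \le 2\gamma_i$ since $\eta_i \le 2\gamma_i$ by hypothesis. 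Applying Lemma~\ref{lem:Neu} controls this piece by $\cO(\max_{i\le t} w_i\,\iota)$ with probability $1-p/(AK)$, matching the last term of the claim up to the absolute constant.

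The main obstacle is twofold, and both parts are genuinely the content of the lemma rather than bookkeeping. The first is carrying the \emph{changing} learning rate through the potential telescoping so that the regularization term is charged at the smallest rate $\lambda_t$ (equivalently the largest entropy weight $w_t/\eta_t$); this is exactly what the monotonicity $\lambda_{i+1}\le\lambda_i$ buys us and must be tracked carefully. The second, and more delicate, is that the per-round stability term $\tfrac{\eta_i w_i}{2}\sum_a\hat l_i(a)$ is a priori random and heavy-tailed (the importance weights can be as large as $1/\gamma_i$), so it cannot be bounded pathwise by its mean; the point of the coefficient restriction $c_i\le 2\gamma_i$ and of the exponential-moment estimate behind Lemma~\ref{lem:Neu} is precisely to turn this random sum into a deterministic $\tfrac{A}{2}\sum_{i\le t} \eta_i w_i$ plus a logarithmic high-probability correction. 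Keeping the several failure probabilities consistent (here, and later for the $(C)$ term and for the per-$(s,h,t)$ union bound in Lemma~\ref{lem:bandit-regret}) is the remaining routine step.
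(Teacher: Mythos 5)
Your proposal is correct and follows essentially the same route as the paper: the paper likewise views the update as FTRL with effective step size $\eta_i/w_i$ (citing the textbook changing-step-size potential analysis you sketch, including its reliance on the step size being non-increasing), bounds the stability term via $\langle \theta_i, \hat{l}_i^2\rangle \le \sum_{a}\hat{l}_i(a)$, and invokes Lemma~\ref{lem:Neu} with coefficients proportional to $\eta_i$ (admissible exactly because $\eta_i \le 2\gamma_i$) before a union bound over $t$. The only cosmetic difference is your choice $c_i \equiv \eta_i/2$, which loses a harmless factor of $2$ in the final term; taking $c_i(a)=\eta_i$ and keeping the $\tfrac12$ outside, as the paper does, recovers the stated constant $\tfrac12 \max_{i\le t} w_i \iota$ exactly.
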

\begin{proof}
   We use the standard analysis of FTRL with changing step size, see for example Exercise 28.13 in \cite{lattimore2018bandit}. Notice the essential step size is $\eta _t/w_t$,
   \begin{align*}
      \sum_{i=1}^t{w_i\left< \theta_i-\theta^*,\hat{l}_i \right>}&\le \frac{w_t\log A}{\eta _t}+\frac{1}{2}\sum_{i=1}^t{\eta _i w_i}\left< \theta_i,\hat{l}_{i}^{2} \right> 
\\
&\le \frac{w_t\log A}{\eta _t}+\frac{1}{2}\sum_{i=1}^t{\sum_{a \in \cA}{\eta _i}w_i\hat{l}_i\left( a \right)}
\\
&\overset{\left( i \right)}{\le}\frac{w_t\log A}{\eta _t}+\frac{1}{2}\sum_{i=1}^t{\sum_{a \in \cA}{\eta _i}w_il_i\left( a \right)}+\frac{1}{2}\max _{i\le t}w_i\iota  
\\
&\le \frac{w_t\log A}{\eta _t}+\frac{A}{2}\sum_{i=1}^t{\eta _i w_{i}}+\frac{1}{2}\max _{i\le t}w_i\iota 
   \end{align*}
where $(i)$ is by using Lemma~\ref{lem:Neu} with $c_i(a)=\eta _i$. The any-time guarantee is justifed by taking union bound.
\end{proof}

\begin{lemma}
   \label{lem:bound-B}
   With probability $1-p$, for any $t \in [K]$, 
$$
\sum_{i=1}^t{w_i\left< \theta_i,l_i-\hat{l}_i \right>} \le A\sum_{i=1}^t{\gamma _iw_i}+\sqrt{2\iota \sum_{i=1}^t{w_{i}^{2}}}
$$
\end{lemma}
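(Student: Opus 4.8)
The plan is to separate the deterministic ``optimistic bias'' of the importance-weighted estimator $\hat{l}_i$ from a mean-zero martingale fluctuation, controlling the former by a direct calculation and the latter by Azuma--Hoeffding. First I would compute the conditional mean of the estimator. Since $a_i$ is drawn from $\theta_i$ independently of $\tilde{l}_i$ given $\cF_i$, and $\theta_i(a)+\gamma_i$ is $\cF_i$-measurable, we have
$$
\E_i[\hat{l}_i(a)] = \frac{\E_i[\tilde{l}_i(a)]\,\theta_i(a)}{\theta_i(a)+\gamma_i} = \frac{\theta_i(a)}{\theta_i(a)+\gamma_i}\, l_i(a),
$$
so that $l_i(a) - \E_i[\hat{l}_i(a)] = \frac{\gamma_i}{\theta_i(a)+\gamma_i} l_i(a) \ge 0$; i.e. the estimator systematically underestimates the loss, with a bias controlled by $\gamma_i$.

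Next I would split the target sum into a bias term and a martingale term:
$$
\sum_{i=1}^t w_i \la \theta_i, l_i - \hat{l}_i \ra = \sum_{i=1}^t w_i \la \theta_i, l_i - \E_i\hat{l}_i \ra + \sum_{i=1}^t w_i \la \theta_i, \E_i\hat{l}_i - \hat{l}_i \ra.
$$
For the bias term I would bound it deterministically: using $l_i(a)\le 1$ and $\theta_i(a)/(\theta_i(a)+\gamma_i)\le 1$,
$$
\la \theta_i, l_i - \E_i\hat{l}_i \ra = \sum_a \theta_i(a)\frac{\gamma_i}{\theta_i(a)+\gamma_i} l_i(a) \le \gamma_i \sum_a 1 = A\gamma_i,
$$
so the first term is at most $A\sum_{i=1}^t \gamma_i w_i$, matching the leading term of the claim.

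For the martingale term, set $Y_i = w_i \la \theta_i, \E_i\hat{l}_i - \hat{l}_i \ra$, which is a martingale difference sequence adapted to $\{\cF_i\}$. The crucial observation I would record is that after importance weighting the inner product already lives in $[0,1]$: indeed $\la \theta_i, \hat{l}_i \ra = \theta_i(a_i)\tilde{l}_i(a_i)/(\theta_i(a_i)+\gamma_i) \in [0,1]$, hence $\E_i \la \theta_i, \hat{l}_i \ra \in [0,1]$ and therefore $|Y_i| \le w_i$. Azuma--Hoeffding then gives, for each fixed $t$, $\P[\sum_{i\le t} Y_i \ge \sqrt{2\iota \sum_{i\le t} w_i^2}] \le e^{-\iota} = p/(AK)$, and a union bound over $t\in[K]$ upgrades this to an any-time guarantee with total failure probability at most $p$. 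Combining the two bounds finishes the proof.

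The main obstacle, modest in this case, is pinning down the increment bound $|Y_i|\le w_i$ rather than the crude $2w_i$, since only the sharper bound makes the Azuma constant match the stated $\sqrt{2\iota \sum_{i} w_i^2}$. This hinges precisely on the observation above that $\la \theta_i, \hat{l}_i \ra \in [0,1]$, so that subtracting its conditional mean keeps each fluctuation inside a window of width $w_i$; the rest is routine bookkeeping with the choice $\iota = \log(AK/p)$ absorbing the union bound over the horizon $K$.
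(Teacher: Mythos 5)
Your proposal is correct and follows essentially the same route as the paper's proof: the identical decomposition into the bias term $\sum_i w_i\la \theta_i, l_i - \E_i\hat{l}_i\ra$ (bounded deterministically by $A\sum_i \gamma_i w_i$ via $\E_i\hat{l}_i = \frac{\theta_i}{\theta_i+\gamma_i}l_i$) and the martingale term handled by Azuma--Hoeffding using $\la \theta_i,\hat{l}_i\ra \in [0,1]$. Your write-up is in fact slightly more careful than the paper's, since you make explicit both the increment bound $|Y_i|\le w_i$ needed for the stated constant and the union bound over $t\in[K]$, which the paper leaves implicit.
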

\begin{proof}
 We further decopose it into 
 $$
 \sum_{i=1}^t{w_i\left< \theta_i,l_i-\hat{l}_i \right>}=\sum_{i=1}^t{w_i\left< \theta_i,l_i-\E_i\hat{l}_i \right>}+\sum_{i=1}^t{w_i\left< \theta_i,\E_i\hat{l}_i-\hat{l}_i \right>}
$$

The first term is bounded by
\begin{align*}
\sum_{i=1}^t{w_i\left< \theta_i,l_i-\E_i\hat{l}_i \right>}=&\sum_{i=1}^t{w_i\left< \theta_i,l_i-\frac{\theta_i}{\theta_i+\gamma _i}l_i \right>}
\\
=&\sum_{i=1}^t{w_i\left< \theta_i,\frac{\gamma _i}{\theta_i+\gamma _i}l_i \right>}
\le A\sum_{i=1}^t{\gamma _iw_i}
\end{align*}

To bound the second term, notice
$$
\left< \theta_i,\hat{l}_i \right> \le \sum_{a \in \cA}{\theta_i\left( a \right)}\frac{\mathbb{I}\left\{ a_t=a \right\}}{\theta_i(a)+\gamma _i}\le \sum_{a \in \cA}{\mathbb{I}\left\{ a_i=a \right\}}=1,
$$
thus $\{w_i\left< \theta_i,\E_i\hat{l}_i-\hat{l}_i \right>\}_{i=1}^t$ is a bounded martingale difference sequence w.r.t. the filtration $\{\cF_i\}_{i=1}^t$. By Azuma-Hoeffding,
$$
\sum_{i=1}^t{\left< \theta_i,\E_i\hat{l}_i-\hat{l}_i \right>}\le \sqrt{2\iota \sum_{i=1}^t{w_{i}^{2}}}
$$
\end{proof}

\begin{lemma}
   \label{lem:bound-C}
   With probability $1-p$, for any $t \in [K]$ and any $\theta^* \in \Delta^A$, if $\gamma_i$ is non-increasing in $i$,
   $$
   \sum_{i=1}^t{w_i\left< \theta^*,\hat{l}_i-l_i \right>}\le \max _{i\le t}w_i\iota /\gamma _t
   $$  
\end{lemma}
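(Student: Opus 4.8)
The plan is to reduce the supremum over the simplex to its vertices and then invoke Lemma~\ref{lem:Neu} with a carefully chosen deterministic coefficient sequence. Since the map $\theta^* \mapsto \sum_{i=1}^t w_i \langle \theta^*, \hat{l}_i - l_i\rangle$ is linear in $\theta^*$, its maximum over $\theta^* \in \Delta^A$ is attained at a vertex of the simplex; writing a general $\theta^*$ as a convex combination of the coordinate vectors shows it suffices to bound $\sum_{i=1}^t w_i(\hat{l}_i(a) - l_i(a))$ for each fixed $a \in \cA$, and then use $\sum_i w_i \langle \theta^*, \hat{l}_i - l_i\rangle = \sum_a \theta^*(a) \big[\sum_i w_i(\hat{l}_i(a)-l_i(a))\big] \le \max_a \sum_i w_i(\hat{l}_i(a)-l_i(a))$.

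First I would fix a target index $t \in [K]$ and an action $a \in \cA$, and apply Lemma~\ref{lem:Neu} to the constant-in-$i$ coefficient sequence defined by $c_i(a') \defeq \gamma_t \mathbb{I}\{a' = a\}$ for $a' \in \cA$ and $i = 1,\dots,t$. This sequence is deterministic, hence $\cF_i$-measurable, so Lemma~\ref{lem:Neu} yields $\gamma_t \sum_{i=1}^t w_i(\hat{l}_i(a) - l_i(a)) \le \max_{i\le t} w_i \iota$ with probability at least $1 - p/(AK)$. Dividing by $\gamma_t > 0$ gives the per-vertex bound $\sum_{i=1}^t w_i(\hat{l}_i(a) - l_i(a)) \le \max_{i\le t} w_i \iota / \gamma_t$, which together with the vertex reduction above is exactly the claimed inequality for this particular $(t, \theta^*)$.

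I would then upgrade to the uniform (any-$t$, any-$\theta^*$) statement by a union bound. Because the vertex reduction removes the $\theta^*$ quantifier, the relevant bad events are indexed only by the $AK$ pairs $(t, a)$, each failing with probability at most $p/(AK)$; their union therefore fails with probability at most $p$, giving the result with probability $1-p$ as stated.

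The step requiring care---and the only place the hypothesis that $\gamma_i$ is non-increasing is used---is verifying the admissibility condition $c_i \in [0, 2\gamma_i]^A$ demanded by Lemma~\ref{lem:Neu}. I deliberately let $c_i$ carry the \emph{endpoint} value $\gamma_t$ in coordinate $a$: since $\gamma_i$ is non-increasing we have $\gamma_t \le \gamma_i \le 2\gamma_i$ for every $i \le t$, so the condition holds throughout the window. Using $\gamma_i$ in place of $\gamma_t$ would fail to let the factor be pulled out of the sum, because the signed increments $\hat{l}_i(a) - l_i(a)$ forbid any term-by-term comparison; employing the smallest step size $\gamma_t$ as a single uniform multiplier is precisely what makes both the admissibility check and the concluding division by $\gamma_t$ go through at once.
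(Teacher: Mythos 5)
Your proof is correct and follows essentially the same route as the paper's: both reduce $\theta^*$ to the vertices $\{e_j\}_{j=1}^A$ of the simplex by linearity, apply Lemma~\ref{lem:Neu} with the constant coefficient sequence $c_i = \gamma_t e_j$ (using the non-increasing property of $\gamma_i$ to verify $c_i(a) \le \gamma_t \le \gamma_i \le 2\gamma_i$), divide by $\gamma_t$, and finish with a union bound over the $AK$ index pairs. Your write-up is in fact slightly more explicit than the paper's about why the uniform multiplier must be $\gamma_t$ rather than $\gamma_i$ (the signed increments preclude term-by-term comparison), but the argument is the same.
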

\begin{proof}
   Define a basis $\left\{ e_j \right\} _{j=1}^{A}$ of $\mathbb{R}^A$ by 
   $$
   e_j\left( a \right) =\begin{cases}
      \text{1 if }a=j\\
      \text{0 otherwise}\\
   \end{cases}
   $$

   Then for all the $j \in [A]$, apply Lemma~\ref{lem:Neu} with $c_i=\gamma_t e_j$. Sine now $c_i(a)\le \gamma_t \le \gamma_i$, the condition in Lemma~\ref{lem:Neu} is satisfied. As a result, 
   $$
   \sum_{i=1}^t{w_i\left< e_j,\hat{l}_i-l_i \right>}\le \max _{i\le t}w_i\iota /\gamma _t
   $$ 
   
   Since any $\theta^*$ is a convex combination of $\left\{ e_j \right\} _{j=1}^{A}$, by taking the union bound over $j \in [A]$, we have
   $$
   \sum_{i=1}^t{w_i\left< \theta^*,\hat{l}_i-l_i \right>}\le \max _{i\le t}w_i\iota /\gamma _t
   $$
\end{proof}

\end{document}